\newcommand{\poly}{\mathrm{poly}}
\numberwithin{equation}{section}
 \theoremstyle{plain}
\theoremstyle{plain}
\newtheorem{thm}{Theorem}
\newtheorem{lem}{Lemma}[section]
\newtheorem{claim}[lem]{Claim}
\theoremstyle{definition}
\newtheorem{defn}{Definition}[section]
\theoremstyle{plain}
\newtheorem{theorem}{Theorem}
\newtheorem{lemma}{Lemma}[section]
\newtheorem{corollary}{Corollary}
\newtheorem{proposition}[thm]{Proposition}
\theoremstyle{definition}
\newtheorem{definition}{Definition}[section]
\newtheorem{example}{Example}[section]
\newtheorem{remark}{Remark}[section]
\renewcommand{\Pr}{\mathbb{P}}
\newcommand{\Exp}{\mathbb{E}}
\newcommand{\KL}{\mathrm{KL}}
\newcommand{\Z}{\mathbb{Z}}
\newcommand{\R}{\mathbb{R}}
\newcommand{\I}{\mathbb{I}}
\DeclareMathOperator*{\argmax}{arg\,max}
\DeclareMathOperator*{\argmin}{arg\,min}
\renewcommand{\P}{\mathbf{P}}
\newcommand{\simplex}{\triangle}
\newcommand{\cOtil}{\widetilde{\cO}}
\newcommand{\yst}{y_\star}
\newcommand{\ragest}{\textsc{Rage}^\epsilon\xspace}
\newcommand{\safebai}{\textsc{Beside}\xspace}
\newcommand{\Delsafe}{\Delta_{\mathrm{safe}}}
\newcommand{\Delhatsafe}{\widehat{\Delta}_{\mathrm{safe}}}
\newcommand{\Delhat}{\widehat{\Delta}}
\newcommand{\thetahat}{\widehat{\theta}}
\newcommand{\cEsafe}{\cE_{\mathrm{safe}}}
\newcommand{\cEragest}{\cE_{\ragest}}
\newcommand{\muhat}{\widehat{\mu}}
\newcommand{\iotaeps}{\iota_\epsilon}
\newcommand{\epstil}{\widetilde{\epsilon}}
\newcommand{\yhat}{\widehat{y}}
\newcommand{\pos}{\mathfrak{p}}
\newcommand{\Deltil}{\widetilde{\Delta}}
\newcommand{\cYend}{\cY_{\mathrm{end}}}
\newcommand{\eend}{\mathrm{end}}
\newcommand{\Delend}{\Delta^{\eend}}
\newcommand{\Rtilgam}{\widetilde{R}^\alpha}
\newcommand{\hh}{\widehat{h}}
\newcommand{\tih}{\widetilde{h}}
\newcommand{\ub}{\tfrac{4m|\cZ|\ell^2}{\delta}}
\newcommand{\lamtil}{\widetilde{\lambda}}
\newcommand{\zhat}{\widehat{z}}
\newcommand{\ubb}{\tfrac{4 |\cZ|^2 \ell^2}{\delta}}
\newcommand{\XYsafe}{$\cX\cY_{\mathrm{safe}}$\xspace}
\newcommand{\XYdiff}{$\cX\cY_{\mathrm{diff}}$\xspace}
\newcommand{\wtil}{\widetilde{w}}
\newcommand{\alphatil}{\widetilde{\alpha}}
\newcommand{\cAtil}{\widetilde{\cA}}
\newcommand{\ztil}{\widetilde{z}}
\newcommand{\z}{\mathfrak{z}}
\newcommand{\cat}{\mathsf{cat}}
\newcommand{\rips}{\textsf{RIPS}\xspace}
\newcommand{\ca}{c_a}
\newcommand{\cb}{c_b}
\newcommand{\cc}{c_c}
\newcommand{\csafe}{c_{\Delta}}
\newcommand{\cf}{c_f}
\newcommand{\cd}{c_d}
\newcommand{\ce}{c_e}
\newcommand{\cg}{c_g}
\newcommand{\cabs}{c_0}
\newcommand{\mc}[1]{\mathcal{#1}}
\def\ddefloop#1{\ifx\ddefloop#1\else\ddef{#1}\expandafter\ddefloop\fi}
\def\ddef#1{\expandafter\def\csname bb#1\endcsname{\ensuremath{\mathbb{#1}}}}
\def\ddefloop#1{\ifx\ddefloop#1\else\ddef{#1}\expandafter\ddefloop\fi}
\def\ddef#1{\expandafter\def\csname fr#1\endcsname{\ensuremath{\mathfrak{#1}}}}
\def\ddefloop#1{\ifx\ddefloop#1\else\ddef{#1}\expandafter\ddefloop\fi}
\def\ddef#1{\expandafter\def\csname scr#1\endcsname{\ensuremath{\mathscr{#1}}}}
\def\ddefloop#1{\ifx\ddefloop#1\else\ddef{#1}\expandafter\ddefloop\fi}
\def\ddef#1{\expandafter\def\csname b#1\endcsname{\ensuremath{\mathbf{#1}}}}
\def\ddef#1{\expandafter\def\csname c#1\endcsname{\ensuremath{\mathcal{#1}}}}
\theoremstyle{plain}
\def\ddefloop#1{\ifx\ddefloop#1\else\ddef{#1}\expandafter\ddefloop\fi}
\def\ddef#1{\expandafter\def\csname mat#1\endcsname{\ensuremath{\mathbf{#1}}}}
\def \E{\mathbb E}
\def \P{\mathbb P}
\def \R{\mathbb R}
\def \st2{\texttt{\texttt{$($ST$)^2$}}}
\def \cE{{\cal E}}
\def \X{{\cal X}}
\def \Y{{\cal Y}}
\def \Y{{\cal Y}}
\def \L{{\cal L}}
\def \H{{\cal H}}
\def \Y{{\cal Y}}
\def \Z{{\cal Z}}
\def\L{\mathcal{L}}
\newcommand{\must}{\mu_*}
\newcommand{\thetast}{\theta_*}
\def \E{\mathbb E}
\def \P{\mathbb P}
\def \R{\mathbb R}
\def \st2{\texttt{\texttt{$($ST$)^2$}}}
\def \cE{{\cal E}}
\def \X{{\cal X}}
\def \Y{{\cal Y}}
\def \Y{{\cal Y}}
\def \L{{\cal L}}
\def \H{{\cal H}}
\def \Y{{\cal Y}}
\def \Z{{\cal Z}}
\def\L{\mathcal{L}}
\def\Ber{\textbf{Ber}}
\newcommand{\zst}{z_*}
\def \1{\mathbbm{1}}
\def \0{\textbf{0}}
\theoremstyle{plain}
\def\thm@space@setup{%
  \thm@preskip=0.5em
  \thm@postskip=\thm@preskip 
}
\def\ddefloop#1{\ifx\ddefloop#1\else\ddef{#1}\expandafter\ddefloop\fi}
\def\ddef#1{\expandafter\def\csname bb#1\endcsname{\ensuremath{\mathbb{#1}}}}
\def\ddefloop#1{\ifx\ddefloop#1\else\ddef{#1}\expandafter\ddefloop\fi}
\def\ddef#1{\expandafter\def\csname fr#1\endcsname{\ensuremath{\mathfrak{#1}}}}
\def\ddefloop#1{\ifx\ddefloop#1\else\ddef{#1}\expandafter\ddefloop\fi}
\def\ddef#1{\expandafter\def\csname scr#1\endcsname{\ensuremath{\mathscr{#1}}}}
\def\ddefloop#1{\ifx\ddefloop#1\else\ddef{#1}\expandafter\ddefloop\fi}
\def\ddef#1{\expandafter\def\csname b#1\endcsname{\ensuremath{\mathbf{#1}}}}
\def\ddef#1{\expandafter\def\csname c#1\endcsname{\ensuremath{\mathcal{#1}}}}
\def\ddef#1{\expandafter\def\csname h#1\endcsname{\ensuremath{\widehat{#1}}}}
\def\ddef#1{\expandafter\def\csname t#1\endcsname{\ensuremath{\widetilde{#1}}}}
\def\ddefloop#1{\ifx\ddefloop#1\else\ddef{#1}\expandafter\ddefloop\fi}
\def\ddef#1{\expandafter\def\csname mat#1\endcsname{\ensuremath{\mathbf{#1}}}}
\newlength\tindent
\title{Active Learning with Safety Constraints}
\author{Romain Camilleri,
  Andrew Wagenmaker,
  Jamie Morgenstern,
  Lalit Jain,
  Kevin Jamieson\\
  University of Washington, Seattle, WA\\
  \texttt{\{camilr,ajwagen,jamiemmt,jamieson\}@cs.washington.edu,lalitj@uw.edu}}
\date{\today}
\begin{document}

\maketitle

\begin{abstract}
Active learning methods have shown great promise in reducing the number of samples necessary for learning. As automated learning systems are adopted into real-time, real-world decision-making pipelines, it is increasingly important that such algorithms are designed with \emph{safety} in mind. In this work we investigate the complexity of learning the best \emph{safe} decision in interactive environments. We reduce this problem to a constrained linear bandits problem, where our goal is to find the best arm satisfying certain (unknown) safety constraints. We propose an adaptive experimental design-based algorithm, which we show efficiently trades off between the difficulty of showing an arm is unsafe vs suboptimal. To our knowledge, our results are the first on best-arm identification in linear bandits with safety constraints. In practice, we demonstrate that this approach performs well on synthetic and real world datasets.

\end{abstract}

\section{Introduction}

In many problems in online decision-making, the goal of the learner is to take measurements in such a way as to learn a near-optimal policy. Oftentimes, though the space of policies may be large, the set of feasible, or safe policies could be much smaller, effectively constraining the search space of the learner. Furthermore, these constraints may themselves depend on unknown problem parameters.

For example, consider the problem of bidding sequentially in a series of auctions where
the bidder bids a price $s_t$, the value of winning an item $t$ is denoted $v_t$, and the utility of winning that item and paying price $p_t$ is $v_t - p_t$. The goal of the bidder is to choose an optimal strategy amongst bidding strategies $s \in S, s: \mathbb{R} \to \mathbb{R}$. When a bidder is deciding how to choose these strategies, they often face constraints: they may have a budget $B$ they must abide to; they may wish to have those auctions they win be well-distributed across time (e.g. in the case of advertising campaigns); they may want to ensure the set of items they win satisfy some other property (e.g. for advertisements, they might want to ensure they are not over-targeting any demographic group).

As another example, inventory management systems may face similar issues of deciding
amongst strategies, where there is some objective function (such as
revenue) and a variety of constraints at play in this choice (e.g.
capacity of a set of warehouses, employee scheduling constraints, or
limits on the duration of delivery lag). They also operate in markets
with changing demand and other uncertainties, leading to uncertainty
about which strategies are feasible or safe (satisfy constraints) and
uncertainty about the revenue they generate.

Both of these scenarios motivate understanding the sample
complexity of selecting an action or strategy which
approximately maximizes an objective while also satisfying some
constraints, where samples are needed to both learn the objective
value of actions and whether or not they satisfy said constraints. In
this work, we study the \emph{active} sample complexity of this
task---if the learner can choose which examples to observe and have
labeled, how many fewer samples might they need compared to the number
needed in a passive setting? We pose this as a best-arm identification problem in the setting of linear bandits with safety constraints, where the goal is to estimate the best arm, subject to it meeting certain (initially unknown) safety constraints. We propose an experiment design-based algorithm which efficiently learns the best safe decision, and show the efficacy of this approach in practice through several experimental examples. To the best of our knowledge, ours is the first approach to handle best-arm identification in linear bandits with safety constraints.



\newcommand{\cZeps}{\cZ_{\epsilon}}
\newcommand{\musti}{\mu_{*,i}}

\subsection{Linear Bandits with Safety Constraints}

Let $\delta\in (0,1)$ be a confidence parameter, $\cX, \cZ \subseteq \R^d$ be finite known sets of vectors, and assume there exists $\theta_{\ast}\in \mathbb{R}^d$, $\mu_{\ast}\in \R^{m\times d}$ unknown to the learner. For simplicity, we assume that $\| \thetast \|_2 \le 1$, and $\| \musti \|_2 \le 1, i\in [m]$ and $\| x \|_2 \le 1$, $\| z \|_2 \le 1$, $\forall x \in \cX,z \in \cZ$. The learner plays according to the following protocol: at each time step $t$ the learner chooses some action $x_t\in \cX$, observes $(r_t, \{s_{t,i}\}_{i=1}^m)$ where $r_t = \thetast^\top x_t + w_t^\theta$ and $s_{t,i} = \mu_{*,i}^\top x_t + w_{t,i}^{\mu}$ for all $i\in [m]$, where $w_t^\theta,w_{t,i}^{\mu}$ are i.i.d. mean zero 1-subGaussian noise. The choice of action $x_t$ is measurable with respect to the history $\mc{F}_t = \{(x_j, r_j, \{s_{j,i}\}_{i=1}^m)\}_{j=1}^{t-1}$. The learner stops at a stopping time $\tau_{\delta}$ which is measurable with respect to the filtration generated by $\mc{F}_{t\leq \tau}$, and returns $\zhat_{\tau} \in \cZ$. In general, when referring to any expectation $\E$ or probability $\P$, the underlying measure will be with respect to the actions, observed rewards, and internal randomness of the algorithm. 

 We are interested in the \emph{safe transductive best-arm identification problem} (\textbf{STBAI}), where the goal of the learner is to identify 
\begin{align*}
\zst := \max_{z \in \cZ} z^\top \thetast \quad \text{s.t.} \quad z^\top \mu_{*,i} \le \gamma, \forall i \in [m]
\end{align*}
for some (known) threshold $\gamma$. In words, our goal is to identify the best \emph{safe} arm in $\cZ$, $\zst$, where we say an arm $z$ is safe if it satisfies every linear constraint: $z^\top \mu_{*,i} \le \gamma, \forall i \in [m]$. We are interested in obtaining learners that take the fewest number of samples possible to accomplish this. 
In practice, we will consider a slightly easier objective. Fix some tolerance $\epsilon > 0$ and let
\begin{align*}
\cZeps := \{ z \in \cZ \ : \ z^\top \thetast \ge \zst^\top \thetast - \epsilon, \ z^\top \musti \le \gamma + \epsilon, \forall i \in [m] \}.
\end{align*}
Then our goal is to obtain an $(\epsilon,\delta)$-PAC learner defined as follows:

\begin{definition}[$(\epsilon,\delta)$-PAC Learner]
A learner is $(\epsilon,\delta)$-PAC if for any instance it returns $\zhat_\tau$ such that $\P[\zhat_\tau \in \cZeps] \ge 1 - \delta$. 
\end{definition}




We define the \emph{optimality gap} for any $z \in \cZ$ as $\Delta(z) := \thetast^\top ( \zst - z)$, and the \emph{safety gap} for constraint $i$ as $\Delsafe^i(z) := \gamma -  \musti^\top z$. Note that either $\Delta(z)$ or $\Delsafe^i(z)$ can be negative. If $\Delta(z) < 0$, it follows that $z$ has larger \emph{value}---$z^\top \thetast$---than the best safe arm $\zst$, which implies it must be unsafe. If $\Delsafe^i(z) < 0$ for some $i$, then arm $z$ is unsafe. We also define the $\epsilon$-\emph{safe optimality gap} as:
\begin{align}\label{eq:Deleps}
\Delta^\epsilon(z) = \max_{z' \in \cZ} (z' - z)^\top \thetast \quad \text{s.t.} \quad \min_{i \in [m]} \Delsafe^i(z) \ge \epsilon.
\end{align}
$\Delta^\epsilon(z)$ is then the gap in value between arm $z$ and the best arm with minimum safety gap at least $\epsilon$. 

\paragraph{Mathematical Notation.} Let $\| x \|_{A}^2 = x^\top A x$ and $\pos(x) := \max \{ x, 0 \}$. $\cOtil(\cdot)$ hides factors that are logarithmic in the arguments. $\lesssim$ denotes inequality up to constants. We denote the simplex as $\triangle_{\cX} := \{\lambda\in \R_{\geq 0}^{|\cX|}:\sum_{x\in \cX}\lambda_x =1\}$.



\section{Safe Best-Arm Identification in Linear Bandits}
\newcommand{\conf}{c_{\ell}}

\subsection{Algorithm Definition}\label{sec:alg}

\begin{algorithm}[h]
\caption{\textbf{Be}st \textbf{S}afe Arm \textbf{Ide}ntification (\safebai)}\label{alg:beside}
\begin{algorithmic}[1]
\State \textbf{input:} tolerance $\epsilon$, confidence $\delta$
\State $\iotaeps \leftarrow \lceil \log(\frac{20}{\epsilon}) \rceil$, $\Delhatsafe^{i,0}(z) \leftarrow 0, \Delhat^0(z) \leftarrow 0$ for all $z \in \cZ$
\For{$\ell = 1,2,\ldots,\iotaeps$}
	\State $\epsilon_\ell \leftarrow  20 \cdot 2^{-\ell}$
	\Statex {\color{blue} \texttt{// Phase 1: Solve design to reduce uncertainty in safety constraints}}
	\State Define 
	\begin{align*}
	    \conf(z) &= \min_j |\Delhatsafe^{j,\ell-1}(z)| + \max_{j} \pos(-\Delhatsafe^{j,\ell-1}(z)) + \pos(\Delhat^{\ell-1}(z))
	\end{align*}
	\State Let $\tau_\ell$ be the minimal value of $\tau\in \mathbb{R}_+$ which is greater than $ 4 \log \ub$ such that the objective to the following is no greater than $ \epsilon_\ell/100$, and $\lambda_\ell$ the corresponding optimal distribution  \label{line:safety_gap_est}
	\begin{align*}
	\inf_{\lambda \in \simplex_\cX} \max_{z \in \cZ}  - \frac{1}{100} \left ( \conf(z) + \epsilon_\ell \right ) + \sqrt{\tau^{-1} \cdot \| z \|_{A(\lambda)^{-1}}^2 \cdot \log(\ub)}
	\end{align*}
	\State Sample $x_t \sim \lambda_\ell$, collect $\tau_\ell$ observations $\{ (x_t,r_t,s_{t,1},\ldots,s_{t,m}) \}_{t=1}^{\tau_\ell}$ 
	\Statex {\color{blue} \texttt{// Phase 2: Estimate safety constraints}}

	\State $\{ \muhat^{i,\ell} \}_{i=1}^m \leftarrow \rips(\{ (x_t, s_{t,i} ) \}_{t=1}^{\tau_\ell},\cZ,\frac{\delta}{2m\ell^2})$ \hfill 
	\State $\Delhatsafe^{i,\ell}(z) \leftarrow  \gamma - z^\top \muhat^{i,\ell} + \| z \|_{A(\lambda_\ell)^{-1}}\sqrt{\tau_\ell^{-1}   \log(\ub)}$ 
 	\Statex {\color{blue} \texttt{// Phase 3: Refine estimates of optimality gaps}}
	\State $\{ \Delhat^{\ell}(z) \}_{z \in \cZ} \leftarrow \ragest \Big (\cZ,\cY_\ell,\epsilon_\ell,\tfrac{\delta}{4 \ell^2}, \{ \Delhatsafe(z) \leftarrow \max_j \pos(-\Delhatsafe^{j,\ell}(z)) \}_{z \in \cZ} \Big )$
\EndFor

\Statex {\color{blue}  \texttt{// Perform final round of exploration to ensure we find $\epsilon$-good arm}}
\State $\cYend \leftarrow \{ z \in \cZ : c_{\ell}(z) \lesssim \Delhatsafe^{i,\ell}(z) + \epsilon \}$
\State $\{ \Delhat^\eend(z) \}_{z \in \cYend} \leftarrow \ragest(\cYend,\cYend,\epsilon,\delta,\{ \Delhatsafe(z) \leftarrow \max_j \pos(-\Delhatsafe^{j,\ell}(z)) \}_{z \in \cZ})$ 
\State \textbf{return} $\zhat = \argmin_{z \in \cYend} \Delhat^{\eend}(z)$
\end{algorithmic}
\label{alg:safe_bai_comp}
\end{algorithm}

The main challenge in algorithm design for the safe best-arm identification problem is ensuring that we are efficiently balancing our exploration between refining our estimates of both the safety gaps, as well as the optimality gaps. 
Our approach is given in Algorithm~\ref{alg:safe_bai_comp}, \safebai.

\safebai relies on a round-based adaptive experimental design approach. In each round \safebai consists of three phases. 
In the first phase, it solves an experimental design over $\lambda_{\ell}\in \triangle_{\cX}$, with the goal of refining our estimates of the safety gaps. 
It then takes $\tau_{\ell}$ samples from $\lambda_{\ell}$. In the second phase these samples are used to estimate the safety constraints, $\muhat^{i,\ell}$, and the safety gaps of each arm, $\Delhatsafe^{i,\ell}(z)$. Finally, in Phase 3, an additional experimental design is solved which now aims to refine our estimates of the optimality gaps, and the estimates of the optimality gaps $\Delhat^{\ell}(z)$ for each $z\in \mc{Z}$ are then computed. We encapsulate Phase 3 in a subroutine, $\ragest$, which we outline in the following.
We now carefully describe each phase---we begin with Phase 2 to explain how our estimator works.

\paragraph{Phase 2:} In Phase 2 the algorithm would like to use the $\tau_{\ell}$ samples drawn from the design $\lambda_{\ell}$ to estimate the constraints for each $z \in \cZ$: $z^\top \mu_{*,i}$ for each $i \in [m]$. Past works using adaptive experimental design in the linear bandits literature have utilized the least-squares estimator along with complicated rounding schemes \cite{fiez2019sequential} which may require an additional $\text{poly}(d)$ samples each round (this $\poly(d)$ factor could be prohibitively large---for example, in active classification problems, $d$ is the total number of data points).  We instead utilize the \textsf{RIPS} estimator of \cite{camilleri2021highdimensional} which gives us a guarantee of the form: with probability greater than $1-\delta$, for all $z\in\cZ$,
\begin{align}\label{eq:rips_alg}
    |z^\top (\muhat^{i,\ell} - \musti)| \lesssim \| z \|_{A(\lambda_\ell)^{-1}} \cdot  \sqrt{\tau_{\ell}^{-1}  \log(\ub) }. 
\end{align}
We describe the \rips estimator in more detail in \Cref{sec:rips}.

\paragraph{Phase 1:}  By our definition of the experimental design on Line \ref{line:safety_gap_est}, our safety gap estimation error bound in \eqref{eq:rips_alg} satisfies, for each $z\in \cZ$:
\begin{align}\label{eq:ci}
| z^\top (\muhat^{i,\ell} - \musti)| \lesssim \| z \|_{A(\lambda_\ell)^{-1}} \cdot  \sqrt{\tau_{\ell}^{-1}  \log(\ub) } \lesssim c_{\ell}(z) + \epsilon_\ell .
\end{align}
Note that our design chooses an allocation that minimizes the variance in our estimate of each safety constraint (up to some tolerance), which scales as $\| z \|_{A(\lambda)^{-1}}^2$. This can be thought of as a form of \emph{$\cX\cY$-design}---a design of the form $\inf_{\lambda \in \simplex_\cX} \max_{y \in \cY} \| y \|_{A(\lambda)^{-1}}^2$---where here $\cY \leftarrow \cZ$ is chosen to reduce our uncertainty in estimating the safety value for each $z \in \cZ$. We refer to such a design objective henceforth as \XYsafe.
Assume that at round $\ell-1$, we can guarantee
\begin{align}
c_\ell(z) &= \min_j  |\Delhatsafe^{j,\ell-1}(z)|  + \max_{j} \pos(-\Delhatsafe^{j,\ell-1}(z)) + \pos(\Delhat^{\ell-1}(z)) + \epsilon_\ell \nonumber \\
& \lesssim  \min_j |\Delsafe^{j}(z)|  + \max_{j} \pos(-\Delsafe^{j}(z)) + \pos(\Delta^{\epsilon_{\ell-1}}(z)) + \epsilon_\ell \label{eq:safety_exp_design_result2}.
\end{align}
Then combining the above inequalities, we see that the experiment design on Line \ref{line:safety_gap_est} aims to minimize the uncertainty in our estimate of $z^\top \musti$ up to a tolerance that scales as the maximum of the four terms in \eqref{eq:safety_exp_design_result2}. It follows that if any of these terms is large, we will only allocate a small number of samples to refining our estimate of arm $z$. Each one of these terms can be intuitively motivated by thinking through what is needed to prove that an arm $z\neq \zst$. 
\begin{itemize}[leftmargin=*]
\item \textbf{$z$ has small safety gap $\min_j |\Delsafe^{j}(z)|$:} if this term is large, it implies that minimum safety gap for $z$ is large. To show an arm is safe or unsafe, it suffices to learn each safety gap up to a tolerance a constant factor from its value---regularizing by this term ensures we do just that. 
\item \textbf{$z$ fails some safety constraint $\max_{j} \pos(-\Delsafe^{j}(z))$:} if this term is large, it implies that arm $z$ is very unsafe for some constraint. In this case, we can easily determine $z$ is unsafe, and therefore do not need to reduce our uncertainty in the safety gap any more.
\item \textbf{$z$ is sub-optimal $\pos(\Delta^{\epsilon_{\ell-1}}(z))$:} if this term is large, it implies that $z$ is very suboptimal compared to some safe arm with safety gap at least $\epsilon_{\ell-1}$. In this case, we do not need to estimate $z$'s safety gap, as we will have already eliminated it. 
\end{itemize}
It remains to ensure that~\eqref{eq:safety_exp_design_result2} holds.
As we show in \Cref{sec:safe_bai_proofs} through a careful inductive argument, combining \eqref{eq:ci} with our guarantee on the estimates of the optimality gaps obtained in Phase 3, $\Delhat^\ell(z)$, is sufficient to guarantee \eqref{eq:safety_exp_design_result2} holds.
In particular, if any gap is greater than $\epsilon_{\ell}$ it is estimated up to a constant factor, and otherwise it is estimated up to $\cO(\epsilon_{\ell})$. This ensures that our gaps are estimated at the correct rate while guaranteeing we do not collect too many samples in each round.


\paragraph{Phase 3:} In this phase we estimate the suboptimality gaps using $\ragest$.
$\ragest$ is inspired by the \textsc{Rage} algorithm of \cite{fiez2019sequential} for best-arm identification. In the interest of space, we defer the full definition of $\ragest$ to \Cref{sec:ragest} but provide some intuition here. After Phase 2, by \eqref{eq:ci} the set of arms $\mc{Y}_{\ell}:=\{z\in \cZ: c_{s}(z)\lesssim \Delhat^{i,s}(z), \forall i \in [m]\}$ for $s\leq \ell$ are precisely the ones that we can certify are safe (note that we do not need to ever explicitly construct such a set---we can instead maintain an implicit definition through the constraints). $\ragest$ uses an adaptive experimental design procedure to sample in such a way as to optimally estimate the gaps $(z - \widehat{y})^{\top}\theta_{\ast}, \forall z\in \cZ$ and some $\widehat{y} \in \cY_{\ell}$ up to some (sufficient) tolerance. In particular, it also solves an $\cX\cY$-design, but now on the set $\cY \leftarrow \{ z - \widehat{y} : z \in \cZ\}$. Thus, rather than minimizing $\| z \|_{A(\lambda)^{-1}}^2$, we minimize $\| z - \widehat{y} \|_{A(\lambda)^{-1}}^2$. This design reduces uncertainty on the \emph{differences} between arms, which allows us to refine our estimates of their optimality gaps. Henceforth we refer to such a design as \XYdiff.
We describe the importance of the choice of design in more detail in \Cref{sec:role_of_exp_design}.
Ultimately, if an arm $z$ has value within a factor of $\epsilon_{\ell}$ of the best safe arm in $\cY_\ell$, and if we have not yet shown arm $z$ is unsafe, then we will estimate its optimality gap up to a constant factor of $\epsilon_\ell$.
If we were maintaining arm sets explicitly (similar to the original \textsc{RAGE} algorithm of \cite{fiez2019sequential}) we would eliminate arms at this point.

\begin{remark}[Computational Complexity]
The main computational challenge in \safebai and $\ragest$ is the calculation of the experimental designs (i.e. Line \ref{line:safety_gap_est} and the corresponding design in $\ragest$). 
In general, the presence of the square root implies that the resulting optimization problem may not be convex in $\lambda$. To handle this issue we note that $2\sqrt{x y} = \min_{\alpha > 0 } \alpha x + \frac{y}{\alpha}$---thus we can replace the existing design with 
$\inf_{\lambda \in \simplex_\cX} \max_{z \in \cZ}\min_{\alpha > 0}  - \frac{1}{100} \left ( \conf(z) + \epsilon_\ell \right ) +  \alpha \| z \|_{A(\lambda)^{-1}}^2 + \log(\ub)/(\alpha\tau)$. By appropriately discretizing the space we search over for $\tau$ and $\alpha$ we can then apply the Frank-Wolfe algorithm to minimize over $\lambda$. 
While computationally efficient in theory, this procedure is quite complicated and impractical for large problems. 
In the experiments section we provide a practical heuristic that is motivated by the above algorithm and is  computationally efficient for larger problems.
\end{remark}


\subsection{Main Result}
\safebai achieves the following complexity.

\begin{theorem}\label{thm:main_complexity_nice}
\safebai is $(\epsilon, \delta)$-PAC. In other words, with probability at least $1-\delta$, \safebai returns an arm $\zhat \in \cZ$ such that 
\begin{align*}
\zhat^\top \thetast \ge \zst^\top \thetast -  \epsilon, \quad \min_{i \in [m]} \Delsafe^i(\zhat) \ge -  \epsilon
\end{align*}
and terminates after collecting at most
\begin{align*}
& C \cdot \sup_{\epstil \ge \epsilon} \inf_{\lambda \in \simplex_\cX} \max_{z \in \cZ} \frac{ \| z \|_{A(\lambda)^{-1}}^2 \cdot \log(\frac{m | \cZ |}{\delta})}{\big ( \min_j |\Delsafe^j(z)| + \max_j \pos(-\Delsafe^j(z))  + \pos(\Delta^{\epstil}(z)) + \epstil \big )^2} \tag{safety} \label{eq:complexity_safety} \\
& \qquad + C \cdot \sup_{\epstil \ge \epsilon} \inf_{\lambda \in \simplex_\cX} \max_{z \in \cZ} \frac{\| z - \zst \|_{A(\lambda)^{-1}}^2 \cdot \log(\tfrac{ |\cZ|}{\delta})}{\big ( \max_j \pos( - \Delsafe^{j}(z)) + \pos ( \Delta^{\epstil}(z)) + \epstil \big ) ^2} + C_0 \tag{optimality} \label{eq:complexity_optimality}
\end{align*}
samples for some $C = \poly\log(\frac{1}{\epsilon})$ and $C_0 = \poly\log(\frac{1}{\epsilon}, | \cZ|) \cdot \log \frac{1}{\delta}$.
\end{theorem}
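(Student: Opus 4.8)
The plan is: (i) a clean-event argument; (ii) an induction pinning down the algorithm's internal confidence quantities; (iii) correctness on that event; (iv) a round-by-round sample count. I would first define the event $\cE$ on which, for every round $\ell \le \iotaeps$ and every $i \in [m]$, the \rips bound \eqref{eq:rips_alg} holds at its confidence level $\delta/(2m\ell^2)$, and on which every invocation of the Phase-3 subroutine $\ragest$ (the $\iotaeps$ in-loop calls and the terminal call) returns optimality-gap estimates meeting its stated accuracy guarantee. A union bound over a polynomially decaying split of the confidence budget (using $\sum_\ell \ell^{-2} < \infty$ and rescaling absolute constants) gives $\P[\cE] \ge 1 - \delta$; all subsequent claims are made on $\cE$.

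The core of the proof, and the step I expect to be the main obstacle, is an induction on $\ell$ establishing \eqref{eq:safety_exp_design_result2} together with its matching lower bound: that the empirical proxy $c_\ell(z)$ regularizing the Phase-1 design equals, up to universal constants,
\[ \Phi_\ell(z) := \min_j |\Delsafe^j(z)| + \max_j \pos(-\Delsafe^j(z)) + \pos(\Delta^{\epsilon_{\ell-1}}(z)) + \epsilon_\ell . \]
The base case is immediate since $\Delhatsafe^{\cdot,0} = \Delhat^0 = 0$ and $\epsilon_1$ dominates every $O(1)$ quantity (by $\|z\|_2, \|\musti\|_2, \|\thetast\|_2 \le 1$ and $\gamma = O(1)$). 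For the inductive step, control of $c_\ell$ makes the defining property of $\tau_\ell$ on Line~\ref{line:safety_gap_est} force $\|z\|_{A(\lambda_\ell)^{-1}} \sqrt{\tau_\ell^{-1} \log(\ub)} \le \tfrac{1}{100}(c_\ell(z) + 2\epsilon_\ell) \lesssim \tfrac{1}{100}\Phi_\ell(z)$, so by \eqref{eq:rips_alg} the Phase-2 estimate $\Delhatsafe^{i,\ell}(z)$ deviates from $\Delsafe^i(z)$ by only a small \emph{fraction} of $\Phi_\ell(z)$, and likewise, feeding the inductive hypothesis into the $\ragest$ guarantee, $\Delhat^\ell(z)$ is a constant-factor estimate of $\Delta^{\epsilon_\ell}(z)$ when that exceeds $\epsilon_\ell$ and an $O(\epsilon_\ell)$ estimate otherwise. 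Substituting these into the definition of $c_{\ell+1}$, using that the deviation is a small fraction of the then-dominant term of $\Phi_\ell$, and invoking $\epsilon_{\ell+1} = \epsilon_\ell/2$ with monotonicity of $\epstil \mapsto \Delta^{\epstil}(z)$, closes the induction with $c_{\ell+1}(z) \asymp \Phi_{\ell+1}(z)$. The delicate point is exactly that the per-round deviation is a small fraction of the current proxy rather than a fixed constant multiple, so the slack does not compound over the $\iotaeps = \lceil \log(20/\epsilon)\rceil$ rounds; making this quantitative — in particular tracking which of the three terms is "responsible" when the proxy is large — is the crux, and is what the explicit $1/100$'s, the factor $20$ in $\epsilon_\ell$, and the $4\log(\ub)$ floor on $\tau$ are engineered for.

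Correctness then follows on $\cE$: \eqref{eq:rips_alg} and the definition of $\cYend$ force every $z \in \cYend$ to satisfy $\min_i \Delsafe^i(z) \ge -\epsilon$, while $\zst$ — exactly safe, with safety gaps estimated accurately by the end of the loop — lies in $\cYend$. Applying the $\ragest$ guarantee to the terminal call (target set $\cYend$, tolerance $\epsilon$, confidence $\delta$): since $\zst \in \cYend$, the reference arm $\yhat$ it produces has $\thetast^\top \yhat \ge \zst^\top \thetast - O(\epsilon)$, hence $\zhat = \argmin_{z \in \cYend} \Delhat^\eend(z)$ has $\Delta(\zhat) \le O(\epsilon)$; running the algorithm at tolerance a constant fraction of $\epsilon$ upgrades these $O(\epsilon)$'s to $\epsilon$ and yields $\zhat \in \cZeps$.

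For the sample count: $\tau_\ell$ is by construction the least $\tau \ge 4\log(\ub)$ making the Line~\ref{line:safety_gap_est} objective at most $\epsilon_\ell/100$, so plugging the minimizer of $\lambda \mapsto \max_z \|z\|_{A(\lambda)^{-1}}^2/(c_\ell(z) + \epsilon_\ell)^2$ into the objective shows $\tau_\ell \lesssim \inf_{\lambda \in \simplex_\cX} \max_{z \in \cZ} \|z\|_{A(\lambda)^{-1}}^2 \log(\ub)/(c_\ell(z) + \epsilon_\ell)^2 + 4\log(\ub)$. Since $(c_\ell(z) + \epsilon_\ell)^2 \asymp \Phi_\ell(z)^2$ and $\epsilon_{\ell-1} = 2\epsilon_\ell \ge \epsilon$, this denominator matches the one in \eqref{eq:complexity_safety} at $\epstil = \epsilon_{\ell-1}$; replacing $\log(\ub)$ by $\BigOhTil{\log(m|\cZ|/\delta)}$ and summing the $\iotaeps$ rounds bounds the total Phase-1 cost by $\poly\log(1/\epsilon)$ times the \eqref{eq:complexity_safety} term. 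The Phase-3 cost is governed by the $\ragest$ sample complexity, an \XYdiff-design $\inf_\lambda \max_z \|z - \yhat\|_{A(\lambda)^{-1}}^2 \log(\cdot)/(\max_j \pos(-\Delsafe^j(z)) + \pos(\Delta^{\epstil}(z)) + \epstil)^2$ over the certified reference arm; bounding $\|z - \yhat\|_{A(\lambda)^{-1}}^2 \le 2\|z - \zst\|_{A(\lambda)^{-1}}^2 + 2\|\zst - \yhat\|_{A(\lambda)^{-1}}^2 \le 4 \max_{z'} \|z' - \zst\|_{A(\lambda)^{-1}}^2$ (valid since $\yhat \in \cZ$) and matching denominators via the induction yields, after summing, $\poly\log(1/\epsilon)$ times the \eqref{eq:complexity_optimality} term. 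Finally, the terminal $\ragest$ call — the only one run at confidence $\delta$ rather than $\delta/\poly(\ell)$ — contributes the additive $C_0 = \poly\log(1/\epsilon, |\cZ|)\log(1/\delta)$, which also absorbs the $O(\iotaeps)$ burn-in terms $\sum_\ell 4\log(\ub)$ and the lower-order logarithmic overheads. Combining (i)--(iv) gives the theorem.
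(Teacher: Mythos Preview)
Your proposal is correct and follows essentially the same route as the paper's proof: a high-probability good event combining the \rips bounds with the $\ragest$ guarantees, an induction controlling the algorithm's internal estimates against the true gap quantities, correctness via the structure of $\cYend$, and a round-by-round sample-count computation using the defining property of $\tau_\ell$ and the $\ragest$ complexity bound.

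One refinement worth flagging: the paper's induction is carried out relative to the intermediate quantity $\Deltil^\ell(z) := \thetast^\top z - \min_{y \in \cY_\ell} \thetast^\top y$, the gap to the best arm in the algorithm's growing set $\cY_\ell$ of arms it has certified safe, rather than directly in terms of $\Delta^{\epsilon_{\ell-1}}(z)$. A separate lemma then establishes the one-sided comparison $\Deltil^\ell(z) \ge \Delta^{\epsilon_\ell}(z)$ by arguing that any $z$ with $\min_i \Delsafe^i(z) \ge \epsilon_\ell$ is either already in $\cY_\ell$ or else is dominated (in value) by something in $\cY_\ell$. This one-sided bound is exactly what is needed to push the denominator to the form appearing in the theorem for the sample-complexity direction, while the correctness argument only ever needs the $\Deltil$-based bounds. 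Your statement that $c_\ell(z) \asymp \Phi_\ell(z)$ with $\Phi_\ell$ phrased via $\Delta^{\epsilon_{\ell-1}}$ is thus slightly stronger than what is proved or required; tracking $\Deltil^\ell$ as the inductive object and invoking $\Deltil^\ell \ge \Delta^{\epsilon_\ell}$ only at the end is the cleaner bookkeeping.
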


The complexity bound given in \Cref{thm:main_complexity_nice} may, at first glance, appear rather opaque, yet it in fact yields a very intuitive interpretation. The first term in the complexity, the safety term, is the complexity needed to show each arm is safe or unsafe, \emph{if they have not otherwise been eliminated}. As described in the previous section, if $\pos(\Delta^{\epstil}(z))$ is large, this implies we have found an arm better than $z$, so learning its safety value is irrelevant.

The second term in the complexity, the optimality term, corresponds to the difficulty of showing an arm is worse \emph{than the best arm we can guarantee is safe}. Note that we can only guarantee an arm is suboptimal if we can find a safe arm with higher value.
Recall the definition of $\Delta^{\epstil}(z)$ given in \eqref{eq:Deleps}. Intuitively, $\Delta^{\epstil}(z)$ denotes the gap in value between arm $z$ and the best arm with safety gap at least $\epstil$. As we make $\epstil$ smaller, we can show additional arms are safe, which increases $\Delta^{\epstil}(z)$. While this makes it easier to show $z$ is suboptimal, it comes at a cost---the extra samples necessary to decrease our safety tolerance, given by the first term in the complexity. \safebai trades off between optimizing for each of these terms---gradually decreasing its tolerance on both the safety and optimality terms to more easily eliminate suboptimal arms, while not allocating too many samples to guarantee safety.  

To help illustrate this complexity, we consider a simple example with orthogonal arms, i.e. a multi-armed bandit example. 

\begin{example}[\safebai on Multi-Armed Bandits]\label{ex:hard_BAI_instance}
In the multi-armed bandit setting, we have $\cX = \cZ = \{ e_1,\ldots,e_d \}$. Let $m = 1$, $d = 3$, and consider the settings of $\thetast$ and $\must$ given in \Cref{fig:mab}. Here we see that arm $e_1$ is safe and has value much higher than any other arm, so $\zst = e_1$, and can be shown to be safe relatively easily; arm $e_2$ has near-optimal value but is very unsafe; and arm $e_3$ is unsafe with very small safety gap, but has the smallest value.

\begin{figure}[h!]
    \centering
    \begin{minipage}{0.5\textwidth}
        \centering
        \includegraphics[width=0.9\textwidth]{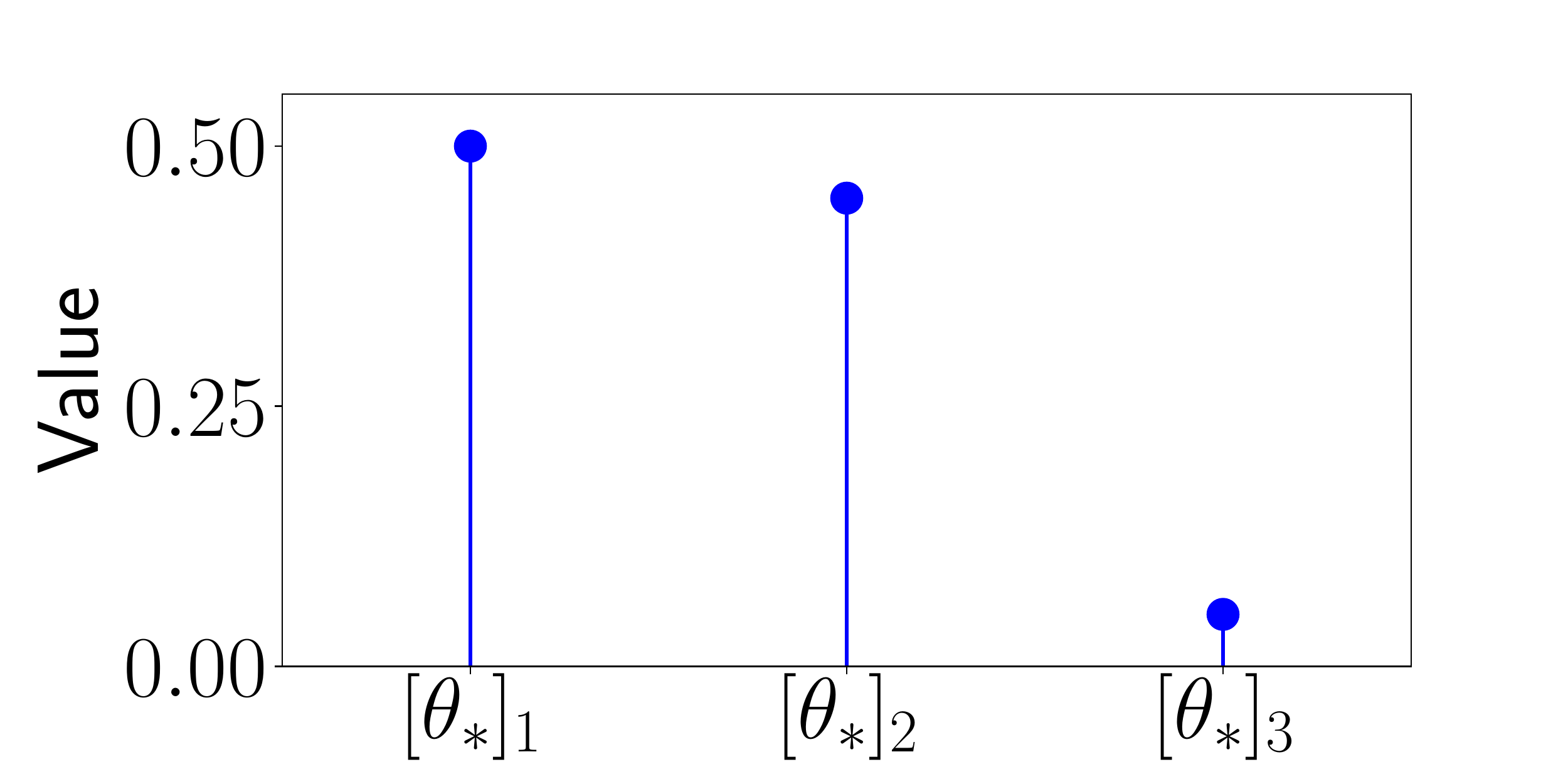} 
    \end{minipage}\hfill
    \begin{minipage}{0.5\textwidth}
        \centering
        \includegraphics[width=0.9\textwidth]{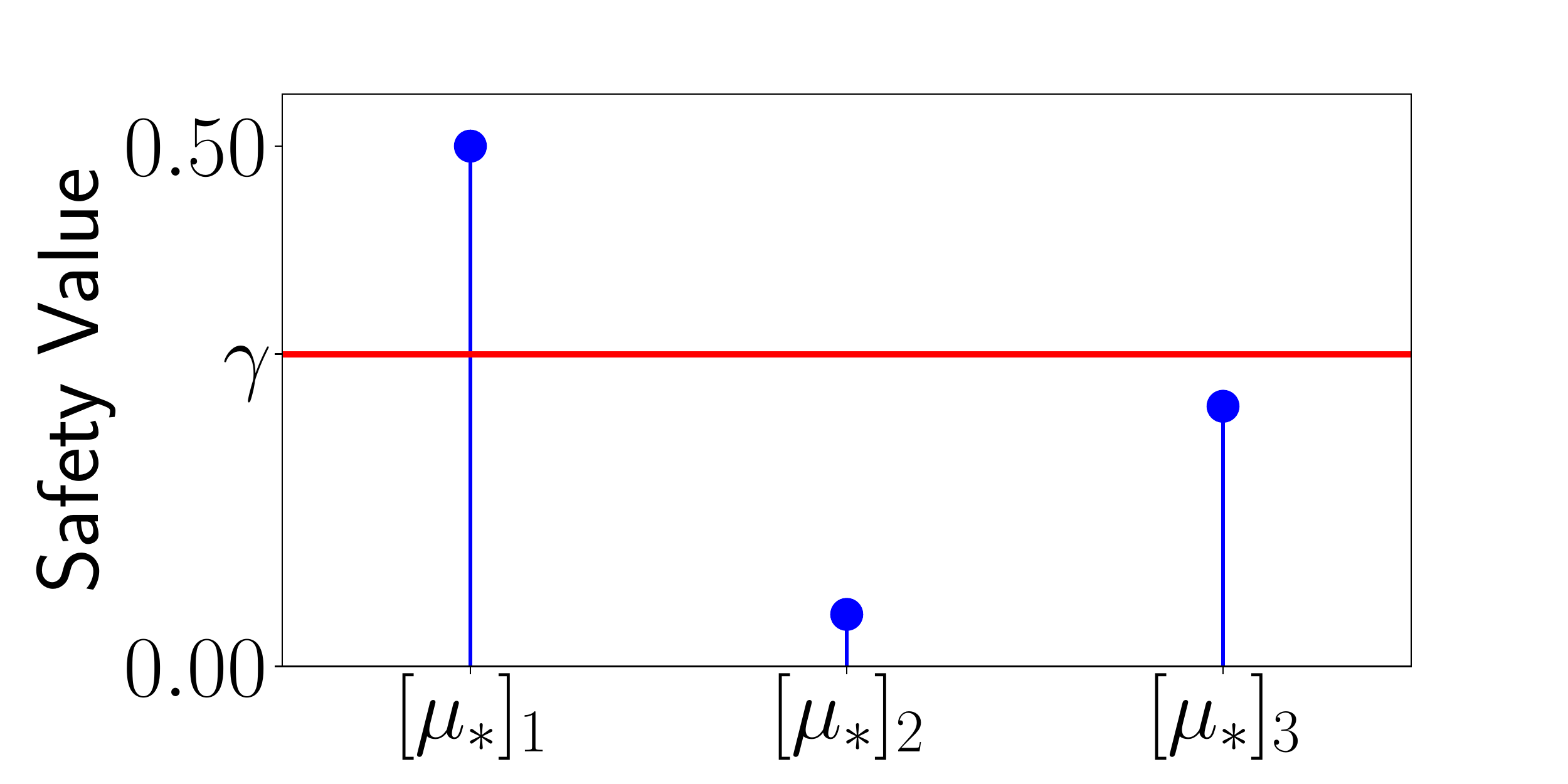}
    \end{minipage}
    \caption{Multi-Armed Bandit Instance}
    \label{fig:instancei=1}
    \label{fig:mab}
\end{figure}

\textbf{Showing $e_2$ is Suboptimal.} As $e_2$ has near-optimal value, $\Delta(e_2)$ is very small and it is very difficult to show $e_2$ is suboptimal. However, $-\Delsafe(e_2) = \cO(1)$, so it is very easy to show $e_2$ is unsafe. It follows that $\pos(-\Delsafe(e_2)) = \cO(1)$ so both denominators in our complexity will always be $\cO(1)$ for $z = e_2$---\safebai does not attempt to show $e_2$ is suboptimal, but instead shows it is unsafe, and therefore does not pay for the small optimality gap of $\Delta(e_2)$ in the complexity.

\textbf{Showing $e_3$ is Suboptimal.} Recall the definition of $\Delta^{\epsilon}(z) = \max_{z' : \Delsafe(z') \ge \epsilon} \thetast^\top (z' - z)$. In this case, for $\epsilon = \cO(1)$, we will have $\Delsafe(e_1) \ge \epsilon$, which implies that $\Delta^{\epsilon}(e_3) = \thetast^\top(e_1 - e_2) = \Delta(e_3) = \cO(1)$. To show $e_3$ is suboptimal, we could either show it is unsafe (which is very difficult) or suboptimal (which is very easy). Observing the sample complexity of \Cref{thm:main_complexity_nice}, we see that the denominator of both terms will always be $\cO(1)$ for $z = e_3$ since $\Delta^{\epsilon}(e_2)= \cO(1)$---\safebai never pays for the small safety gap of $e_3$, it instead takes advantage of the fact that $e_3$ can easily be shown to be suboptimal, and uses this to eliminate it.

In both of these cases we see that \safebai does the ``right'' thing, always using the easier of the two criteria---either showing an arm is unsafe or suboptimal---to show that $z \neq \zst$. Combining the above observations, for $\epsilon \approx \min \{ \Delta(e_3), - \Delsafe(e_2), \Delsafe(e_1) \}$, it follows that on this example the total sample complexity of \safebai given by \Cref{thm:main_complexity_nice} scales as:
\begin{align*}
    \cOtil \left ( \Big ( \frac{1}{\Delsafe(e_1)^2} + \frac{1}{\Delsafe(e_2)^2} + \frac{1}{\Delta(e_3)^2} \Big ) \cdot \log \frac{1}{\delta} \right )
\end{align*}
where the $1/\Delsafe(e_1)^2$ arises because we must also show $e_1$ is safe.



\end{example}

\subsection{Optimality of \safebai}

\paragraph{Optimality in Best-Arm Identification.}
Consider applying \safebai to a problem instance where $m = 1$, $\mu_{*,1} = 0$, and $\gamma = 1$. In this case, every arm is safe, and the safety constraints are essentially vacuous---every arm can easily be shown safe. We can therefore think of this as simply an instance of the best-arm identification problem. In this setting, we obtain the following corollary.

\begin{corollary}\label{cor:bai}
Consider running \safebai on a problem instance where $m = 1$, $\mu_{*,1} = 0$, and $\gamma = 1$, and set $\epsilon = \frac{1}{2} \max_{z \neq \zst} \thetast^\top (\zst - z)$. Then with probability at least $1-\delta$, \safebai returns $\zst$ 
and has sample complexity bounded by:
\begin{align*}
\cOtil \bigg ( \inf_{\lambda \in \simplex_\cX} \max_{z \in \cZ} \frac{\| z - \zst \|_{A(\lambda)^{-1}}^2 }{\Delta(z)^2} \cdot \log \frac{ | \cZ |}{\delta}  + \inf_{\lambda \in \simplex_\cX} \max_{z \in \cZ} \| z \|_{A(\lambda)^{-1}}^2 \cdot \log \frac{ | \cZ |}{\delta}  \bigg ) .
\end{align*}
\end{corollary}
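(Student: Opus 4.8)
The plan is to derive Corollary~\ref{cor:bai} by specializing Theorem~\ref{thm:main_complexity_nice} to $m=1$, $\mu_{*,1}=0$, $\gamma=1$: one substitutes these values into both the correctness guarantee and the complexity bound and simplifies. I assume, as is standard in best-arm identification, that $\zst$ is the unique maximizer of $z\mapsto z^\top\thetast$.

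\emph{Correctness.} Since $\mu_{*,1}=0$ and $\gamma=1$, $\Delsafe^1(z)=\gamma-\mu_{*,1}^\top z=1$ for every $z\in\cZ$, so every arm is safe, $\zst=\argmax_{z\in\cZ}z^\top\thetast$ is the ordinary best arm, and $\Delta(z)=\thetast^\top(\zst-z)$ is the usual suboptimality gap. The safety part of the definition of $\cZeps$ is then vacuous, so $\cZeps=\{z:\Delta(z)\le\epsilon\}$, and the choice of $\epsilon$ in the statement makes $\cZeps=\{\zst\}$; the $(\epsilon,\delta)$-PAC property from Theorem~\ref{thm:main_complexity_nice} then yields $\zhat=\zst$ with probability at least $1-\delta$.

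\emph{Simplifying the complexity.} All safety gaps equal $1$, so $\min_j|\Delsafe^j(z)|=1$ and $\max_j\pos(-\Delsafe^j(z))=0$ for every $z$. For the $\epstil$-safe optimality gap of \eqref{eq:Deleps}: if $\epstil\le1$ then $\min_i\Delsafe^i(z')\ge\epstil$ holds for all $z'$, so $\Delta^{\epstil}(z)=\max_{z'}(z'-z)^\top\thetast=\Delta(z)$; if $\epstil>1$ the feasible set is empty and $\pos(\Delta^{\epstil}(z))=0$. Substituting into Theorem~\ref{thm:main_complexity_nice}: the (safety) term has denominator $(1+\pos(\Delta^{\epstil}(z))+\epstil)^2\ge1$, so it is at most $\inf_{\lambda\in\simplex_\cX}\max_{z\in\cZ}\|z\|_{A(\lambda)^{-1}}^2\log(|\cZ|/\delta)$ — the second term in the corollary. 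In the (optimality) term, $\max_j\pos(-\Delsafe^j(z))=0$, leaving $\sup_{\epstil\ge\epsilon}\inf_\lambda\max_z\frac{\|z-\zst\|_{A(\lambda)^{-1}}^2\log(|\cZ|/\delta)}{(\pos(\Delta^{\epstil}(z))+\epstil)^2}$; the maximand is $0$ at $z=\zst$, and for $z\neq\zst$ the denominator is $(\Delta(z)+\epstil)^2\ge\Delta(z)^2$ when $\epstil\le1$ and $\epstil^2\ge1\ge\tfrac14\Delta(z)^2$ when $\epstil>1$ (using $\|\thetast\|_2,\|z\|_2\le1$, so $\Delta(z)\le2$), hence this term is $\lesssim\inf_\lambda\max_{z\neq\zst}\frac{\|z-\zst\|_{A(\lambda)^{-1}}^2\log(|\cZ|/\delta)}{\Delta(z)^2}$ — the first term in the corollary. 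Finally, $C=\poly\log(1/\epsilon)$ and the additive $C_0=\poly\log(1/\epsilon,|\cZ|)\log(1/\delta)$ are logarithmic in the instance parameters and are absorbed into $\cOtil$ (the $\log(1/\delta)$ in $C_0$ against the $\log(|\cZ|/\delta)$ factors already present, using that $\inf_\lambda\max_z\|z\|_{A(\lambda)^{-1}}^2\ge1$).

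\emph{Main obstacle.} The argument is essentially bookkeeping; the one subtlety is the handling of $\sup_{\epstil\ge\epsilon}$, in particular the degenerate regime $\epstil>1$ where $\Delta^{\epstil}$ is undefined, where one checks that no $\epstil$ shrinks the denominators below $\Delta(z)^2$ (respectively below $1$) up to a constant. The exact-recovery claim is a secondary point, resting only on the choice of $\epsilon$ collapsing $\cZeps$ to $\{\zst\}$ together with uniqueness of $\zst$.
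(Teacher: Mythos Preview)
Your approach matches the paper's exactly: substitute $\Delsafe(z)=1$ and $\Delta^{\epstil}(z)=\Delta(z)$ (for $\epstil\le1$) into Theorem~\ref{thm:main_complexity_nice} and simplify---the paper's own proof just says ``the result follows directly from this and some algebra,'' whereas you actually carry out the bookkeeping and also handle the degenerate $\epstil>1$ regime. One minor point: your claim that $\cZeps=\{\zst\}$ requires $\epsilon<\min_{z\neq\zst}\Delta(z)$, not $\tfrac12\max_{z\neq\zst}\Delta(z)$ as the corollary literally reads; this appears to be a typo in the statement (it should be $\min$), and under that reading your correctness argument goes through.
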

Up to lower-order terms, this exactly matches the lower bound on best-arm identification given in \cite{fiez2019sequential}. Thus, in settings where the safety constraint is vacuous, \safebai hits the optimal rate.

\paragraph{Worst-Case Performance of \safebai.}
We next consider the worst-case performance of \safebai in settings when $\cX = \cZ$. We have the following result. 

\begin{corollary}\label{cor:worst_case_upper}
Assume that $\cX = \cZ$. Then for any $\thetast$ and $(\musti)_{i=1}^m$, the sample complexity of \safebai necessary to return an $\epsilon$-good and $\epsilon$-safe arm is bounded as $\cOtil(\frac{d}{\epsilon^2} \cdot (\log (m| \cX |) + \log \frac{1}{\delta}))$. 
\end{corollary}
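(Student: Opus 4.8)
\textbf{Proof proposal for Corollary~\ref{cor:worst_case_upper}.} The plan is to derive the bound directly from \Cref{thm:main_complexity_nice} by controlling each of the two experimental-design terms with a single, gap-independent sampling distribution. First I would observe that both the \eqref{eq:complexity_safety} and \eqref{eq:complexity_optimality} terms have a denominator that always contains the additive summand $\epstil$ (all other summands are nonnegative), and the outer supremum ranges over $\epstil \ge \epsilon$; hence every denominator is at least $\epstil^2 \ge \epsilon^2$. This lets me pull $1/\epsilon^2$ out of both terms, reducing the task to bounding $\inf_{\lambda \in \simplex_\cX}\max_{z\in\cZ}\|z\|_{A(\lambda)^{-1}}^2$ and $\inf_{\lambda \in \simplex_\cX}\max_{z\in\cZ}\|z-\zst\|_{A(\lambda)^{-1}}^2$, each multiplied by the appropriate $\log$ factor; note that the minimizing $\lambda$ can be chosen independently of $\epstil$, so the supremum over $\epstil$ is harmless.

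Second, since $\cX = \cZ$, I would invoke the Kiefer--Wolfowitz (G-optimality) theorem: there is a distribution $\lambda^\star \in \simplex_\cX$ with $\max_{z\in\cX}\|z\|_{A(\lambda^\star)^{-1}}^2 \le \dim(\mathrm{span}(\cX)) \le d$, interpreting $A(\lambda)^{-1}$ as the pseudoinverse on $\mathrm{span}(\cX)$, which contains every $z \in \cZ = \cX$. Plugging $\lambda^\star$ into the inner infimum immediately gives $\inf_\lambda \max_z \|z\|_{A(\lambda)^{-1}}^2 \le d$ for the safety term. For the optimality term I would use the same $\lambda^\star$ together with $\|z - \zst\|_{A(\lambda^\star)^{-1}}^2 \le 2\|z\|_{A(\lambda^\star)^{-1}}^2 + 2\|\zst\|_{A(\lambda^\star)^{-1}}^2 \le 4d$, so one distribution handles both terms.

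Third, I would combine the pieces: the safety term is at most $\frac{C d}{\epsilon^2}\log\frac{m|\cZ|}{\delta}$, the optimality term at most $\frac{4 C d}{\epsilon^2}\log\frac{|\cZ|}{\delta}$, and $C_0 = \poly\log(\frac{1}{\epsilon},|\cZ|)\cdot\log\frac1\delta$ is a lower-order additive term. Since $C = \poly\log(\frac1\epsilon)$, folding $C$, the constant $4$, and $C_0$ into $\cOtil$ and using $|\cZ| = |\cX|$ yields the claimed $\cOtil\big(\frac{d}{\epsilon^2}(\log(m|\cX|) + \log\frac1\delta)\big)$. The only step requiring any care is the G-optimal-design bound when $\cX$ does not span $\R^d$: there one must verify that $\|\cdot\|_{A(\lambda)^{-1}}$ is well-defined as a pseudoinverse norm precisely on $\mathrm{span}(\cX)$ and that the Kiefer--Wolfowitz value is then $\dim(\mathrm{span}(\cX))$ rather than $d$, which only improves the bound. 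I do not expect any substantive obstacle, since everything else is a direct substitution into \Cref{thm:main_complexity_nice}.
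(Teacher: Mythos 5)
Your proposal is correct and follows essentially the same route as the paper: lower-bound each denominator by $\epsilon^2$, then apply Kiefer--Wolfowitz (using $\cX=\cZ$) to bound the design objectives by $O(d)$. You are in fact a bit more explicit than the paper about handling the $\|z-\zst\|_{A(\lambda)^{-1}}^2$ term via the triangle-type bound $\|z-\zst\|_{A(\lambda^\star)^{-1}}^2\le 4d$, where the paper simply drops that term; your version is the cleaner justification.
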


Theorem 2 of \cite{wagenmaker2022reward} shows a worst-case lower bound of $\Omega(d^2/\epsilon^2)$ on the sample complexity of identifying an $\epsilon$-optimal arm in the standard linear bandit setting. Safe best-arm identification problems in which the safety constraint is vacuous are at least as hard as the standard best-arm identification problem, since at minimum we need to find the best arm out of every safe arm. Thus, $\Omega(d^2/\epsilon^2)$ is also a worst-case lower bound for the safe best-arm identification problem. The hard instance of \cite{wagenmaker2022reward} has $|\cX| = \cO(2^d)$, so it follows that on this instance, \safebai achieves a complexity of $\cOtil(\frac{d}{\epsilon^2} \cdot (d + \log \frac{1}{\delta}))$, and therefore \safebai has optimal dimensionality dependence. In addition, this also implies that safe best-arm identification, in the worst-case, is no harder than the standard best-arm identification problem---it is no harder to find the best \emph{safe} arm, regardless of the number of safety constraints, than to find the best arm, ignoring safety constraints.

\subsection{The Role of Experiment Design}\label{sec:role_of_exp_design}


We can think of the safe best-arm identification problem, in some sense, as an interpolation of the standard best-arm identification problem, as well as the level-set estimation problem, where the goal is to identify $z \in \cZ$ satisfying $z^\top \must \le \gamma$ \citep{mason2021nearly}. In the former problem, \cite{fiez2019sequential} shows that the instance-optimal rate can be attained by running a round-based algorithm and at every round solving an instance of the \XYdiff experiment design, as defined in \Cref{sec:alg}. In the latter problem, \citep{mason2021nearly} also show that a round-based algorithm can hit the instance-optimal rate, but instead solving the \XYsafe problem at each round.
It is natural to ask whether either of these strategies could be applied to the safe best-arm identification problem directly, or if it is necessary to alternate between them. The following results show that, on their own, each of these designs is unable to hit the optimal rate.

\begin{proposition}\label{prop:xy_fails}
Fix some small enough $\epsilon > 0$. Then there exist instances of the safe best-arm identification problem, $\mc{I}_{i} = (\thetast^i,\must^i,\cX^i,\cZ^i)$, $i=1,2$,  with $d = |\cX^i| = |\cZ^i| = 2$, $m = 1$, such that:
\begin{itemize}[leftmargin=*]
\item On $\mc{I}^1$, any $(\epsilon,\delta)$-PAC algorithm which plays only allocations minimizing \XYdiff must have $\Exp[\tau_{\delta}] \ge \Omega \left ( \frac{1}{\epsilon^3} \cdot \log \frac{1}{ \delta} \right )$, while \safebai identifies an $\epsilon$-optimal arm after $\cOtil(\frac{1}{\epsilon^2} \cdot \log 1/\delta)$ samples. 
\item On $\mc{I}^2$, any $(\epsilon,\delta)$-PAC algorithm which plays only allocations minimizing \XYsafe must have $\Exp[\tau_{\delta}] \ge \Omega \left ( \frac{1}{\epsilon^{3/2}} \cdot \log \frac{1}{ \delta} \right )$, while \safebai identifies an $\epsilon$-optimal arm after $\cOtil(\frac{1}{\epsilon} \cdot \log 1/\delta)$ samples. 
\end{itemize}
\end{proposition}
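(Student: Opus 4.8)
The plan is to exhibit the two instances explicitly and, for each, (i) lower bound any restricted algorithm by a change-of-measure argument against the \emph{single} allocation the restriction forces it to play, and (ii) upper bound \safebai by plugging the instance into \Cref{thm:main_complexity_nice}. In both instances I take $\cX^i=\{e_1,e_2\}$, so $A(\lambda)=\diag(\lambda_1,\lambda_2)$ and $\|v\|_{A(\lambda)^{-1}}^2=v_1^2/\lambda_1+v_2^2/\lambda_2$; every design then reduces to a one-variable optimization, and $\argmin_{\lambda}\|v\|_{A(\lambda)^{-1}}^2$ for a direction $v$ with both coordinates nonzero is the \emph{unique} point $\lambda\propto(|v_1|,|v_2|)$. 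Since $|\cZ^i|=2$, the transductive difference set is $\{0,\pm(z_1-z_2)\}$, so "minimizing the \XYdiff objective" pins the algorithm to the unique minimizer of $\lambda\mapsto\|z_1-z_2\|_{A(\lambda)^{-1}}^2$; and by choosing $z_1,z_2$ with $\|z_1\|_{A(\lambda)^{-1}}>\|z_2\|_{A(\lambda)^{-1}}$ for every interior $\lambda$, the \XYsafe objective equals $\|z_1\|_{A(\lambda)^{-1}}^2$ and again has a unique minimizer. Thus the restriction pins the algorithm to an explicit $\lambda^\star$, and the transportation inequality $\tfrac12\,\Exp[\tau_\delta]\sum_x\lambda^\star_x\,\KL(\nu_x,\nu'_x)\ge\kl(1-\delta,\delta)$ (with Gaussian noise, $\KL(\nu_x,\nu'_x)=\tfrac12((\thetast-\thetast')^\top x)^2+\tfrac12((\must-\must')^\top x)^2$) applies directly once we pick an alternative whose correct output differs.

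For $\cI^1$ (where \XYdiff fails) I make $\mu$ essentially one-dimensional: choose $z_1,z_2$ with $z_{1,1}=\Theta(1)$ but $(z_1-z_2)_1=\Theta(\epsilon)$, set $\thetast$ so $z_1$ has value a \emph{constant} above $z_2$, and choose $\must,\gamma$ so $z_2$ is safe while $z_1$ is unsafe with safety gap $-\Theta(\epsilon)$. Then $\zst=z_2$, and $z_2$ is the only $\epsilon$-good arm; in the alternative $\cI^{1\prime}$ obtained by perturbing $\must$ only so that $z_1$ becomes safe, $\zst=z_1$ and $z_2$ is \emph{not} $\epsilon$-good (its value deficit is $\Theta(1)\gg\epsilon$), so the correct output genuinely flips. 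The forced design is $\lambda^\star\propto(\epsilon,1)$, so $\|z_1\|_{A(\lambda^\star)^{-1}}^2=\Theta(1/\epsilon)$; the minimal perturbation of $\must$ making $z_1$ safe has $\|\must-\must'\|_{A(\lambda^\star)}^2=\Theta(\epsilon^2)/\|z_1\|_{A(\lambda^\star)^{-1}}^2=\Theta(\epsilon^3)$, so the transportation inequality gives $\Exp[\tau_\delta]=\Omega(\epsilon^{-3}\log\tfrac1\delta)$. Conversely, \Cref{thm:main_complexity_nice} on $\cI^1$ — using the uniform design in the \eqref{eq:complexity_safety} term, where $\|z\|_{A(\lambda)^{-1}}^2=\Theta(1)$ against a denominator $\gtrsim\epsilon^2$ (since $\min_j|\Delsafe^j(z_1)|=\Theta(\epsilon)$), and likewise in the \eqref{eq:complexity_optimality} term where $\|z_1-\zst\|_{A(\lambda)^{-1}}^2=\Theta(1)$ — is $\cOtil(\epsilon^{-2}\log\tfrac1\delta)$.

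For $\cI^2$ (where \XYsafe fails) I make the safety constraint vacuous ($\must=0$, $\gamma=1$), reducing to pure best-arm identification; I take $z_1,z_2$ differing only in the first coordinate, with $z_{1,1},z_{2,1}=\Theta(\sqrt\epsilon)$ and $(z_1-z_2)_1=\Theta(\sqrt\epsilon)$, and $\thetast$ so $z_1$ beats $z_2$ by exactly $\Theta(\epsilon)$ (so $z_1$ is the only $\epsilon$-good arm). The \XYdiff design here is $\lambda\propto e_1$, but the forced \XYsafe design must also cover the $\Theta(1)$ second coordinates of $z_1,z_2$, so $\lambda^\star\propto(\sqrt\epsilon,1)$ and $\|z_1-z_2\|_{A(\lambda^\star)^{-1}}^2=\Theta(\sqrt\epsilon)$. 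Flipping the best arm requires $\|\thetast-\thetast'\|_{A(\lambda^\star)}^2=\Theta(\epsilon^2)/\Theta(\sqrt\epsilon)=\Theta(\epsilon^{3/2})$, so $\Exp[\tau_\delta]=\Omega(\epsilon^{-3/2}\log\tfrac1\delta)$; whereas \Cref{thm:main_complexity_nice} on $\cI^2$ — the \eqref{eq:complexity_safety} term is only $\cOtil(\log\tfrac1\delta)$ since $|\Delsafe^1(z)|=1$, and the \eqref{eq:complexity_optimality} term, using $\lambda$ with $\lambda_1\to1$ so that $\|z_1-z_2\|_{A(\lambda)^{-1}}^2=\Theta(\epsilon)$ against a denominator $\gtrsim\epsilon^2$, is $\cOtil(\epsilon^{-1}\log\tfrac1\delta)$.

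The main obstacle is arranging four requirements simultaneously: (a) the alternative must change the \emph{correct PAC answer} — this is why the value gap between $z_1,z_2$ in $\cI^1$ must be a constant rather than $\Theta(\epsilon)$, otherwise returning $z_2$ would remain valid after $z_1$ is made safe; (b) the design minimizer must be unique, so the restriction actually pins the allocation; (c) every vector must obey the unit-norm constraints; and (d) $\|z_1\|_{A(\lambda^\star)^{-1}}^2$ (resp.\ $\|z_1-z_2\|_{A(\lambda^\star)^{-1}}^2$) must be large in precisely the direction the alternative perturbs, which is what forces the $\epsilon^{-3}$ (resp.\ $\epsilon^{-3/2}$) rate. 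A secondary point is that the \eqref{eq:complexity_safety} and \eqref{eq:complexity_optimality} terms of \Cref{thm:main_complexity_nice} carry \emph{separate} infima over $\lambda$, and the upper bounds above exploit this by using a near-uniform design for one and a near-$e_1$ design for the other; a single shared allocation would not give the stated rates.
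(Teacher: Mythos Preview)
Your proposal is correct and follows essentially the same approach as the paper. Both proofs take $\cX=\{e_1,e_2\}$, compute the unique minimizer $\lambda^\star$ of the restricted design explicitly, apply the transportation inequality of \cite{kaufmann2016complexity} against a single alternative that flips the correct PAC answer (perturbing only $\must$ in $\mc{I}^1$ and only $\thetast$ in $\mc{I}^2$), and upper-bound \safebai by instantiating \Cref{thm:main_complexity_nice} with a separately chosen $\lambda$ for each term. Your instances differ from the paper's only by relabeling coordinates and arms: the paper's $\mc{I}^1$ has $z_1-z_2=[-1/2,-\alpha]$ so \XYdiff under-samples $e_2$ and the unsafe arm's safety is controlled by $\mu_2$, whereas you swap the roles of the two coordinates; the paper's $\mc{I}^2$ has $z_1=[1/2+\alpha^2/2,0]$, $z_2=[1/2,\alpha/2]$ with $\alpha=\Theta(\sqrt{\epsilon})$, whereas you let $z_1,z_2$ share a $\Theta(1)$ second coordinate. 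The computations and the $\epsilon^{-3}$, $\epsilon^{-3/2}$ scalings come out identically.
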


\Cref{prop:xy_fails} implies that, to solve the safe best-arm identification problem optimally, more care must be taken in exploring than either standard experiment design induces---we must trade off between \XYdiff and \XYsafe as \safebai does. We remark briefly on the instance $\mc{I}^1$. On this instance we have $\cX = \{ e_1, e_2 \}$ and $\cZ = \{ z_1, z_2 \}$ with $z_1 = [1/4,1/2]$ and $z_2 = [3/4,1/2+\alpha]$. We set $\thetast^1 = [1,0]$, $\must^1=[0,1]$, and $\gamma = 1/2 + \alpha/2$. Here $z_2$ is unsafe while $z_1$ is safe, so it follows that $\zst = z_1$. As $z_2^\top \thetast^1 > z_1^\top \thetast^1$, to show $z_2 \neq \zst$, we must show it is unsafe. However, if we solve the design \XYdiff, we see that it places nearly all of the mass on the first coordinate. While this would be optimal if both $z_1$ and $z_2$ were safe and we simply wished to determine which has a higher value, to show $z_2$ is unsafe, the optimal strategy places (roughly) the same mass on each coordinate, since each coordinate could contribute to the safety value. This is precisely the allocation \safebai will play, so it is able to show that $z_2$ is unsafe much more efficiently than a naive \XYdiff approach.

\section{Experiments for Safe Best Arm Identification in Linear Bandits}
We next present experimental results on \safebai to demonstrate the advantage of experimental design---especially combining \XYdiff and \XYsafe designs.
As there are no existing algorithms that consider safe best-arm identification, as a benchmark we consider the naive adaptive approach \textsc{Baseline} that first solves the problem of estimating the safety gap of each arm up to a desired tolerance, and then solves the problem of finding the best (safe) arm among the arms that were found to be safe. 
We first describe instances on which we test \safebai. Our experimental details and precise implementation of $\safebai$ using elimination are described in Section~\ref{sec:exp_details}. 

\paragraph{Multi-Armed Bandit.}
We consider a best-arm identification problem in which every arm is safe, but the arm with highest value is very difficult to identify as safe, while the second-best arm can easily be shown safe. We vary the total number of arms and run \safebai and \textsc{Baseline} with $\epsilon=0.5$ and $\delta = 0.1$. 
From Figure~\ref{fig:exp_hard_instance}, we observe that the sample complexity of \safebai is smaller (up to about two times for $100$ arms) than the sample complexity of its baseline.

\paragraph{Linear Response Model.}
\textit{Random Instance}: We also consider the more general setup where $\X, \Z\subset \R^d$, $\theta\in\R^d$ and $\mu\in\R^d$ are randomly generated from independent Gaussian random variables with mean $0$ and variance $1$. We set $|\X| = 50$ and vary the size of $|\Z|$. In Figure~\ref{fig:exp_lb}, we see again that \safebai significantly outperforms the baseline. 

\textit{Hard Instance}: We last consider the instance of Proposition~\ref{prop:xy_fails} and benchmark against the strategy playing only allocations minimizing \XYdiff. In Figure~\ref{fig:exp_hardlb}, we see again that \safebai significantly outperforms this baseline, corroborating the theoretical result of Proposition~\ref{prop:xy_fails}.

\begin{figure}[tbh]
\centering
\begin{minipage}{.3\textwidth}
  \centering
  \includegraphics[width=1\linewidth]{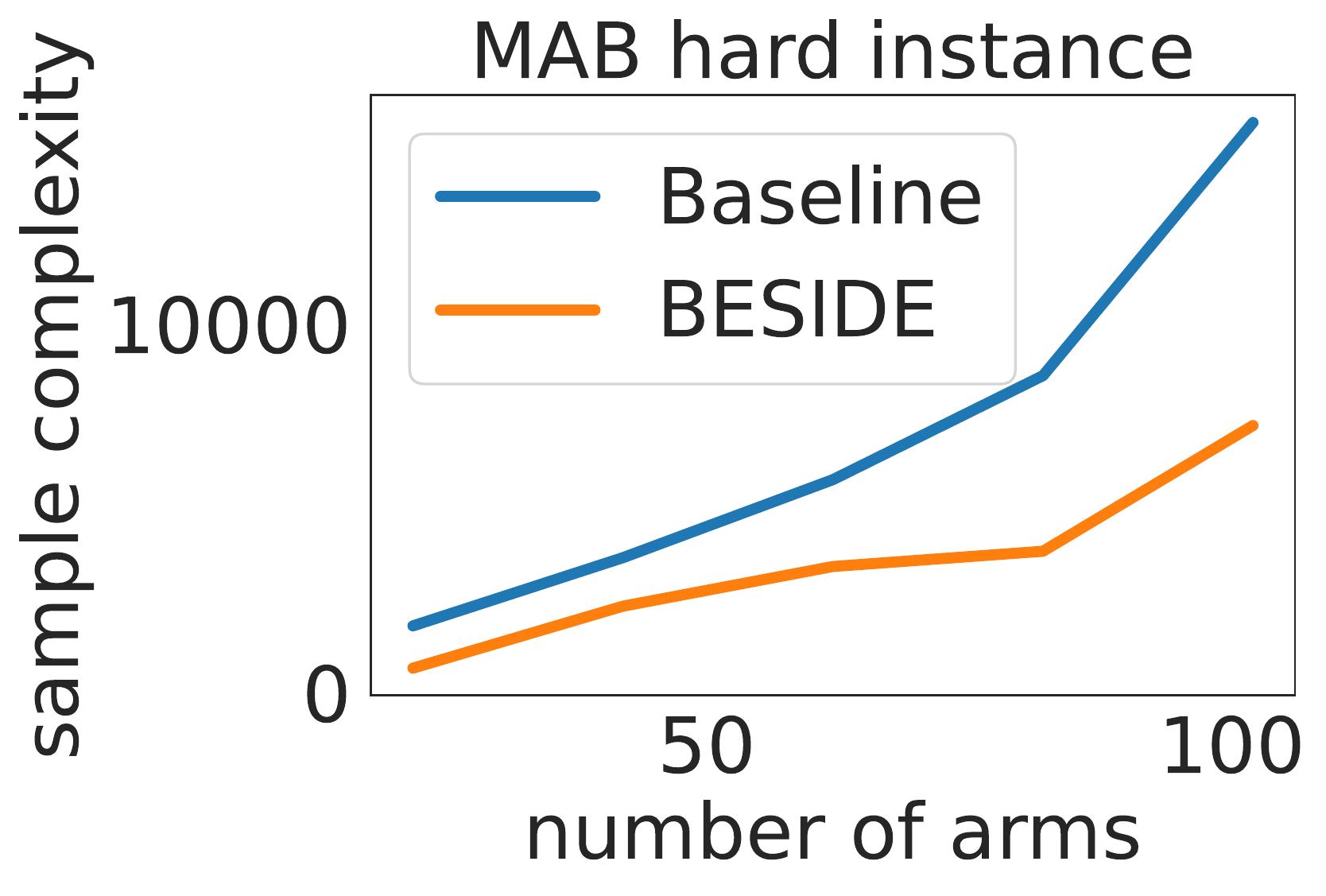}
  \vspace*{-8mm}
  \caption{Total arm pulls to termination vs. number of arms}
  \label{fig:exp_hard_instance}
\end{minipage}%
\hspace{5.00mm}
\begin{minipage}{.3\textwidth}
  \centering
  \includegraphics[width=1\linewidth]{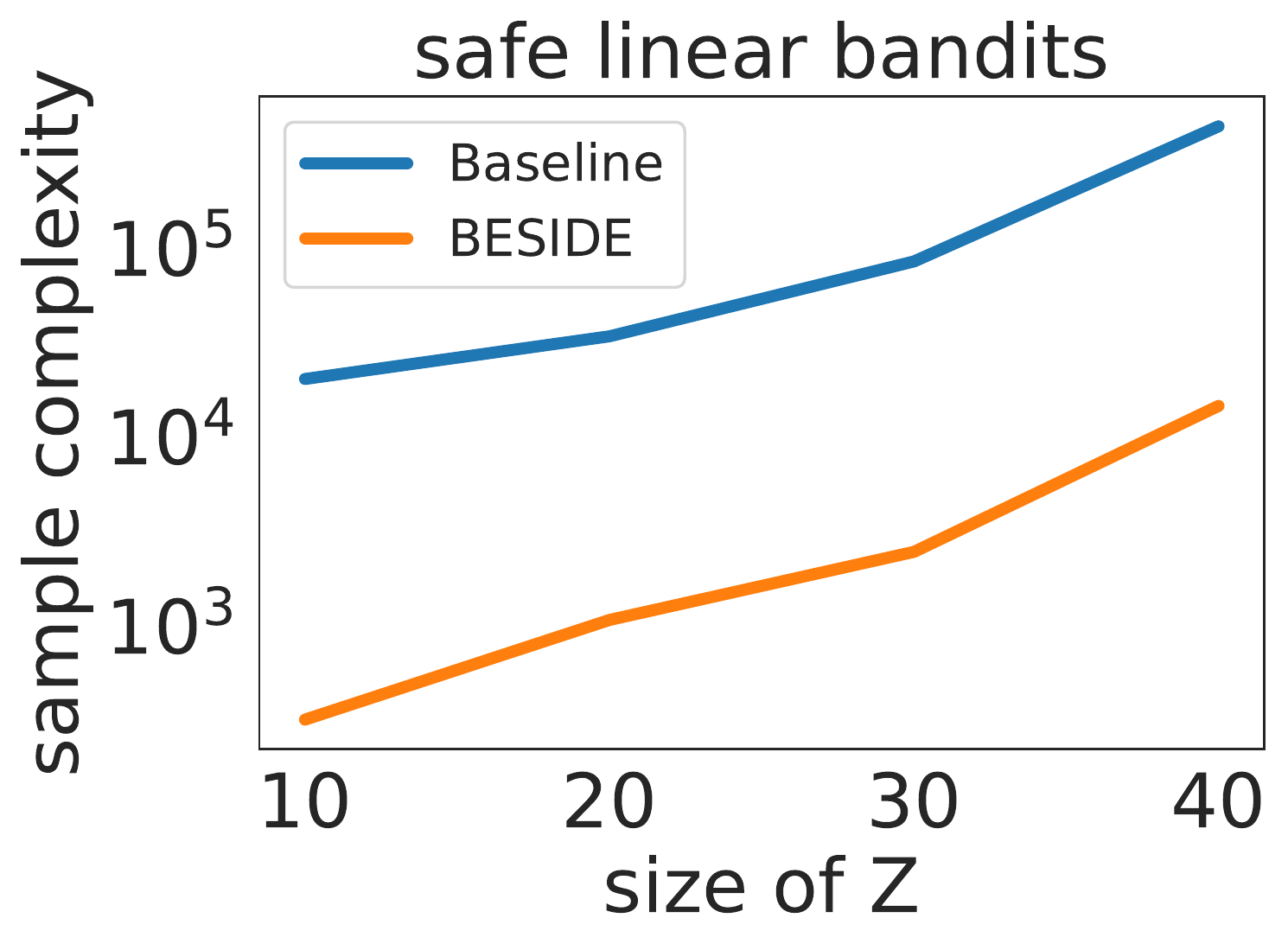}
  \vspace*{-8mm}
  \caption{Total arm pulls to termination vs. $|\Z|$}
  \label{fig:exp_lb}
\end{minipage}
\hspace{5.00mm}
\begin{minipage}{.3\textwidth}
  \centering
  \includegraphics[width=1\linewidth]{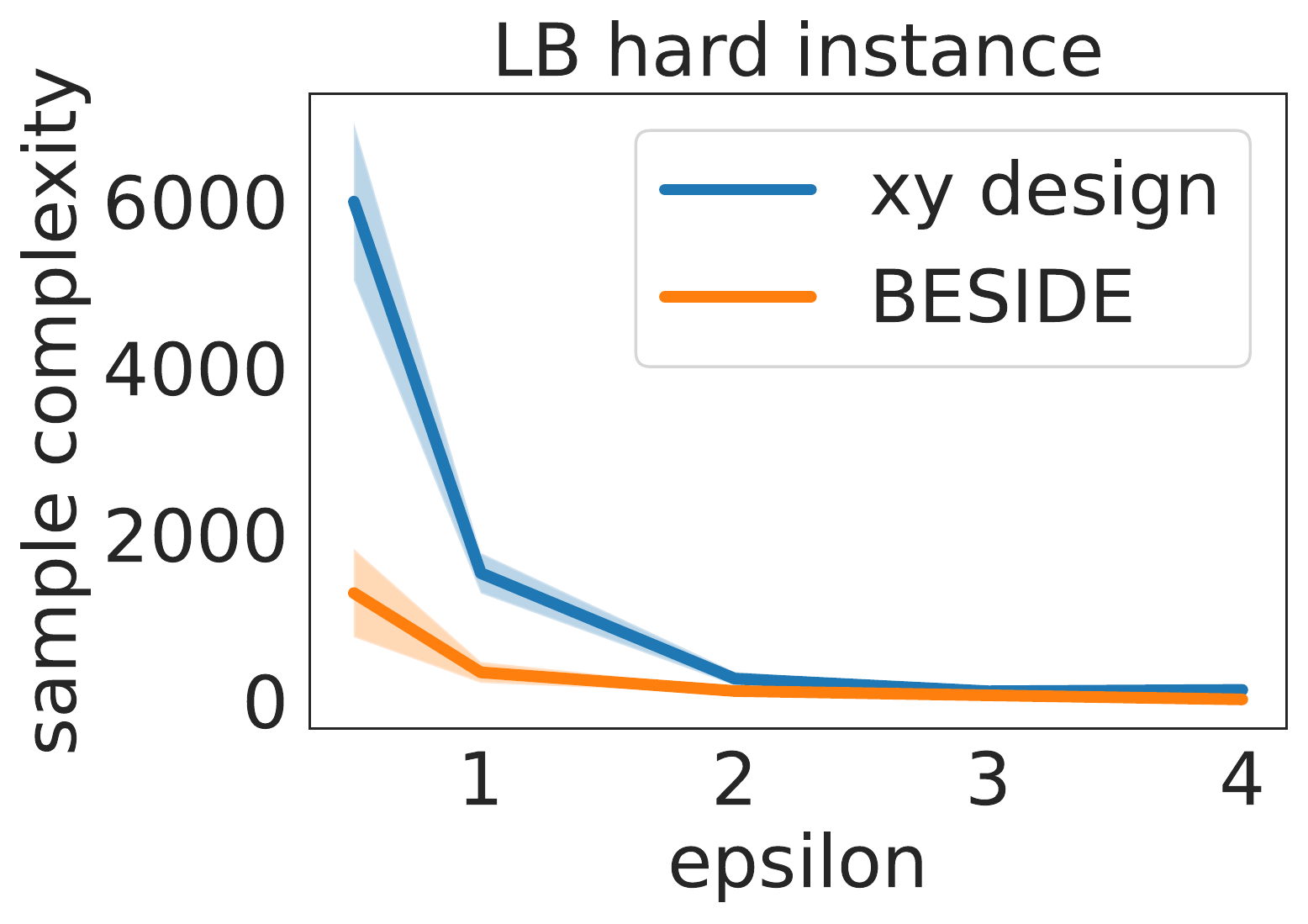}
  \vspace*{-8mm}
  \caption{Total arm pulls to termination vs. $\epsilon$}
  \label{fig:exp_hardlb}
\end{minipage}
\end{figure}
\vspace*{1mm}

\newcommand{\FDR}{\mathrm{FDR}}
\vspace{5pt}
\subsection{Practical Algorithms for Active Classification Under Constraints }\label{sec:practical_exp}
Next, we provide an application of the above ideas to pool-based active classification with constraints---namely, adaptive sampling to learn the highest accuracy classifier with a constraint on the false discovery rate (FDR). We first explain how this problem maps to the linear bandit setting.  Precisely, let $\X$ be the example space and $\mc{Y} = \{0,1\}$ the label space. Fix a hypothesis class $\mc{H}$ such that each $h \in \mc{H}$ is a classifier $h: \mc{X} \rightarrow \mc{Y}$. We represent each $h$ with an associated indicator vector $z_{h}\in \{0,1\}^{|\X|}$ where $z_h(x) =1\iff h(x) = 1$. 
Similarly, let $\eta\in [0,1]^{|\mc{X}|}$ represent the label distribution, i.e. 
$\eta(x) = \P(Y = 1 | X=x)$. 
Then the risk of a classifier $R(h) := \E_{x\sim\text{Unif}(\X) ,Y\sim \Ber(\eta(x))}[\1[h(x)\neq Y]] = z_h^{\top}(2\eta-1)$ 
and the FDR is defined as $\FDR(h) := (\mathbf{1}-\eta)^{\top}z/\mathbf{1}^{\top}z$. 
In the case when $\eta\in \{0,1\}^{|\X|}$, $\FDR(h)$ is the proportion of examples that $h$ incorrectly labels as $1$ out of all examples $h$ labels as $1$. Our goal is to solve the following constrained best arm identification problem:
\begin{align}\label{eqn:FDRconstraint}
   \widehat{h} = \min_{h\in\H}R (h)\quad \text{s.t.} \quad \FDR(h) \leq q \iff \min_{h\in \H} z_h^{\top}\eta \quad \text{s.t.} \quad ((\mathbf{1}-\eta)^{\top} - q\mathbf{1}^{\top})^{\top}z \leq 0.
\end{align}

The main challenge in running \safebai on this problem directly is a potentially high computational cost from computing a design over an extremely large hypothesis class $\cH$ (e.g. neural networks of a bounded width). In this section we provide an alternative approach motivated by \safebai. Algorithm~\ref{alg:active_cc} follows a similar design as \safebai and relies on an oracle, \texttt{CERM}, that can solve \eqref{eqn:FDRconstraint}, i.e. given a dataset it returns the highest accuracy classifier under an FDR constraint. Such oracles are available in, for example in \cite{agarwal2018reductions, cotter2018training}. In each round of \Cref{alg:active_cc} we perform \textit{randomized exploration} by perturbing the labels on our existing dataset with mean zero Gaussian noise, and then training $k$ classifiers $\widehat{h}_i, i\in [k]$, on the resulting datasets.  Implicitly, we are making the assumption that the loss function in the training of $\texttt{ERM}$ can handle continuous labels, such as the MLE of logistic regression. As described in \cite{kveton2019randomized}, randomized exploration emulates sampling from a posterior distribution on our possible set of classifiers. We then use the labels generated from these classifiers to compute safe classifiers $h_i, i\in [k]$. Finally, mimicking the strategy of \safebai, we compute \XYsafe and \XYdiff designs on these $k$ safe classifiers and repeat (note that the designs computed on Line \ref{line:eff_Designs} are equivalent to \XYsafe and \XYdiff in the classification setting).

\setlength{\textfloatsep}{0pt}
\begin{algorithm}[h!]
\caption{\texttt{Active constrained classification with randomized exploration}}
\label{alg:active_cc}
\begin{algorithmic}[1]
\Require{Batch size $n$, initial (labeled) data $x^{(0)}_1, \ldots, x^{(0)}_n$, number of rounds $L$, number of classifiers per round $k$, perturbation variance $\sigma$}
\For{$\ell = 1, \ldots, L$}
\For{$i = 1, \ldots, k$}

\State{$\hh_i = \texttt{ERM}(\{(x^{(\ell)}_t, y^{(\ell)}_t+\epsilon^{(i)}_t)\}_{t=1}^n)$, where $\{\epsilon^{(i)}_t\}_{1\leq t \leq n} \overset{i.i.d.}{\sim} {\cal{N}}(0, \sigma^2)$}
\State{$h_i = \texttt{CERM}(\{(x, \hh_i(x))\}_{x\in\X})$}
\EndFor
\State{Compute designs: $\lambda_{\text{safe}} = \arg\min_{\lambda\in\triangle_\X}\max_{1\leq i \leq k}\sum_{x\in \mc{X}} \frac{\1\{h_i(x)\neq 0\}}{\lambda_x}$, $\lambda_{\text{diff}} = \arg\min_{\lambda\in\triangle_\X}\max_{1\leq i\neq j\leq k}\sum_{x\in \mc{X}}\frac{\1\{h_i(x)\neq h_j(x)\}}{\lambda_x}$

}\label{line:eff_Designs}
\State{Sample $x^{(\ell)}_1, \ldots, x^{(\ell)}_n$ from a uniform mixture of $\lambda_{\text{safe}}, \lambda_{\text{diff}}$}
\State{Observe corresponding labels $y^{(\ell)}_1, \ldots, y^{(\ell)}_n$}
\EndFor
\Return{$\tih = \texttt{CERM}(\{(x^{(\ell)}_t, y^{(\ell)}_t)\}_{1\leq t \leq n, 0\leq \ell \leq L})$}
\end{algorithmic} 
\end{algorithm} 

To validate Algorithm~\ref{alg:active_cc}, we experiment against a passive baseline that selects points uniformly at randoms from the pool of examples $\X$, retrains the model using the same Constrained Empirical Risk Minimization oracle (\texttt{CERM}) as Algorithm~\ref{alg:active_cc} on its current samples, and report the accuracy and FDR. We evaluate on  two real world datasets next and provide an additional experiment on a synthetic dataset in Section~\ref{sec:exp_details}.

\paragraph{Adult dataset.}
\begin{wrapfigure}[16]{r}{.4\textwidth}
    \includegraphics[width=.4\textwidth]{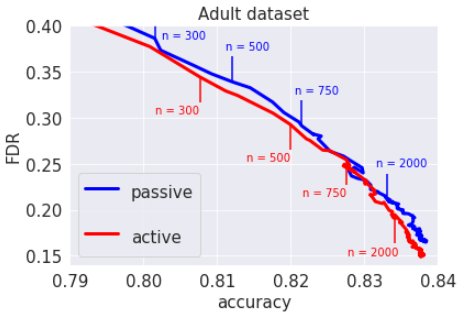}
    \vspace{-0.5em}
    \caption{\footnotesize FDR vs accuracy for active (Algorithm~\ref{alg:active_cc}) and passive sampling, ticks report number of samples. FDR and accuracy are averaged over $5$ trials}
    \label{fig:exp_adult}
\end{wrapfigure}
We evaluate on the adult income data set \cite{lichman2013UCI} (48,842 examples) where the goal is to predict whether someone's income is above \$50k per year. We report in Figure~\ref{fig:exp_adult} the accuracy and the FDR obtained when varying the number of labels given to each method. We observe that for any desired accuracy Algorithm~\ref{alg:active_cc} allows us to provide a classifier with lower FDR. Also, for any chosen number of total labels---such as $500, 750, 2000$ as reported in Figure~\ref{fig:exp_adult}---the Algorithm~\ref{alg:active_cc} gives a classifier with higher accuracy and lower FDR. In general we found that the active method needed half the number of samples as the passive sampling to achieve a given FDR. This demonstrates the effectiveness of Algorithm~\ref{alg:active_cc} to learn simultaneously the objective (risk) and the constraint (FDR), in a similar favorable way as characterized by our theoretical findings. 

\paragraph{German Credit dataset.}
\begin{wrapfigure}[14]{r}{.4\textwidth}
    \includegraphics[width=.4\textwidth]{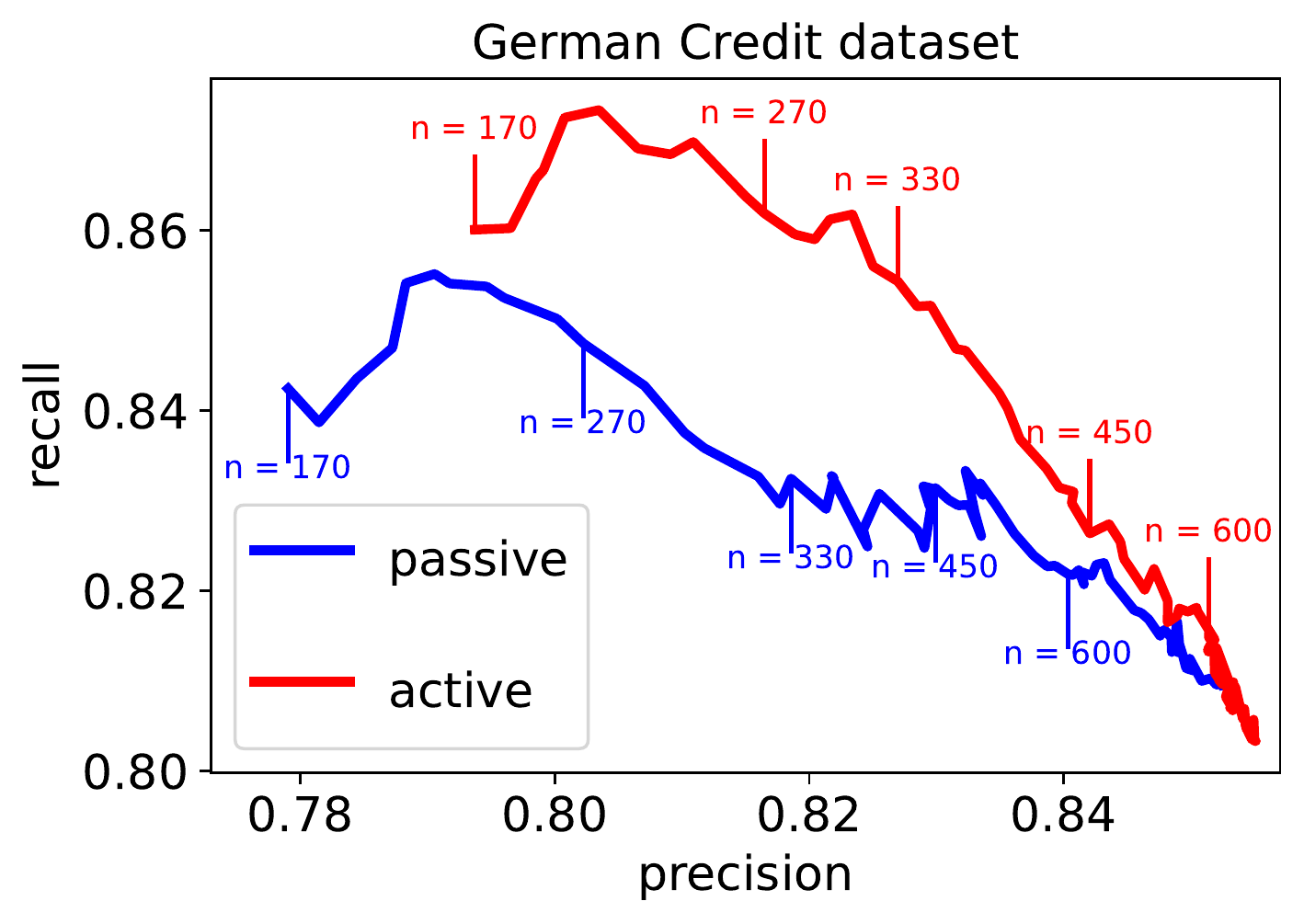}
    \vspace{-0.5em}
    \caption{\footnotesize TPR vs FDR for active active (Algorithm~\ref{alg:active_cc}) and passive sampling, ticks report number of samples. Precision is $1-\text{FDR}$, recall is TPR. Precision and recall are averaged over $25$ trials}
    \label{fig:exp_germancredit}
\end{wrapfigure}
We consider the German Credit Dataset originally from the Staflog Project Databases \cite{keogh1998UCI}. The goal is to predict whether someone's credit is 'bad' or 'good'. We report in Figure~\ref{fig:exp_germancredit} the recall (TPR) and the precision ($1-\text{FDR}$) obtained when varying the number of labels given to each method. We observe that for any desired precision Algorithm~\ref{alg:active_cc} allows us to provide a classifier with higher recall. 
Also, for any chosen number of total labels---such as $170, 270, 330, 450, 600$ as reported in Figure~\ref{fig:exp_germancredit}---the Algorithm~\ref{alg:active_cc} gives a classifier with higher precision and higher recall. As for the Adult dataset we found that the active method needed half the number of samples as the passive sampling to achieve a given precision. 

\section{Related works}


\paragraph{Constrained Bandits.}
A growing body of work seeks to address the question of safe learning in interactive environments. In particular, the majority of such works have considered the problem of regret minimization in linear bandits with linear safety constraints. Here, the goal is to maximize online reward, $x_t^\top \thetast$, by choosing actions $x_t \in \cX \subseteq \R^d$, while ensuring a safety constraint of the form $x_t^\top \must \le \gamma$ is met at all times (either in expectation or with high probability). A variety of algorithms have been proposed, including UCB-style \citep{kazerouni2017conservative,amani2019safety,pacchiano2020constraints}, and Thompson Sampling \citep{ahmadreza2019safe, ahmadreza2020stage}. While these works show that $\sqrt{T}$ regret is attainable, they only provide worst-case bounds (while we obtain instance-dependent bounds) and do not study the pure-exploration best-arm identification problem. 

To our knowledge, only several existing works consider the question of best-arm identification with safety constraints \citep{sui2015safe,sui2018stagewise,wang21safety}.  \citep{sui2015safe,sui2018stagewise} consider a general constrained optimization setting where the goal of the learner is to minimize some function $f(x)$ over a domain $x \in \cD$, while only having access to noisy samples of $f(x)$, $f(x_t) + w_t$, and guaranteeing that a safety constraint $g(x_t) \ge h$ is met for every query point $x_t$. While they do provide a sample complexity upper bound, they give no lower bound, and, as shown in \citep{wang21safety}, their approach can be very suboptimal. \citep{wang21safety} considers the setting of best-arm identification in multi-armed bandits. In their setting, at every step $t$ they query a value $a_t \in \cA$ for a particular coordinate $i_t$, and their goal is to identify the coordinate $i^*$ such that $a^*_{i^*} \theta_{i^*} \ge \max_i a^*_i \theta_i$, where $a^*_i$ is the largest value respecting the safety constraint: $a^*_i = \argmax_{a \in \cA} a \theta_i \text{ s.t. } a \mu_i \le \gamma$. Similar to \citep{sui2015safe,sui2018stagewise}, they require that the safety constraint $a_t \mu_{i_t} \le \gamma$ must be met while learning. Though they do show matching upper and lower bounds, and in addition consider a slightly more general setting that allows for nonlinear (but monotonic) response functions, they treat every coordinate as independent, and do not allow for information-sharing between coordinates---the key generalization the linear bandit setting targets. We remark as well that in our setting, unlike these works, we allow the learner to query unsafe points during exploration, and only require that they output a safe decision at termination.

\paragraph{Best-Arm Identification in Linear Bandits.}
The best-arm identification problem in multi-armed bandits (without safety constraints) is a classical and well-studied problem \citep{bechhofer1958sequential,paulson1964sequential,even2002pac,bubeck2009pure}, and near-optimal algorithms exist \citep{jamieson2014lil,kaufmann2016complexity}. More recently, there has been a growing interest in understanding the sample complexity of best-arm identification in linear bandits \citep{soare2014best,karnin2016verification,xu2018fully,fiez2019sequential,katz2020empirical,degenne2020gamification}. We highlight in particular the work of \cite{fiez2019sequential} which proposes an experiment-design based algorithm, \textsc{Rage}, that our approach takes inspiration from. While much progress has been made in understanding best-arm identification in linear bandits, to our knowledge, no existing works consider the setting of best-arm identification in linear bandits with safety constraints, the setting of this work.

\paragraph{Active Classification under FDR constraints} We finally mention one other related body of work---the problem of actively sampling to find a classifier with high accuracy or recall under precision constraints. Motivated by the experimental design approach of our main algorithm, \safebai, we provide a heuristic algorithm for this problem with good empirical performance in Section~\ref{sec:practical_exp}. There is an extensive body of work on active learning (see the survey~\cite{hanneke2014theory}) but only recently have works made the connection between best-arm identification for linear bandits and classification~\cite{katzsamuels2021improved, jain2020new, camilleri2021selective}. Precision constraints has been less studied in the adaptive context, we only know of \cite{jain2020new, bennett2017algorithms}. 



\section{Conclusion}
In this work we have shown that it is possible to efficiently find the best \emph{safe} arm in linear bandits with a carefully designed adaptive experiment design-based approach. Our results open up several interesting directions for future work.

\paragraph{Instance Optimality.} 
While \safebai is worst-case optimal, in \Cref{sec:lower_bounds} we show an instance-dependent lower bound which \safebai does not, in general, seem to hit. We conjecture that this lower bound may be loose---addressing this discrepancy and showing matching instance-dependent upper and lower bounds is an exciting direction for future work.

\paragraph{Safety During Exploration.}
Though there are many interesting applications where we may not require safety during exploration (i.e. only querying safe arms), in other cases we may need to ensure safety is met during exploration. Extending our work to this setting is an interesting open problem.

\paragraph{Potential Impacts.} As with any algorithm making stochastic assumptions, if assumptions are not met we can not guarantee the performance. In this case, one limitation is that if the underlying environment is changing (i.e. the constraints vary over time) the algorithm could have unexpected behavior with unintended consequences. Such a situation could lead to harmful results in examples such as the online advertising bidding example from the introduction. To mitigate this limitation of our setting, practitioners are encouraged to monitor many metrics, both short and long-term.

\subsection*{Acknowledgements}
The work of AW was supported by an NSF GFRP Fellowship DGE-1762114. The work of JM was supported by an NSF Career award, and NSF AI institute (IFML) and the Simons collaborative grant on the foundations of fairness. The work of KJ was funded in part by the AFRL and NSF TRIPODS 2023166.

\newpage
\bibliography{main}
\newpage
\appendix

\tableofcontents
\newpage

\newcommand{\Calt}{\mathcal{C}_{\mathrm{alt}}}

\section{Lower Bounds}\label{sec:lower_bounds}

\subsection{Oracle Lower Bound}

\begin{theorem}[Oracle Lower Bound]\label{thm:lb}
Let $\tau$ denote the stopping time for any $(0, \delta)$-PAC algorithm for pure exploration in safe linear bandits. Then
\begin{align*}
\frac{\E_{\thetast,\must}[\tau]}{\log \frac{1}{2.4 \delta}} &\ge \min_{\lambda \in \simplex_\cX} \max\left\{\max_{z \in \Z \setminus \zst}  \min \left \{ \frac{ \|z\|^2_{A(\lambda)^{-1}}}{ \pos(-\Delsafe(z))^2  }, \frac{\|z-\zst\|^2_{A(\lambda)^{-1}}}{ \pos(\Delta(z))^2} \right \}, \frac{\|\zst\|_{A(\lambda)^{-1}}^2}{(\zst^\top\must - \alpha)^2}\right\} .
\end{align*}
\end{theorem}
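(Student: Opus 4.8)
\emph{Step 1 (transportation inequality).} The plan is an information-theoretic change-of-measure lower bound in the style of Garivier--Kaufmann, exploiting the two qualitatively different ways an alternative instance can fool the learner: by making $\zst$ look unsafe, or by making some $z\neq\zst$ look both safe and higher-valued than $\zst$. I would fix any $(0,\delta)$-PAC algorithm with stopping time $\tau$, let $\cE$ be the event that it returns $\zst$, and write $\Calt$ for the set of alternative instances $\nu'=(\theta',\mu')$ on which $\zst$ is not the unique safe optimal arm. PAC-ness gives $\P_{\thetast,\must}[\cE]\ge 1-\delta$ and $\P_{\nu'}[\cE]\le\delta$ for every $\nu'\in\Calt$, so the standard change-of-measure lemma yields, for all $\nu'\in\Calt$,
\[
\sum_{x\in\cX}\E_{\thetast,\must}[T_x(\tau)]\,\KL_x(\nu,\nu') \;\ge\; \kl(1-\delta,\delta)\;\ge\;\log\tfrac{1}{2.4\delta},
\]
where $T_x(\tau)$ counts pulls of $x$ and $\KL_x$ is the divergence between observation laws at $x$; taking the worst-case ($1$-subGaussian) noise to be Gaussian gives $\KL_x(\nu,\nu')=\tfrac{1}{2}\langle\theta'-\thetast,x\rangle^2+\tfrac{1}{2}\langle\mu'-\must,x\rangle^2$. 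Setting $\lambda_x:=\E[T_x(\tau)]/\E[\tau]\in\simplex_\cX$ (the claim is vacuous if $\E[\tau]=\infty$), dividing by $\E[\tau]$, supremizing over $\nu'\in\Calt$, and — since the resulting $\lambda$ is an arbitrary element of $\simplex_\cX$ — lower-bounding by the minimum over $\simplex_\cX$, I would obtain
\[
\frac{\E_{\thetast,\must}[\tau]}{\log\tfrac{1}{2.4\delta}}\;\ge\;\min_{\lambda\in\simplex_\cX}\;\sup_{\nu'\in\Calt}\;\frac{1}{\sum_{x\in\cX}\lambda_x\,\KL_x(\nu,\nu')}.
\]
It then remains, for each fixed $\lambda$, to exhibit alternatives in $\Calt$ making the inner supremum at least the claimed maximum.

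\emph{Step 2 (explicit alternatives).} Fixing $\lambda$, I would use the elementary identity $\min\{v^\top A(\lambda)v : \langle v,y\rangle=c\}=c^2/\|y\|_{A(\lambda)^{-1}}^2$ (pseudoinverse on $\mathrm{span}(\cX)$ if $A(\lambda)$ is singular). The first alternative keeps $\theta'=\thetast$ and translates $\must$ minimally along $y=\zst$ so that $\zst^\top\mu'=\alpha$: it lies in $\Calt$ (now $\zst$ is unsafe) with cost $\sum_x\lambda_x\KL_x=\tfrac{1}{2}\,(\zst^\top\must-\alpha)^2/\|\zst\|_{A(\lambda)^{-1}}^2$, whose reciprocal is $\gtrsim\|\zst\|_{A(\lambda)^{-1}}^2/(\zst^\top\must-\alpha)^2$. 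For each $z\in\cZ\setminus\{\zst\}$, the second family is a single alternative $\nu'_z$ that simultaneously (a) translates $\thetast$ along $y=z-\zst$ so that $z^\top\theta'>\zst^\top\theta'$ — nontrivial only when $\Delta(z)>0$, with cost tending to $\tfrac{1}{2}\,\pos(\Delta(z))^2/\|z-\zst\|_{A(\lambda)^{-1}}^2$ — and (b) translates $\must$ along $y=z$ so that $z^\top\mu'\le\alpha$ — nontrivial only when $\Delsafe(z)<0$, with cost tending to $\tfrac{1}{2}\,\pos(-\Delsafe(z))^2/\|z\|_{A(\lambda)^{-1}}^2$. Under $\nu'_z$ the arm $z$ is safe and strictly outscores $\zst$, so $\zst$ is no longer the safe optimum and $\nu'_z\in\Calt$; an infinitesimal generic perturbation restores uniqueness of the safe optimum and makes the strict inequality exact in the limit.

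\emph{Step 3 (combine, and the main obstacle).} Since the $\theta'$- and $\mu'$-translations in $\nu'_z$ perturb disjoint blocks of the observation law, $\sum_x\lambda_x\KL_x(\nu,\nu'_z)$ equals the sum of the two costs above, so its reciprocal is exactly the harmonic mean of $\|z-\zst\|_{A(\lambda)^{-1}}^2/\pos(\Delta(z))^2$ and $\|z\|_{A(\lambda)^{-1}}^2/\pos(-\Delsafe(z))^2$, hence at least their minimum (a vanishing denominator making that term $+\infty$). Combining this with the $\zst$-alternative gives, for each fixed $\lambda$,
\[
\sup_{\nu'\in\Calt}\frac{1}{\sum_{x}\lambda_x\KL_x(\nu,\nu')}\;\ge\;\max\left\{\max_{z\in\cZ\setminus\{\zst\}}\min\left\{\frac{\|z\|_{A(\lambda)^{-1}}^2}{\pos(-\Delsafe(z))^2},\;\frac{\|z-\zst\|_{A(\lambda)^{-1}}^2}{\pos(\Delta(z))^2}\right\},\;\frac{\|\zst\|_{A(\lambda)^{-1}}^2}{(\zst^\top\must-\alpha)^2}\right\},
\]
and plugging this into Step 1 (absorbing the factor $2$ from the Gaussian KL into the displayed constant) proves the theorem. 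I expect the main obstacle to be Step 2: verifying that each perturbation genuinely lands in $\Calt$ — in particular that ``$z$ safe and outscoring $\zst$'' forces the safe optimum to change, plus handling ties and a non-spanning $\cX$ — and, in the multi-constraint case $m>1$, controlling the minimum-norm perturbation that must render \emph{all} violated constraints of $z$ feasible at once (reading $\Delsafe(z)$ as $\min_i\Delsafe^i(z)$). The harmonic-mean-$\ge$-minimum step is what produces the clean $\min\{\cdot,\cdot\}$ structure, and it relies on the factor $\tfrac{1}{2}$ in the Gaussian KL.
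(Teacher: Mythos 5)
Your proposal follows essentially the same route as the paper's proof: invoke the transportation lemma of Kaufmann et al.\ to reduce to minimizing the Gaussian KL cost $\tfrac12\|\theta'-\thetast\|_{A(\lambda)}^2 + \tfrac12\|\mu'-\must\|_{A(\lambda)}^2$ over the alternative set $\Calt$, split $\Calt$ into the ``$\zst$ becomes unsafe'' piece and the ``some $z$ becomes safe and outscores $\zst$'' piece, compute the minimum-distance perturbation for each, and convert the sum of the two per-$z$ costs into a $\min$ via the harmonic-mean inequality (which, as you note, is exactly where the $\tfrac12$ from the Gaussian KL is absorbed). The only substantive difference is presentational: the paper computes the projection onto $\Calt$ \emph{exactly} via Lagrangian duality (\Cref{lem:lb_alg_set_proj} and \Cref{lem:lb_opt_sln}), getting a clean sum of squared distances as the closed-form minimum, whereas you exhibit the extremal perturbations directly and argue that this upper-bounds the minimum KL — which is all a lower bound actually needs. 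Your observations about the ``main obstacles'' are accurate but not really gaps: the boundary issues (strict vs.\ non-strict inequalities) are handled in the limit, and indeed the paper's proof is implicitly written for $m=1$ (a single safety constraint, hence $\Delsafe(z)$ without an index and a single $\mu$), so the multi-constraint subtlety you flag does not arise in the paper's argument either.
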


\paragraph{Comparing Complexity with \Cref{thm:main_complexity_nice}.}

In the single-constraint setting, the complexity of \safebai reduces to 
\begin{align*}
& C \cdot \sup_{\epstil \ge \epsilon} \inf_{\lambda \in \simplex_\cX} \max_{z \in \cZ} \frac{ \| z \|_{A(\lambda)^{-1}}^2 \cdot \log(\frac{m | \cZ |}{\delta})}{\big (  |\Delsafe(z)|   + \pos(\Delta^{\epstil}(z)) + \epstil \big )^2} \\
& \qquad + C \cdot \sup_{\epstil \ge \epsilon} \inf_{\lambda \in \simplex_\cX} \max_{z \in \cZ} \frac{\| z - \zst \|_{A(\lambda)^{-1}}^2 \cdot \log(\tfrac{ |\cZ|}{\delta})}{\big (  \pos( - \Delsafe(z)) + \pos ( \Delta^{\epstil}(z)) + \epstil \big ) ^2} + C_0 
\end{align*}
Consider the case when $\Delta^{\epstil}(z)$ is ``smooth'' in $\epstil$, in the sense that $\Delta^{\epstil}(z) \ge \Delta(z) - \epstil$. This condition corresponds to the case, for example, where $\zst$ has a large safety gap (in which case we simply have $\Delta^{\epstil}(z) = \Delta(z)$ for moderate values of $\epstil$), or where $\zst$ might have a small safety gap, but where there are arms placed at even intervals so that, as we let the safety gap get smaller, we are always able to find better arms. Under this assumption, the complexity can be upper bounded as
\begin{align*}
& C \cdot \inf_{\lambda \in \simplex_\cX} \max_{z \in \cZ} \frac{ \| z \|_{A(\lambda)^{-1}}^2 \cdot \log(\frac{m | \cZ |}{\delta})}{\big (  |\Delsafe(z)|   + \pos(\Delta(z)) \big )^2}  + C \cdot \inf_{\lambda \in \simplex_\cX} \max_{z \in \cZ \backslash \zst} \frac{\| z - \zst \|_{A(\lambda)^{-1}}^2 \cdot \log(\tfrac{ |\cZ|}{\delta})}{\big (  \pos( - \Delsafe(z)) + \pos ( \Delta(z))  \big ) ^2} + C_0 \\
& \le C \cdot \inf_{\lambda \in \simplex_\cX} \max_{z \in \cZ}  \frac{ \| z \|_{A(\lambda)^{-1}}^2 \cdot \log(\frac{m | \cZ |}{\delta})}{\big (  |\Delsafe(z)|   + \pos(\Delta(z)) \big )^2}  + C \cdot \inf_{\lambda \in \simplex_\cX} \max_{z \in \cZ \backslash \zst} \frac{\| z - \zst \|_{A(\lambda)^{-1}}^2 \cdot \log(\tfrac{ |\cZ|}{\delta})}{\big (  \pos( - \Delsafe(z)) + \pos ( \Delta(z))  \big ) ^2} + C_0 
\end{align*}
which can be upper bounded as
\begin{align*}
C \log(\tfrac{m | \cZ |}{\delta}) \cdot \left (  \inf_{\lambda \in \simplex_\cX} \max_{z \in \cZ}  \frac{ \| z \|_{A(\lambda)^{-1}}^2}{\max \{ \Delsafe(z)^2, \pos(\Delta(z))^2 \}} + \inf_{\lambda \in \simplex_\cX} \max_{z \in \cZ \backslash \zst}  \frac{\| z - \zst \|_{A(\lambda)^{-1}}^2}{\max \{ \pos( - \Delsafe(z))^2, \pos ( \Delta(z))^2 \}}  \right ) + C_0 .
\end{align*}
While this does not match the lower bound of \Cref{thm:lb} exactly, it scales in a similar manner. As in \Cref{thm:lb}, we pay only for the larger of the optimality gap, $\pos ( \Delta(z))$, and safety gap $\pos( - \Delsafe(z))$ (if the arm is unsafe). The primary difference between \Cref{thm:lb} and this complexity are the terms in the numerator---in \Cref{thm:lb}, the numerator scales as $\| z - \zst \|_{A(\lambda)^{-1}}^2$ only if an arm is easier to eliminate by showing it is suboptimal, while in our complexity it could scale this way in either case.

The primary difficulty in hitting the lower bound exactly is that \Cref{thm:lb} is a \emph{verification} lower bound. It assumes knowledge of the best arm, and is told whether every other arm has smaller safety gap (if the arm is unsafe) or optimality gap. It can therefore simply use this knowledge to focus all samples on verifying an arm is either unsafe, or suboptimal. 

In practice, we do not have access to such information. Without knowing whether it is easier to eliminate an arm by showing it is unsafe or suboptimal, the best we can hope to do is to seek to estimate both the safety value and reward value of every arm, until we have estimated one well enough to show the arm is suboptimal or unsafe.

We conjecture that the lower bound of \Cref{thm:lb} is loose, and that \Cref{thm:main_complexity_nice} is nearly optimal. 
We believe the gap arises because, as noted, lower bound proof techniques, such as those proposed in \cite{kaufmann2016complexity}, which is what we rely on to prove \Cref{thm:lb}, are lower bounding only the complexity of verifying the optimal solution. In problem settings such as ours where the \emph{order} matters---where we will obtain a very different rate if we focus our attention on one arm versus another, to show it is safe or unsafe---such techniques appear insufficient to obtain a tight lower bound. Indeed, we conjecture that a ``moderate-confidence'' lower bound can be shown using techniques from \cite{simchowitz2017simulator}, and that such a lower bound may have a complexity nearly matching that of \Cref{thm:main_complexity_nice}. We leave proving this for future work.

\begin{proof}[Proof of \Cref{thm:lb}]
Following the proof of Theorem 1 of \cite{fiez2019sequential} and applying the Transportation Lemma of \cite{kaufmann2016complexity}, we have that any $\delta$-PAC algorithm must satisfy
\begin{align*}
\sum_{x \in \cX} \Exp[T_x] \ge \log \frac{1}{2.4\delta} \cdot \inf_{\lambda \in \simplex_\cX} \frac{1}{\min_{(\theta,\mu) \in \Calt} \sum_{x \in \cX} \lambda_x \KL(\nu_{(\thetast,\must),i} || \nu_{(\theta,\mu),i})}
\end{align*}
there $T_x$ denotes the number of pulls to arm $x$, and $\Calt$ is the set of alternate instances defined in \Cref{lem:lb_alg_set_proj}. As we assume that the noise is $\cN(0,1)$, and since the noise is independent for the safety observations and reward observations, we have
\begin{align*}
\KL(\nu_{(\thetast,\must),i} || \nu_{(\theta,\mu),i}) = \frac{1}{2} (x_i^\top (\thetast - \theta))^2 +\frac{1}{2} (x_i^\top (\must - \mu))^2.
\end{align*}
Some algebra shows that
\begin{align*}
 \sum_{x \in \cX} \lambda_x \KL(\nu_{(\thetast,\must),i} || \nu_{(\theta,\mu),i}) = \frac{1}{2} \| \thetast - \theta \|_{A(\lambda)}^2 + \frac{1}{2} \| \must - \mu \|_{A(\lambda)}^2.
\end{align*}
The result then follows by applying \Cref{lem:lb_alg_set_proj} to compute
\begin{align*}
\min_{(\theta,\mu) \in \Calt} \frac{1}{2} \| \thetast - \theta \|_{A(\lambda)}^2 + \frac{1}{2} \| \must - \mu \|_{A(\lambda)}^2.
\end{align*}
\end{proof}

\begin{lemma}\label{lem:lb_alg_set_proj}
Define the alternate set:
\begin{align*}
    \Calt = \{(\theta, \mu) \;\text{s.t.}\; \mu^\top z^* > \alpha\}\cup \{(\theta, \mu) \;\text{s.t.}\; \exists z'\neq z^*, \mu^\top z' \leq \alpha, \theta^\top(z^*-z')\leq 0\},
\end{align*}
Then the projection to the alternate is
\begin{align*}
&\min_{(\theta, \mu)\in \Calt}\|\theta - \thetast\|_{A(\lambda)}^2 + \|\mu - \must\|_{A(\lambda)}^2 = \min\left\{\min_{z\neq z^*}\frac{\pos(z^\top \must - \alpha)^2 }{ \|z\|^2_{A(\lambda)^{-1}}} + \frac{\pos((\zst-z)^\top \thetast)^2}{\|z-\zst\|^2_{A(\lambda)^{-1}}},\frac{(\zst^\top \must - \alpha)^2}{\|\zst\|^2_{A(\lambda)^{-1}}}\right\}.
\end{align*}
\end{lemma}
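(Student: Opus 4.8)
The plan is to exploit the fact that $\Calt$ is an explicit \emph{union} of convex sets, on each of which the constraint on $\theta$ and the constraint on $\mu$ decouple, so that the whole computation reduces to elementary projections of a point onto a half-space in the $A(\lambda)$-norm. Recall the basic fact I will use repeatedly: for a positive definite $A$, a point $v_0$, and a half-space $\{v : a^\top v \le b\}$, one has $\min\{\|v-v_0\|_A^2 : a^\top v \le b\} = \pos(a^\top v_0 - b)^2/\|a\|_{A^{-1}}^2$ (zero if $v_0$ is already feasible; otherwise obtained by writing $v = v_0 + t A^{-1}a$ and solving $a^\top v = b$).

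First I would write $\Calt = \Calt^{0} \cup \bigcup_{z' \neq \zst} \Calt^{z'}$ with $\Calt^{0} = \{(\theta,\mu): \mu^\top \zst > \alpha\}$ and $\Calt^{z'} = \{(\theta,\mu): \mu^\top z' \le \alpha,\ \theta^\top(\zst - z') \le 0\}$, so the projection onto $\Calt$ is the minimum of the projections onto these pieces. On $\Calt^{0}$ the variable $\theta$ is unconstrained, so the optimal choice is $\theta=\thetast$ and that term vanishes; what remains is $\inf\{\|\mu-\must\|_{A(\lambda)}^2 : \zst^\top\mu > \alpha\}$. The infimum over this open half-space equals, by continuity, the squared $A(\lambda)$-distance from $\must$ to the boundary hyperplane $\{\mu:\zst^\top\mu = \alpha\}$, which the parametrization argument above evaluates to $(\alpha - \zst^\top\must)^2/\|\zst\|_{A(\lambda)^{-1}}^2$; since $\zst$ is safe ($\zst^\top\must \le \alpha$) this is exactly $(\zst^\top\must - \alpha)^2/\|\zst\|_{A(\lambda)^{-1}}^2$, the second term in the claimed minimum.

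Next, on each $\Calt^{z'}$ the constraint $\mu^\top z' \le \alpha$ involves only $\mu$ and the constraint $\theta^\top(\zst - z')\le 0$ involves only $\theta$, so the objective $\|\theta-\thetast\|_{A(\lambda)}^2 + \|\mu-\must\|_{A(\lambda)}^2$ separates and its minimum over $\Calt^{z'}$ is the sum of the two one-dimensional projection values. Applying the half-space formula with $a=z'$, $b=\alpha$, $v_0=\must$ gives $\pos(z'^\top\must - \alpha)^2/\|z'\|_{A(\lambda)^{-1}}^2$, and with $a = \zst - z'$, $b=0$, $v_0=\thetast$ gives $\pos((\zst - z')^\top\thetast)^2/\|z'-\zst\|_{A(\lambda)^{-1}}^2$. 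Summing these and then minimizing over $z'\neq\zst$, and finally taking the minimum against the $\Calt^{0}$ value computed above, yields precisely the stated expression.

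I do not expect a serious obstacle here; the only points that require care are (i) justifying that the infimum over the open half-space appearing in $\Calt^{0}$ coincides with the projection onto its closure (continuity of the objective), and (ii) keeping track of the $\pos(\cdot)$ truncations, i.e.\ that a half-space constraint already satisfied by the base point contributes zero — this is what makes the $\pos$ appear on the $z'\neq\zst$ terms while the safety of $\zst$ fixes the sign and removes it from the $\Calt^{0}$ term. It is also worth remarking that one may minimize over \emph{all} $z'\neq\zst$ rather than only the ``plausible'' competitors (unsafe and/or higher-value arms), since $\Calt$ is literally the union over all such $z'$ and any spurious competitor merely contributes a larger value that is discarded by the outer minimum.
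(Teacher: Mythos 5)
Your proof is correct and uses essentially the same union decomposition as the paper: the paper splits $\Calt$ into two pieces $A$ and $B$, you further split $B$ as a union over $z' \neq \zst$, which the paper does implicitly when it writes the final minimum over $z \neq \zst$. The execution differs, though. The paper lifts $(\theta,\mu)$ to a stacked vector $\kappa \in \R^{2n}$, encodes the two constraints as a $2 \times 2n$ linear system $A_z\kappa \le b$, and invokes a Lagrangian-derived quadratic-projection formula (\Cref{lem:lb_opt_sln}) stated for a generic matrix $A$ and generic $\Gamma$. You instead observe that the two constraints in $\Calt^{z'}$ decouple --- one involves only $\theta$, the other only $\mu$ --- so the projection separates into two independent one-dimensional half-space distances, which you evaluate directly. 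This is arguably the cleaner route, and it is also the more careful one: the formula in \Cref{lem:lb_opt_sln} sets the dual variable to $(A\Gamma^{-1}A^\top)^{-1}\pos(A\kappa_* - b)$, but this clipped closed form is only valid when $A\Gamma^{-1}A^\top$ is diagonal; for a general coupling matrix the nonnegativity-constrained dual maximizer does not take that simple form. Here $A_z\Gamma^{-1}A_z^\top$ happens to be diagonal precisely because of the decoupling you made explicit, so the paper's answer is right, but the paper never notes why the formula applies. Your derivation makes that structure transparent and sidesteps the issue. The one point you flagged informally and should state outright is that the infimum over the open half-space $\Calt^0 = \{\mu^\top \zst > \alpha\}$ coincides with the projection onto its closure; continuity of the objective handles it.
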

\begin{proof}
For each arm $x$ the associated and we want to solve
$$
\min_{(\theta, \mu)\in \Calt}\|\theta - \thetast\|^2_{\sum_{x\in\X}\lambda_x xx^\top} + \|\mu - \must\|^2_{\sum_{x\in\X}\lambda_x xx^\top}.
$$

To do so, we use that $\min_{x\in A\cup B}f(x) = \min_{S \in \{A, B\}}\min_{x\in S}f(x)$ on the quadratic objective by defining the sets
$$
A := \{(\theta, \mu) \;\text{s.t.}\; \mu^\top z^* > \alpha\} \;, \; B =  \{(\theta, \mu) \;\text{s.t.}\; \exists z'\neq z^*, \mu^\top z' \leq \alpha, \theta^\top(z^*-z')\leq 0\},
$$
such that their union is $A\cup B = \Calt$.

Note that we know from \cite{mason2021nearly} that
$$
\min_{(\theta, \mu)\in A}\|\theta - \thetast\|^2_{\sum_{x\in\X}\lambda_x xx^\top} + \|\mu - \must\|^2_{\sum_{x\in\X}\lambda_x xx^\top} = \frac{(\zst^\top \must - \alpha)^2}{\|\zst\|^2_{A(\lambda)^{-1}}}.
$$

We now lift $B$ to a set $\text{lift}(B)$ that is defined as
$$\text{lift}(B) = \{[\theta, \mu] \;\text{s.t.}\; \exists z'\neq z^*, [\theta, \mu]^\top [(z^*-z'), 0; 0, z'] \leq [0,\alpha]\}. $$
Thus we can focus on $D_z = \{ \kappa \in \R^{2n}\;\text{s.t.}\; A_z \kappa  \leq b \}$ where $A_z = [(z^*-z'), 0; 0, z'] \in R^{2\times 2n}$. Now we want to solve
$$
\min_{z\in\Z\setminus\{\zst\}}\min_{\kappa \in \R^{2n}\;\text{s.t.}\; A_z \kappa \leq b}\|\kappa - \kappa_*\|_{\Gamma},
$$
where $\Gamma = I_2 \bigotimes
\left(\sum_{x\in\X}\lambda_x xx^\top\right)$.

\begin{lemma}\label{lem:lb_opt_sln}
The optimal solution of
\begin{align*}
    \min_{\kappa \in \R^{2n}\;\text{s.t.}\; A \kappa \leq b}\frac{\|\kappa - \kappa_*\|_{\Gamma}}{2}
\end{align*}
is $\kappa_0 = \kappa_* - \Gamma^{-1}A^\top(A \Gamma^{-1}A^\top)^{-1}\{ A \kappa_* - b \}_+$ and the optimal value is 
\begin{align*}
    \frac{1}{2} \pos( A \kappa_* - b )^\top(A \Gamma^{-1}A^\top)^{-1} \pos(A \kappa_* - b ),
\end{align*}
where $\pos(\cdot)$ is applied element-wise to $A \kappa_* - b$. 
\end{lemma}

This translate to 
\begin{align*}
\min_{(\theta, \mu)\in B}\|\theta - \thetast\|^2_{\sum_{x\in\X}\lambda_x xx^\top} + \|\mu - \must\|^2_{\sum_{x\in\X}\lambda_x xx^\top} = \min_{z\neq z^*}\frac{\pos(z^\top \must - \alpha)^2 }{ \|z\|^2_{A(\lambda)^{-1}}} + \frac{\pos((\zst-z)^\top \thetast)^2}{\|z-\zst\|^2_{A(\lambda)^{-1}}},
\end{align*}
and we get the desired result.

\end{proof}
\begin{proof}[Proof of \Cref{lem:lb_opt_sln}]
Consider the Lagrangian
\begin{align*}
    \L(\kappa, \mu) &= \frac{1}{2}(\kappa - \kappa_*)^\top \Gamma (\kappa - \kappa_*) + \mu^\top( A \kappa - b) \\
    \L(\kappa', \mu) &= \frac{1}{2}\kappa'^\top \Gamma \kappa' + \mu^\top( A \kappa' - b + A \kappa_*) 
\end{align*}
minimized at $\kappa'_0 = - \Gamma^{-1}A^\top \mu$. We have 
\begin{align*}
    \max_{\mu\geq 0}\min_{\kappa'}\L(\kappa', \mu) &= \max_{\mu\geq 0}\frac{1}{2}(\Gamma^{-1}A^\top \mu)^\top \Gamma (\Gamma^{-1}A^\top \mu) + \mu^\top( - A \Gamma^{-1}A^\top \mu - b + A \kappa_*) \\
    &= \max_{\mu\geq 0}-\frac{1}{2}\mu^\top A\Gamma^{-1}A^\top \mu + \mu^\top( A \kappa_* - b )
\end{align*}
maximized at $\mu_0 = (A\Gamma^{-1} A\top)^{-1}\{ A \kappa_* - b \}_+$ where $\{[b_1, b_2]\}_+ = [\max\{b_1, 0\}, \max\{b_2, 0\}]$. Plugging $\mu_0$ back in the solution $\kappa'_0$, we get the solution $\kappa_0$
\begin{align*}
    \kappa_0 = \kappa_* - \Gamma^{-1}A^\top(A \Gamma^{-1}A^\top)^{-1}\{ A \kappa_* - b \}_+
\end{align*}
and the optimal value follows.
\end{proof}

\subsection{Proof of \Cref{prop:xy_fails}}

\begin{proof}[Proof of \Cref{prop:xy_fails}] \
\paragraph{Proof for $\mc{I}^1$.}
Fix $\alpha  \in (0,0.1)$ and consider the following instance with $m = 1$:
\begin{align*}
& \cX = \{ e_1, e_2 \}, \quad \cZ = \{ z_1, z_2 \}, \quad z_1 = [1/4,1/2], \quad z_2 = [3/4,1/2+\alpha] \\
& \thetast = e_1, \quad \must = [0,1], \quad \gamma = 1/2 + \alpha/2.
\end{align*}
On this example, $z_1$ is safe and $z_2$ is unsafe with $\Delsafe(z_2) = -\alpha/2$. 

Let $A(\lambda) = \lambda_1 e_1 e_1^\top + \lambda_2 e_2 e_2^\top$ denote the design matrix. Then the allocation that minimizes \XYdiff:
\begin{align*}
\max_{z,z' \in \cZ} \| z - z' \|_{A(\lambda)^{-1}}^2 = \frac{1}{4 \lambda_1} + \frac{\alpha^2}{ \lambda_2} 
\end{align*}
is 
\begin{align*}
\lambda_1 = \frac{1}{1 + 2 \alpha}, \quad \lambda_2 = \frac{2\alpha}{1 + 2 \alpha}.
\end{align*}
Denote this allocation as $\lamtil$. 

Applying the Transportation Lemma of \cite{kaufmann2016complexity}, this implies that any $\delta$-PAC strategy must have
\begin{align*}
\Exp[T_1] \KL(\nu_{(\thetast,\must),1} ||\nu_{(\theta,\mu),1}) + \Exp[T_2] \KL(\nu_{(\thetast,\must),2} ||\nu_{(\theta,\mu),2}) \ge \log \frac{1}{2.4\delta}
\end{align*}
for all $(\theta,\mu) \in \Calt$, where $\Calt$ is defined as in \Cref{lem:lb_alg_set_proj}.
If a learner plays $\lamtil$ for $T$ steps, they will have $\Exp[T_1] = \lambda_1 \Exp[T], \Exp[T_2] = \lambda_2 T$. In this case, the above can be rewritten as
\begin{align*}
\Exp[T] & \ge \log \frac{1}{2.4\delta} \cdot \frac{1}{\lambda_1 \KL(\nu_{(\thetast,\must),1} ||\nu_{(\theta,\mu),1}) + \lambda_2 \KL(\nu_{(\thetast,\must),2}||\nu_{(\theta,\mu),2})} \\
& =  \log \frac{1}{2.4\delta} \cdot \frac{2}{\| \thetast - \theta \|_{A(\lamtil)}^2 + \| \must - \mu \|_{A(\lamtil)}^2}.
\end{align*}
where the equality follows by the same calculation as in the proof of \Cref{thm:lb}. Take $(\theta,\mu)$ to be $\theta = \thetast$, $\mu = [0,1 - \frac{\alpha}{1+2\alpha}]$ and note that $(\theta,\mu) \in \Calt$ since with this choice of $\mu$, arm $z_2$ is now safe. Now,
\begin{align*}
\| \must - \mu \|_{A(\lamtil)}^2 = (\frac{\alpha}{1+2\alpha})^2 \cdot \frac{2 \alpha}{1 + 2 \alpha} = \frac{2 \alpha^3}{(1+2\alpha)^3}.
\end{align*}
This gives a lower bound of
\begin{align*}
\Exp[T] \ge \log \frac{1}{2.4\delta} \cdot \frac{2 (1+2\alpha)^3 }{2 \alpha^3} \ge \log \frac{1}{2.4\delta} \cdot \frac{1}{\alpha^4}.
\end{align*}
This lower bound is for best-arm identification ($\epsilon = 0$), but setting $\alpha \leftarrow 2 \epsilon - a$ for $a$ arbitrarily small, identifying an $\epsilon$-optimal, $\epsilon$-safe arm is equivalent to identifying the best arm, so this therefore holds as a lower bound on $(\epsilon,\delta)$-PAC algorithms.

The upper bound on the performance of \safebai follows trivially since by setting $\lambda_1 = \lambda_2$, we can make the numerator in both terms of the complexity $\cO(1)$, and the denominator of each term will be at least $\epsilon^2$.

\paragraph{Proof for $\mc{I}^2$.}
Fix $\alpha  \in (0,0.1)$ and consider the following instance with $m = 1$:
\begin{align*}
& \cX = \{ e_1, e_2 \}, \quad \cZ = \{ z_1, z_2 \}, \quad z_1 = [1/2+\alpha^2/2,0], \quad z_2 = [1/2,\alpha/2] \\
& \thetast = [1/2,0], \quad \must = [0,0], \quad \gamma = 1.
\end{align*}
On this instance, both $z_1$ and $z_2$ are safe, and $z_1$ is optimal. 

The \XYsafe design minimizes:
\begin{align*}
\max_{z \in \cZ} \| z \|_{A(\lambda)^{-1}}^2 = \max \left \{ \frac{1+2\alpha+\alpha^2}{4\lambda_1}, \frac{1}{4\lambda_1} + \frac{\alpha^2}{4\lambda_2} \right \} .
\end{align*}
Some computation shows that, for $\alpha$ small, the optimal settings are $\lambda_1 = \cO(1)$ and $\lambda_2 = \cO(\alpha)$ (where here $\cO(\cdot)$ hides terms that are $o(\alpha)$).
Denote this allocation as $\lamtil$. Following the same argument as above, we have
\begin{align*}
\Exp[T] & \ge   \log \frac{1}{2.4\delta} \cdot \frac{2}{\| \thetast - \theta \|_{A(\lamtil)}^2 + \| \must - \mu \|_{A(\lamtil)}^2}
\end{align*}
for any $(\mu,\theta) \in \Calt$. Let $\theta = [1/2,2\alpha]$ and note that $(\must,\theta) \in \Calt$ since $z_2$ is now the optimal arm with this $\theta$. We then have
\begin{align*}
\| \thetast - \theta \|_{A(\lamtil)}^2 + \| \must - \mu \|_{A(\lamtil)}^2 = \cO(\alpha^3)
\end{align*}
which gives a lower bound of
\begin{align*}
\Exp[T] \ge \Omega \left ( \frac{1}{\alpha^3} \cdot \log \frac{1}{\delta} \right ).
\end{align*}
This lower bound holds for the best-arm identification problem, but setting $\alpha  \leftarrow \sqrt{2\epsilon} - a$ for $a$ arbitrarily small, finding an $\epsilon$-optimal arm is equivalent to finding the best arm, so the lower bound applies in that setting as well. 

To compute the sample complexity of \safebai, we note that $\Delsafe(z_1) = \Delsafe(z_2) = 1$, so the first term in the complexity is negligible. We also have that $\Delta^{\epstil}(z_2) = \alpha^2/2 = \cO(\epsilon)$ for $\epstil \le 1$. Thus, the second term in the complexity scales as
\begin{align*}
\cOtil \left ( \inf_{\lambda \in \simplex_\cX} \max_{z,z' \in \cZ} \frac{\| z - z' \|_{A(\lambda)^{-1}}^2 \cdot \log 1/\delta}{\epsilon^2} \right ) & = \cOtil \left ( \inf_{\lambda \in \simplex_\cX}  \frac{(\alpha^4/\lambda_1 + \alpha^2/\lambda_2) \cdot \log 1/\delta}{\epsilon^2} \right ) \\
& = \cOtil \left ( \frac{\alpha^2 \cdot \log 1/\delta}{\epsilon^2} \right ) \\
& = \cOtil \left ( \frac{\log 1/\delta}{\epsilon} \right ). 
\end{align*}
\end{proof}

\section{Robust Mean Estimation}\label{sec:rips}
In order to form estimates of $z^\top \thetast$ and $z^\top \musti$, we will rely on the RIPS procedure proposed in \cite{camilleri2021highdimensional}, instantiated with the robust Catoni estimator \cite{catoni2012challenging}. 

\paragraph{Catoni Estimation.}
The robust Catoni mean estimator proposed in \cite{catoni2012challenging} is defined as follows.

\begin{definition}[Catoni Estimator]
Consider real values $X_1,\ldots,X_T$. Then the \emph{robust Catoni mean estimator}, $\cat_\alpha[\{X_t\}_{t=1}^T]$, with parameter $\alpha > 0$ is the unique root $z$ of the function
\begin{align*}
f_{\cat}(z; \{ X_i \}_{i=1}^T, \alpha) := \sum_{t=1}^T \psi_{\cat}(\alpha(X_t - z)) \quad \text{for} \quad \psi_{\cat}(y) := \begin{cases} \log(1 + y + y^2) & y \ge 0 \\
\log(1-y+y^2) & y < 0 \end{cases}.
\end{align*}
The Catoni estimator satisfies the following guarantee.

\begin{proposition}\label{prop:catoni}
Let $X_1,\ldots,X_T$ be independent, identically distributed random variables with mean $\zeta$ and variance $\sigma^2 < \infty$. Fix $\delta \in (0,1)$ and assume $T \ge 4 \log (1/\delta)$. Then the Catoni estimator $\cat_{\alpha}[\{X_t\}_{t=1}^T]$ with parameter
\begin{align}\label{cat:alpha}
\alpha = \sqrt{\frac{2 \log 1/\delta}{T \sigma^2}}
\end{align}
satisfies, with probability at least $1-2\delta$,
\begin{align*}
| \cat_{\alpha}[\{X_t\}_{t=1}^T] - \zeta | \le \sqrt{\frac{8 \sigma^2 \log 1/\delta}{T }}.
\end{align*}
\end{proposition}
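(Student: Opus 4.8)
This is a classical result of \cite{catoni2012challenging} (used as a black box in \cite{camilleri2021highdimensional}); the plan is to reproduce the standard three-step argument: (i) a log-quadratic envelope for the influence function $\psi_{\cat}$, (ii) a one-sided Chernoff bound on $f_{\cat}(z;\{X_t\},\alpha)$ evaluated at two shifted points, and (iii) a monotonicity argument turning these into a two-sided deviation bound for the root. Write $\zeta := \Exp[X_1]$, $\sigma^2 := \Var(X_1)$, $\hat\zeta := \cat_\alpha[\{X_t\}_{t=1}^T]$, and $r(z) := \sum_{t=1}^T \psi_{\cat}(\alpha(X_t - z)) = f_{\cat}(z;\{X_t\},\alpha)$, so that $\hat\zeta$ is by definition a root of $r$.

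First I would record the elementary facts about $\psi_{\cat}$: it is continuous, strictly increasing, and obeys a log-quadratic sandwich, i.e.\ for an absolute constant $c$,
\[ -\log\!\big(1 - y + c\,y^2\big) \;\le\; \psi_{\cat}(y) \;\le\; \log\!\big(1 + y + c\,y^2\big) \qquad \text{for all } y \in \R, \]
which follows directly from the definition together with $(1+y+c\,y^2)(1-y+c\,y^2)\ge 1$. Since $\psi_{\cat}$ is increasing, $z \mapsto r(z)$ is continuous and strictly decreasing, with $r(z)\to +\infty$ as $z \to -\infty$ and $r(z)\to -\infty$ as $z\to +\infty$; hence $\hat\zeta$ is the unique root, and for every $z_0$ one has the inclusions $\{\hat\zeta \ge z_0\} \subseteq \{r(z_0) \ge 0\}$ and $\{\hat\zeta \le z_0\} \subseteq \{r(z_0) \le 0\}$.

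Exponentiating the right-hand envelope with $y = \alpha(X_t - z)$, taking expectations, and using only $\Exp[X_t] = \zeta$ and $\Exp[(X_t-\zeta)^2] = \sigma^2$ gives $\Exp[e^{\psi_{\cat}(\alpha(X_t - z))}] \le 1 + \alpha(\zeta - z) + c\,\alpha^2(\sigma^2 + (\zeta - z)^2)$. By independence and $1+u\le e^u$, a Chernoff bound yields
\[ \Pr[r(z) \ge 0] \;\le\; \prod_{t=1}^T \Exp\big[e^{\psi_{\cat}(\alpha(X_t-z))}\big] \;\le\; \exp\!\Big(T\alpha(\zeta - z) + c\,T\alpha^2\big(\sigma^2 + (\zeta - z)^2\big)\Big). \]
Taking $z = \zeta + b$ with $b = \sqrt{8\sigma^2 \log(1/\delta)/T}$, substituting $\alpha$ from \eqref{cat:alpha}, and using $T \ge 4\log(1/\delta)$, one checks the exponent is at most $-\log(1/\delta)$, so $\Pr[\hat\zeta \ge \zeta + b] \le \Pr[r(\zeta+b) \ge 0] \le \delta$. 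Running the identical computation with the left-hand envelope applied to $-\psi_{\cat}$ at $z = \zeta - b$ gives $\Pr[\hat\zeta \le \zeta - b] \le \delta$, and a union bound yields $\Pr[|\hat\zeta - \zeta| \le b] \ge 1 - 2\delta$, which is the claim.

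The only genuinely delicate ingredient is the log-quadratic envelope and the exponential-moment bound it produces — this is exactly the device that makes the Catoni estimator behave sub-Gaussianly under only a finite-variance hypothesis; everything downstream is a Chernoff bound plus bookkeeping. The one place where the bookkeeping needs care is the constant-chasing in the last display: with the tightest form of the envelope (the variant with $y^2/2$, as in \cite{catoni2012challenging}) the choice $\alpha = \sqrt{2\log(1/\delta)/(T\sigma^2)}$ together with $T \ge 4\log(1/\delta)$ makes the exponent land at $-\log(1/\delta)$ with the stated constant $8$, so I would use that version to avoid losing factors.
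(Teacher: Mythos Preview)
Your proposal is correct and reproduces exactly the standard three-step Catoni argument (log-quadratic envelope, Chernoff bound, monotonicity of $r$). The paper does not give its own proof of this proposition: it is stated as a black-box result imported from \cite{catoni2012challenging} (and used downstream via \cite{camilleri2021highdimensional}), so there is nothing to compare against beyond noting that your sketch matches the original source. Your remark about needing the $y^2/2$ variant of the envelope to make the constants land on $8$ with $T\ge 4\log(1/\delta)$ is accurate and worth keeping, since the paper's displayed definition of $\psi_{\cat}$ uses $y^2$ rather than $y^2/2$; the stated constants in the proposition correspond to the latter.
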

\end{definition}

Notably, the estimation error given by \Cref{prop:catoni} scales only with the variance of the random variables, and not with their magnitude.

\paragraph{Robust Inverse Propensity Score (RIPS) Estimator.}
We apply the Catoni estimator with the RIPS estimator of \cite{camilleri2021highdimensional}. In particular, consider running the following procedure.


\begin{algorithm}[h]
\caption{Robust Inverse Propensity Score Estimation (\rips)}
\begin{algorithmic}[1]
\State \textbf{input:} samples $\{ (x_t, r_t) \}_{t=1}^T$ for $x_t \sim \lambda$ and $r_t = \theta^\top x_t + w_t$, active set $\cY$, confidence $\delta$
\State For each $y \in \cY$, set $W^y \leftarrow \cat_{\alpha}[\{ y^\top A(\lambda)^{-1} x_t r_t \}_{t=1}^T]$,
for $\alpha$ chosen as in \eqref{cat:alpha} with $\delta \leftarrow \frac{\delta}{2 |\cY|}$, and $A(\lambda) = \sum_{x \in \cX} \lambda_x x x^\top $. 
\State Set 
\begin{align*}
\thetahat = \argmin_{\theta} \max_{y \in \cY} \frac{| \theta^\top y - W^y |}{\| y \|_{A(\lambda)^{-1}}}.
\end{align*}
\State \textbf{return} $\thetahat$
\end{algorithmic}
\label{alg:rips}
\end{algorithm}

We have the following guarantee on this procedure.


\begin{proposition}[Theorem 1 of \cite{camilleri2021highdimensional}]\label{prop:rips}
If $T \ge 4 \log \frac{2|\cZ|}{\delta}$, then with probability at least $1-\delta$, for all $z \in \cZ$, the \rips estimator of \Cref{alg:rips} returns an estimate $\thetahat$ which satisfies:
\begin{align*}
& | y^\top(\thetahat -  \thetast) | \le \| y \|_{A(\lambda)^{-1}}  \cdot   \sqrt{\frac{8\log(2 |\cZ|/\delta)}{T}} .
\end{align*}
\end{proposition}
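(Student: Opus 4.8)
The plan is to unfold the \rips construction, since \Cref{prop:rips} is essentially the main guarantee of \cite{camilleri2021highdimensional} specialized to our noise model. I would proceed in three steps. First, for a fixed $y$, I would analyze the per-sample quantities $\widetilde W^y_t := y^\top A(\lambda)^{-1} x_t r_t$ fed into the Catoni estimator: they are i.i.d.\ (since $x_t \sim \lambda$ i.i.d.\ and the noise is i.i.d.), they are unbiased for $y^\top \thetast$ because $\Exp[A(\lambda)^{-1} x_t r_t] = A(\lambda)^{-1}\Exp[x_t x_t^\top]\thetast = \thetast$ using $\Exp[x_t x_t^\top] = A(\lambda)$ and $\Exp[w_t \mid x_t] = 0$, and their variance is of order $\|y\|_{A(\lambda)^{-1}}^2$. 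For the variance bound I would condition on $x_t$, use $\|\thetast\|_2,\|x_t\|_2 \le 1$ together with the $1$-subGaussian noise to get $\Exp[r_t^2 \mid x_t] = (\thetast^\top x_t)^2 + \Var(w_t) \le 2$, and then compute $\Exp[(y^\top A(\lambda)^{-1} x_t)^2] = y^\top A(\lambda)^{-1} A(\lambda) A(\lambda)^{-1} y = \|y\|_{A(\lambda)^{-1}}^2$, yielding $\Var(\widetilde W^y_t) \le 2\|y\|_{A(\lambda)^{-1}}^2$.

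Second, I would invoke the Catoni guarantee of \Cref{prop:catoni} for each fixed $y$ with confidence $\delta/(2|\cZ|)$ and variance proxy $2\|y\|_{A(\lambda)^{-1}}^2$; the hypothesis $T \ge 4\log(2|\cZ|/\delta)$ is exactly the sample-size requirement of that proposition, and the parameter $\alpha$ in \Cref{alg:rips} is the one it prescribes. This gives $|W^y - y^\top \thetast| \lesssim \|y\|_{A(\lambda)^{-1}}\sqrt{\log(2|\cZ|/\delta)/T}$ with probability at least $1 - \delta/|\cZ|$, and a union bound over $y \in \cZ$ makes this hold simultaneously for all $y \in \cZ$ with probability at least $1 - \delta$.

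Third, on that event I would exploit the minimax definition $\thetahat = \argmin_\theta \max_y |\theta^\top y - W^y|/\|y\|_{A(\lambda)^{-1}}$: the true parameter $\thetast$ is feasible, and by the previous step its objective value is at most $\cOtil(\sqrt{\log(2|\cZ|/\delta)/T})$, so the minimizer $\thetahat$ attains at least as small a value; then a triangle inequality
\begin{align*}
\frac{|y^\top(\thetahat - \thetast)|}{\|y\|_{A(\lambda)^{-1}}} \le \frac{|y^\top \thetahat - W^y|}{\|y\|_{A(\lambda)^{-1}}} + \frac{|W^y - y^\top \thetast|}{\|y\|_{A(\lambda)^{-1}}}
\end{align*}
bounds both terms and gives the claimed rate; I would finish by tracking the numerical constants (as in \cite{camilleri2021highdimensional}) to land on the stated $\sqrt{8\log(2|\cZ|/\delta)/T}$.

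The hard part is not any single calculation but getting the concentration right: $\widetilde W^y_t$ is heavy-tailed with a potentially enormous range, since the entries of $A(\lambda)^{-1}$ blow up whenever $\lambda$ is nearly degenerate, so any range-dependent inequality (Hoeffding, bounded differences, or Bernstein with a crude range bound) would be vacuous. The key idea—and the whole reason the \rips/Catoni combination is used—is that Catoni's deviation bound depends only on the variance of the observations, which is $\cOtil(\|y\|_{A(\lambda)^{-1}}^2)$ regardless of their magnitude; the variance computation in the first step is what makes this pay off. A secondary subtlety is that the union bound ranges only over the finite set $\cZ$ rather than an $\epsilon$-net of directions in $\R^d$; this is legitimate because each $W^y$ is formed separately and the minimax step only needs $\thetast$ to be feasible at those finitely many directions, not uniformly over the sphere.
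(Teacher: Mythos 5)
The paper does not prove this proposition; it is cited verbatim as Theorem~1 of \cite{camilleri2021highdimensional}. Your reconstruction follows that reference's proof structure faithfully: unbiasedness of the inverse-propensity observations $y^\top A(\lambda)^{-1} x_t r_t$, a variance bound of order $\|y\|_{A(\lambda)^{-1}}^2$ via conditioning on $x_t$, Catoni concentration applied with confidence $\delta/(2|\cZ|)$ per direction and a union bound over the finite set $\cZ$, and finally the minimax projection step with the triangle inequality $|y^\top(\thetahat-\thetast)| \le |y^\top\thetahat - W^y| + |W^y - y^\top\thetast|$, using feasibility of $\thetast$. You also correctly identify the two conceptual points that make the argument go through: Catoni's variance-only dependence is what rescues you from the potentially unbounded range of $A(\lambda)^{-1}x_t$, and the union bound need only range over the finitely many directions in $\cZ$ where $W^y$ is actually computed, not over a net of the sphere. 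The one thing to flag is the constant: with your variance bound $2\|y\|^2_{A(\lambda)^{-1}}$, \Cref{prop:catoni} gives $|W^y - y^\top\thetast| \le 4\|y\|_{A(\lambda)^{-1}}\sqrt{\log(2|\cZ|/\delta)/T}$, and the triangle inequality doubles that, yielding $8\|y\|_{A(\lambda)^{-1}}\sqrt{\log(2|\cZ|/\delta)/T}$ rather than the stated $\|y\|_{A(\lambda)^{-1}}\sqrt{8\log(2|\cZ|/\delta)/T}$; you defer the constant bookkeeping to the cited source, which is reasonable, but the straightforward argument as you've written it does not land on exactly the advertised constant.
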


The use of the RIPS estimator allows us to avoid sophisticated rounding     procedures often found in the linear bandit literature. Note that the RIPS estimator can be computed in time scaling polynomially in $|\cY|$, $d$, and $T$.


\newcommand{\fpevala}[1]{#1}

%
%

\section{$\ragest$}\label{sec:ragest}

\paragraph{A note on constants.}
Throughout our algorithm definitions, in both this section and the following, we use generic constants rather than precise numerical settings, and carry these generic constants through our proofs. At various points in the proofs, we require that these constants satisfy certain constraints. The following result shows that there exist suitable settings for all constants such that these constraints are satisfied.

\begin{lemma}\label{lem:constants}
There exist settings of $\ca,\cb,\cd,\ce,\cf,\ce,\cg,c_1,c_2,c_3,c_4,\csafe$ and $\cabs$ such that Equations \eqref{eq:constants_condition6}, \eqref{eq:constants_condition4}, \eqref{eq:constants_condition2}, \eqref{eq:constants_condition3}, \eqref{eq:constants_condition1},  \eqref{eq:constants_condition7}, \eqref{eq:constants_condition5}, \eqref{eq:ragest_constants_condition1}, \eqref{eq:ragest_constants_condition2}, and \eqref{eq:ragest_constants_condition3} are satisfied, and
\begin{align*}
\frac{c_3(1+\cg)}{1 - c_3} \le 0.2, \quad \cg \le 0.2, \quad \cabs \ge 0.0001.
\end{align*}
\end{lemma}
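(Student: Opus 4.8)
The plan is to treat Lemma~\ref{lem:constants} purely as a bookkeeping exercise: there are finitely many constants and finitely many constraints on them, and the constraints form a system that can be satisfied greedily by choosing the constants in the right order. First I would collect, in one place, every inequality that the constants must satisfy. These come from two sources: the numbered conditions \eqref{eq:constants_condition6}, \eqref{eq:constants_condition4}, \eqref{eq:constants_condition2}, \eqref{eq:constants_condition3}, \eqref{eq:constants_condition1}, \eqref{eq:constants_condition7}, \eqref{eq:constants_condition5} arising in the analysis of \safebai, the conditions \eqref{eq:ragest_constants_condition1}, \eqref{eq:ragest_constants_condition2}, \eqref{eq:ragest_constants_condition3} arising in the analysis of $\ragest$, and the three explicit inequalities displayed in the lemma statement. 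Since I am allowed to assume everything stated earlier, I would simply quote each of these conditions verbatim and not re-derive where they come from.

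Next I would exhibit a dependency order among the constants so that each constraint, when it is ``activated,'' only restricts a constant that has not yet been fixed, in terms of constants already fixed. Concretely, I expect the natural order to be: first fix the ``absolute'' constant $\cabs$ (the lemma already tells us $\cabs \ge 0.0001$ suffices, so take e.g. $\cabs = 1$); then fix $\cg$ small enough that $\cg \le 0.2$ and any other upper bounds on $\cg$ hold (e.g. $\cg$ as small as needed, say $\cg = 0.01$); then choose $c_3$ small enough that $\tfrac{c_3(1+\cg)}{1-c_3} \le 0.2$ — with $\cg \le 0.2$ it suffices to take $c_3 \le 0.1$, and one checks $\tfrac{0.1 \cdot 1.2}{0.9} = 0.133\ldots \le 0.2$; then propagate through $c_1,c_2,c_4$ and $\ca,\cb,\cd,\ce,\cf,\cg,\csafe$ in whatever order the conditions dictate, at each step taking the relevant constant either sufficiently large or sufficiently small. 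Because each condition is a single inequality that is loose (the analysis only ever needs ``some constant exists''), each step leaves a nonempty interval to choose from, and the finite intersection of nonempty intervals (after the ordering) is nonempty.

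The one point requiring genuine care — and the main obstacle — is making sure the dependency graph is actually acyclic: I must verify that no two conditions jointly force, say, $c_i$ large because of $c_j$ while also forcing $c_j$ large because of $c_i$. The way to handle this cleanly is to separate the constants into two groups, those that only ever need to be ``large enough'' (e.g. the confidence-inflation constants $\ca,\cb,\cf$) and those that only ever need to be ``small enough'' (e.g. $\cg, c_3$, and the tolerance-type constants), and to check that every condition is monotone in the right direction within this split — a large-type constant never appears on the side of an inequality that would force a small-type constant to be large and vice versa. Once that monotone structure is confirmed, I would simply process the small-type constants first (from the most constrained outward, starting with the explicit $\cg \le 0.2$, $\tfrac{c_3(1+\cg)}{1-c_3}\le 0.2$ requirements), then the large-type constants, plugging in the already-chosen values, and read off an explicit feasible tuple. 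The proof then concludes by exhibiting this tuple and noting, condition by condition, that each of \eqref{eq:constants_condition6}, \eqref{eq:constants_condition4}, \eqref{eq:constants_condition2}, \eqref{eq:constants_condition3}, \eqref{eq:constants_condition1}, \eqref{eq:constants_condition7}, \eqref{eq:constants_condition5}, \eqref{eq:ragest_constants_condition1}, \eqref{eq:ragest_constants_condition2}, \eqref{eq:ragest_constants_condition3} and the three displayed inequalities holds.
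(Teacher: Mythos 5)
The paper's proof is much more concrete than your plan: it simply exhibits an explicit feasible tuple ($c_1 = 0.0597\ldots$, $c_2 = 0.0600\ldots$, $c_3 = c_4 = 0.1$, $\ca = 0.0013\ldots$, $\cb = 0.4104\ldots$, $\cd = \ce = 0.01$, $\cc = 0.0014\ldots$, $\cg = 0.178$, $\cf = c_1$, with $\csafe = 3\cd + 3\ce - \cg$) and states that direct computation verifies all the conditions. Your proposal instead describes a greedy/ordering argument, which in principle could work, but as written it has a genuine gap: you never actually carry out the two steps on which the whole argument rests — verifying that the dependency graph is acyclic, and verifying the claimed monotone split into ``large-type'' and ``small-type'' constants. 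You acknowledge this is ``the one point requiring genuine care,'' but then only describe what you would check, without checking it.

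Moreover, the concrete choices you do sketch are in the wrong direction and would fail. You propose $\cabs = 1$; but $\cabs$ appears only as a quantity other expressions must \emph{exceed} (\eqref{eq:constants_condition1}: $1 - 2c_3 - 4c_4 \ge \cabs$, \eqref{eq:constants_condition5}: $1 - 2c_4 - \cg \ge \cabs$, \eqref{eq:constants_condition7}, \eqref{eq:ragest_constants_condition3}), so $\cabs = 1$ forces $c_3 = c_4 = 0$ via \eqref{eq:constants_condition1}, which in turn (through $3(\cd+\ce) \le c_2$ and \eqref{eq:constants_condition6}) forces $\cd = \ce = 0$ and then contradicts \eqref{eq:constants_condition7}. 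The lemma's requirement $\cabs \ge 0.0001$ is a \emph{floor}, not a license to take $\cabs$ large; one wants $\cabs$ as small as permitted. Similarly, you treat $\cg$ as a ``make it small'' constant and propose $\cg = 0.01$, but \eqref{eq:constants_condition2} is a \emph{lower} bound on $\cg$ ($\cg \ge 3\cd + 3\ce + 6\cd c_3 + 12\cd c_4 + c_4$), while $\cg \le 0.2$ and \eqref{eq:constants_condition5} are upper bounds, so $\cg$ is squeezed from both sides (the paper takes $\cg = 0.178$, and note $\csafe = 3\cd+3\ce-\cg$ becomes negative with this choice). This shows the clean monotone split you posit does not exist: $\cg$ (and via $\csafe$, several of the $\ragest$ conditions involving $\ca,\cb,\cc,\cf$) appear on both sides. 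The system is also tighter than you suggest — the paper's $c_1$ and $c_2$ are within $0.0003$ of each other — so ``each step leaves a nonempty interval'' is precisely what needs to be demonstrated, not assumed. To repair the proof you would either need to actually perform the ordering analysis with correct directions, or do what the paper does and simply name a feasible tuple and verify it numerically.
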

\begin{proof}
First, note that in addition to the conditions listed above, we must also have
\begin{align*}
c_1 \le \cf, \quad 3(\cd + \ce) \le c_2.
\end{align*}
Furthermore, by \Cref{lem:Yend_properties}, it suffices to always take $\csafe = 3 \cd + 3 \ce - \cg$.
Direct computation then shows that the following settings suffice, up to machine precision:

\begin{align*}
c_1 &= 0.05978841810030329 \\
c_2 &= 0.0600087370242953 \\
c_3 &= 0.1 \\
c_4 &= 0.1 \\
\ca &= 0.0013004532984432395 \\
\cb &= 0.41043329378840077 \\
\cd & = 0.01 \\
\ce & = 0.01 \\
\cc &= 0.0014065949472697806 \\
\cg &= 0.178 \\
\cf & = c_1 
\end{align*}

Given these settings, we can bound
\begin{align*}
\frac{c_3(1+\cg)}{1 - c_3} \le 0.5.
\end{align*}
\end{proof}


\subsection{Preliminaries} 

\paragraph{Assumptions and Definitions.}
For all $y \in \cY$, $\Delhatsafe(y) \ge -\csafe \epsilon$. We will also assume that $\cY \subseteq \cX$. We define
\begin{align*}
\yst = \argmin_{y \in \cY} y^\top \thetast
\end{align*}
and
\begin{align*}
\Delta(z) = \thetast^\top (z - \yst) . 
\end{align*}

We will take $\gamma = 0$, so we set $A(\lambda) =  \sum_{x \in \cX} \lambda_x x x^\top$.

\subsection{Algorithm and Main Results}

At a high-level, $\ragest$ attempts to estimate the difference between the performance of each $z \in \cZ$ and the \emph{best} $y \in \cY$. The safety gap estimate, $\Delhatsafe(z)$, acts as a regularizer: if $\Delhatsafe(z) < 0$, then we do not seek to estimate the gap of $z$ with as high accuracy, since we can already eliminate it by showing it is unsafe. The proof in this section follow closely the proof given in Section 6.4.4 of \cite{jamieson2022interactive}.

\begin{algorithm}[h]
\caption{$\ragest$}\begin{algorithmic}[1]
\State \textbf{input:} active set $\cZ$, optimal set $\cY$, tolerance $\epsilon$, confidence $\delta$, safety gap estimate $\{ \Delhatsafe(z) \}_{z \in \cZ}$

\State Choose $\yhat_{0}$ arbitrarily from $\cY$, set $\Delhat^{0}(z) \leftarrow 0$ for all $z \in \cZ$
\For{$\ell = 1,2,\ldots,\lceil \log(2/\cf \epsilon) \rceil$}
	\State $\epsilon_\ell \leftarrow \frac{2}{\cf} \cdot 2^{-\ell}$
	\State Let $\tau_\ell$ be the minimal value of $\tau = 2^j \ge 4 \log \frac{4 | \cZ |^2 \ell^2}{\delta}$ such that the objective to the following is no greater than $\cc \epsilon_\ell$, and $\lambda_\ell$ the corresponding optimal distribution \label{line:ragest_allocation}
	\begin{align*}
\inf_{\lambda \in \simplex_\cX} \max_{z \in \cZ} -\ca ( \pos( - \Delhatsafe(z)) + \pos(\Delhat^{\ell-1}(z)) + \epsilon_\ell) + \sqrt{ \frac{\| z - \yhat_{\ell-1} \|_{A(\lambda)^{-1}}^2 \cdot \log(\ubb)}{\tau}}   .
\end{align*}
	\State Sample $x_t \sim \lambda_\ell$, collect observations $\{ (x_t,r_t,s_{t,1},\ldots,s_{t,m}) \}_{t=1}^{\tau_\ell}$
	\State $\cW \leftarrow \{ z - z' \ : \ z, z' \in \cZ\}$
	\State $\thetahat^\ell \leftarrow \rips(\{ (x_t,r_t ) \}_{t=1}^{\tau_\ell},\cW,\frac{\delta}{2\ell^2})$
	\State Set 
	\begin{align*}
	\yhat_\ell & \leftarrow \argmin_{y \in \cY} y^\top \thetahat^\ell + 8\sqrt{\frac{ \| y - \yhat_{\ell-1} \|_{A(\lambda_\ell)^{-1}}^2 \cdot \log(\ubb)}{\tau_\ell}} \\
	\Delhat^\ell(y) & \leftarrow (y - \yhat_\ell)^\top \thetahat^\ell + \sqrt{\frac{\| y - \yhat_{\ell} \|_{A(\lambda_\ell)^{-1}}^2 \cdot \log(\ubb)}{\tau_\ell} }
	\end{align*}
\EndFor
\State \textbf{return} $\{ \Delhat^{\ell}(z) \}_{z \in \cZ}$
\end{algorithmic}
\label{alg:ragest}
\end{algorithm}

\begin{thm}\label{thm:ragest}
With probability at least $1-\delta$, $\ragest$ will terminate after collecting at most
\begin{align*}
C \cdot \sum_{\ell=1}^{\lceil \log 2/\cf \epsilon \rceil} \inf_{\lambda \in \simplex_\cX} \max_{z \in \cZ} \frac{ \| z - \yst\|_{A(\lambda)^{-1}}^2}{(\pos(-\Delhatsafe(z)) + \pos( \Delta(z)) + \epsilon_\ell )^2} \cdot \log (\ubb) + 4 \lceil \log \tfrac{2}{\cf \epsilon} \rceil \log (\tfrac{4|\cZ|^2 \lceil \log \tfrac{2}{\cf \epsilon} \rceil}{\delta})
\end{align*}
samples, for a universal constant $C$, and will output estimates of the gaps $\Delhat(z)$ such that, for all $z \in \cZ$,  
\begin{align*}
| \Delhat(z) - \Delta(z)| \le \cf \left ( \epsilon + \pos(\Delta(z)) + \pos( - \Delhatsafe(z)) \right ). 
\end{align*}
\end{thm}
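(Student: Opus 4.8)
The plan is a round-elimination argument in the style of the \textsc{Rage} analysis of Section~6.4.4 of \cite{jamieson2022interactive}, adapted to carry the extra safety regularizer. It has three pieces: a global concentration event, an induction on the round index $\ell$ showing that $\yhat_\ell$ and $\Delhat^\ell(\cdot)$ are accurate at scale $\epsilon_\ell$, and a translation of the implicit definition of $\tau_\ell$ into the stated sample-complexity bound. For the concentration event, note that in round $\ell$, $\rips$ is called on the difference set $\cW = \{z-z' : z,z'\in\cZ\}$ with confidence $\delta/(2\ell^2)$, and by construction $\tau_\ell \ge 4\log\frac{4|\cZ|^2\ell^2}{\delta} = 4\log\frac{2|\cW|}{\delta/(2\ell^2)}$, so \Cref{prop:rips} applies. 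Let $\cE$ be the event that for all $\ell$ and all $w\in\cW$, $|w^\top(\thetahat^\ell - \thetast)| \le \|w\|_{A(\lambda_\ell)^{-1}}\sqrt{8\log(\ubb)/\tau_\ell}$; a union bound with $\sum_{\ell\ge 1}\delta/(2\ell^2)\le\delta$ gives $\Pr[\cE]\ge 1-\delta$. I would work on $\cE$ throughout, using $\cY\subseteq\cZ$ so that $z-\yhat_{\ell-1}$, $z-\yhat_\ell$, and $\yst-\yhat_{\ell-1}$ all lie in $\cW$ and the bound controls every confidence width in the algorithm.

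The core is the induction: on $\cE$, for every round $\ell$, (a)~$0\le\Delta(\yhat_\ell)\le c\,\epsilon_\ell$ for a constant $c$ depending only on $\ca,\cc$, and (b)~$|\Delhat^\ell(z)-\Delta(z)|\le\cf\big(\epsilon_\ell + \pos(\Delta(z)) + \pos(-\Delhatsafe(z))\big)$ for all $z\in\cZ$. The base case is immediate since $\|z\|_2,\|\thetast\|_2\le 1$ and $\epsilon_0=2/\cf$ is large. For the step, the stopping rule on \Cref{line:ragest_allocation}---the design objective at $\lambda_\ell$ is at most $\cc\epsilon_\ell$---rearranges to: for all $z\in\cZ$, $\sqrt{\|z-\yhat_{\ell-1}\|^2_{A(\lambda_\ell)^{-1}}\log(\ubb)/\tau_\ell} \le \cc\epsilon_\ell + \ca\big(\pos(-\Delhatsafe(z)) + \pos(\Delhat^{\ell-1}(z)) + \epsilon_\ell\big)$. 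Combining this with $\cE$, the triangle inequality $\|z-\yhat_\ell\|_{A(\lambda_\ell)^{-1}}\le\|z-\yhat_{\ell-1}\|_{A(\lambda_\ell)^{-1}}+\|\yhat_{\ell-1}-\yhat_\ell\|_{A(\lambda_\ell)^{-1}}$, the $\argmin$ definition of $\yhat_\ell$, and the inductive hypothesis at $\ell-1$ (which, via its two-sided form, lets me replace $\pos(\Delhat^{\ell-1}(z))$ by $\Theta(\pos(\Delta(z))+\pos(-\Delhatsafe(z))+\epsilon_\ell)$ and $\Delta(\yhat_{\ell-1})$ by $O(\epsilon_{\ell-1})$), together with the standing assumption $\Delhatsafe(y)\ge -\csafe\epsilon\ge -\csafe\epsilon_\ell$ on $\cY$, first yields (a) from the $\yhat_\ell$ update and then (b) by plugging the resulting width bound into the definition of $\Delhat^\ell(z)$. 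The explicit numerical constants fixed in \Cref{lem:constants} are precisely calibrated so that the contraction from $\epsilon_{\ell-1}=2\epsilon_\ell$ to $\epsilon_\ell$ keeps the multiplicative constant in (b) pinned at $\cf$ rather than growing geometrically.

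For the sample complexity, observe that for fixed $\lambda\in\simplex_\cX$ the round-$\ell$ objective is at most $\cc\epsilon_\ell$ iff $\tau \ge \|z-\yhat_{\ell-1}\|^2_{A(\lambda)^{-1}}\log(\ubb)\big/\big(\cc\epsilon_\ell + \ca(\pos(-\Delhatsafe(z)) + \pos(\Delhat^{\ell-1}(z)) + \epsilon_\ell)\big)^2$ for all $z$; hence, up to the factor $2$ from rounding $\tau$ to a power of two and the floor at $4\log\frac{4|\cZ|^2\ell^2}{\delta}$, $\tau_\ell \lesssim \inf_\lambda\max_z \|z-\yhat_{\ell-1}\|^2_{A(\lambda)^{-1}}\log(\ubb)\big/(\cdots)^2 + \log\frac{4|\cZ|^2\ell^2}{\delta}$. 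Using (a)--(b) at $\ell-1$ to swap $\yhat_{\ell-1}\to\yst$ (triangle inequality, the extra $O(\epsilon_\ell)$ in the numerator absorbed by the denominator) and $\pos(\Delhat^{\ell-1}(z))\to\pos(\Delta(z))$ (constant factor, in both directions, by the two-sided form of (b)) gives $\tau_\ell \lesssim \inf_\lambda\max_z \|z-\yst\|^2_{A(\lambda)^{-1}}\log(\ubb)\big/\big(\pos(-\Delhatsafe(z)) + \pos(\Delta(z)) + \epsilon_\ell\big)^2 + \log\frac{4|\cZ|^2\ell^2}{\delta}$. Summing over $\ell=1,\dots,\lceil\log\frac{2}{\cf\epsilon}\rceil$ yields the claimed total, with the floor terms contributing the additive $4\lceil\log\frac{2}{\cf\epsilon}\rceil\log\big(\frac{4|\cZ|^2\lceil\log(2/\cf\epsilon)\rceil}{\delta}\big)$ piece. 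Finally, since $\epsilon_\ell\le\epsilon$ at the last round, evaluating (b) there gives $|\Delhat(z)-\Delta(z)|\le\cf(\epsilon + \pos(\Delta(z)) + \pos(-\Delhatsafe(z)))$ for all $z\in\cZ$, as required.

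The hard part will be the self-referential coupling inside the induction: the round-$\ell$ allocation, stopping time, and regularizer are all defined through the \emph{estimates} $\Delhat^{\ell-1}$ and $\yhat_{\ell-1}$, whereas both conclusions and the complexity bound are phrased in the \emph{true} quantities $\Delta$ and $\yst$. One must argue simultaneously that the regularizer can never be so large that a too-coarse allocation is accepted (which is where the inductive hypothesis that $\Delhat^{\ell-1}(z)$ both over- and under-estimates $\Delta(z)$ by only $O(\epsilon_\ell)$ plus constant-factor slack enters), that it is never so small that the recovered $\tau_\ell$ exceeds the target complexity, and that recentering the \XYdiff design from $\yhat_{\ell-1}$ to $\yst$ and back costs only constants---all while keeping the accumulated multiplicative constant exactly $\cf$, which is the sole reason the numerical constants are frozen in \Cref{lem:constants}.
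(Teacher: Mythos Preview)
Your proposal is correct and follows essentially the same route as the paper: the concentration event you call $\cE$ is exactly the paper's $\cEragest$ (\Cref{lem:cEragest_high_prob}), your induction (a)--(b) is the content of \Cref{lem:yhat_gap} and \Cref{lem:ragest_gap_est}, and your sample-complexity paragraph is \Cref{lem:ragest_complexity}. One small caveat on the swap $\yhat_{\ell-1}\to\yst$ in the complexity bound: the phrase ``the extra $O(\epsilon_\ell)$ in the numerator absorbed by the denominator'' is not quite the right mechanism, since $\|\yst-\yhat_{\ell-1}\|_{A(\lambda)^{-1}}$ is not itself $O(\epsilon_\ell)$ for an arbitrary $\lambda$; the paper instead splits $\|z-\yhat_{\ell-1}\|^2\le 2\|z-\yst\|^2+2\|\yst-\yhat_{\ell-1}\|^2$ and absorbs the second piece back into the $\max_{z\in\cZ}$ by noting that $\yhat_{\ell-1}\in\cZ$ with $\Delhat^{\ell-1}(\yhat_{\ell-1})=0$ and $\pos(-\Delhatsafe(\yhat_{\ell-1}))\le 2\csafe\epsilon_\ell$, so its denominator is (up to a constant shift) the smallest possible.
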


\newcommand{\alphast}{\alpha^\star}
\newcommand{\alphamin}{\alpha_{\min}}
\newcommand{\alphamax}{\alpha_{\max}}

\subsection{Estimating the Gaps}

\begin{lemma}\label{lem:cEragest_high_prob}
Let $\cEragest$ denote the event that for all $\ell$ and all $z,z' \in \cX$, we have: 
\begin{align*}
|(\thetahat^\ell - \thetast)^\top (z - z') | \le & \sqrt{8 \| z - z' \|_{A(\lambda_\ell)^{-1}}^2 \cdot \frac{\log \ubb}{\tau_\ell}} . 
\end{align*}
Then $\Pr[\cEragest] \ge 1-\delta$. 
\end{lemma}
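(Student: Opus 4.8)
The plan is to read off $\cEragest$ as a per-round consequence of the RIPS guarantee, Proposition~\ref{prop:rips}, and then stitch the rounds together with a union bound.

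I would fix a round $\ell$ and condition on the history $\mathcal{F}_\ell$ of everything observed in rounds $1,\dots,\ell-1$. Inspecting Line~\ref{line:ragest_allocation}, the allocation $\lambda_\ell$ and the sample count $\tau_\ell$ are functions only of the input safety estimates $\{\Delhatsafe(z)\}_{z\in\cZ}$, of $\Delhat^{\ell-1}$, and of $\yhat_{\ell-1}$, all of which are $\mathcal{F}_\ell$-measurable, so conditionally on $\mathcal{F}_\ell$ they are deterministic; meanwhile the round-$\ell$ batch $x_1,\dots,x_{\tau_\ell}\iidsim\lambda_\ell$ is drawn fresh, and $\thetahat^\ell$ is produced by \rips applied to this batch with the difference set $\cW=\{z-z'\;:\;z,z'\in\cZ\}$ and confidence parameter $\tfrac{\delta}{2\ell^2}$. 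Next I would check the hypotheses of Proposition~\ref{prop:rips}: since $|\cW|\le|\cZ|^2$ one has $\tfrac{2|\cW|}{\delta/(2\ell^2)}\le\tfrac{4|\cZ|^2\ell^2}{\delta}=\ubb$, and the choice of $\tau_\ell$ on Line~\ref{line:ragest_allocation} forces $\tau_\ell\ge 4\log\ubb\ge 4\log\tfrac{2|\cW|}{\delta/(2\ell^2)}$, which is exactly the required sample-size condition. Proposition~\ref{prop:rips}, applied conditionally on $\mathcal{F}_\ell$ with its ``$\cZ$'' set to $\cW$ and its ``$\delta$'' set to $\tfrac{\delta}{2\ell^2}$, then gives that, with conditional probability at least $1-\tfrac{\delta}{2\ell^2}$, for every $w\in\cW$,
\[
\big|(\thetahat^\ell-\thetast)^\top w\big|\;\le\;\|w\|_{A(\lambda_\ell)^{-1}}\sqrt{\frac{8\log\big(2|\cW|\cdot 2\ell^2/\delta\big)}{\tau_\ell}}\;\le\;\sqrt{8\,\|w\|_{A(\lambda_\ell)^{-1}}^2\cdot\frac{\log\ubb}{\tau_\ell}} .
\]
Since every difference $z-z'$ named in the lemma lies in $\cW$, this is the claimed inequality at round $\ell$; and because it holds for every realization of $\mathcal{F}_\ell$, it holds unconditionally with probability at least $1-\tfrac{\delta}{2\ell^2}$.

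To finish, I would define $E_\ell$ as the event that this inequality holds for all $w\in\cW$ at round $\ell$, so that $\Pr[E_\ell^c]\le\tfrac{\delta}{2\ell^2}$ and $\cEragest=\bigcap_{\ell\ge 1}E_\ell$, and conclude by a union bound:
\[
\Pr\big[(\cEragest)^c\big]\;\le\;\sum_{\ell=1}^{\infty}\frac{\delta}{2\ell^2}\;=\;\frac{\delta\pi^2}{12}\;<\;\delta ,
\]
which is exactly $\Pr[\cEragest]\ge 1-\delta$ (summing over all $\ell$ only loosens the bound relative to the $\lceil\log(2/\cf\epsilon)\rceil$ rounds actually executed).

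The argument is essentially bookkeeping, and the only genuine subtlety is the adaptivity of $\lambda_\ell$ and $\tau_\ell$: Proposition~\ref{prop:rips}, through the Catoni estimator, requires the samples to be i.i.d.\ from a \emph{fixed} distribution, whereas $\lambda_\ell$ and $\tau_\ell$ are chosen based on past data. I expect this to be the main (minor) obstacle, and it is handled by the conditioning step above, which freezes $\lambda_\ell$ and $\tau_\ell$ and leaves the round-$\ell$ batch i.i.d.; the other thing to get right is simply matching the confidence level and the cardinality $|\cW|$ fed to \rips with the $\log\ubb$ factor in the statement, which is the chain of inequalities displayed above.
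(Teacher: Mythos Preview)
Your proposal is correct and follows essentially the same approach as the paper: apply Proposition~\ref{prop:rips} at each round with confidence $\delta/(2\ell^2)$, verify the sample-size hypothesis via $\tau_\ell\ge 4\log\ubb$, and union bound using $\sum_{\ell\ge 1}\tfrac{\delta}{2\ell^2}=\tfrac{\pi^2}{12}\delta<\delta$. Your treatment is actually more careful than the paper's, which omits the conditioning argument for adaptivity and the explicit check that $|\cW|\le|\cZ|^2$ matches the $\log\ubb$ factor.
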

\begin{proof}
Since $\tau_\ell \ge 4 \log \ubb$, we can apply \Cref{prop:rips} to get that, with probability at least $1-\delta/2\ell^2$, for all $w \in \cW$,
\begin{align*}
| (\thetahat^\ell - \thetast)^\top w | \le  \sqrt{8 \| w \|_{A(\lambda_\ell)^{-1}}^2 \cdot \frac{\log \ubb}{\tau_\ell}}.
\end{align*}
The result then follows by a union bound since
\begin{align*}
\sum_{\ell=1}^{\infty} \frac{\delta}{2 \ell^2} = \frac{\pi^2}{12} \delta \le \delta. 
\end{align*}
\end{proof}

\begin{lemma}\label{lem:ragest_gap_bound}
On $\cEragest$, for all $z \in \cZ$ and all $\ell$, 
\begin{align*}
| \Delhat^\ell(z) - \thetast^\top(z - \yhat_\ell)| \le 8 \ca \left (  \pos ( \Delhat^{\ell-1}(z)) + \pos( -\Delhatsafe(z)) + \pos( \Delhat^{\ell-1}(\yhat_\ell)) \right ) + 8(\cc + \ca +  2\ca \csafe) \epsilon_\ell.
\end{align*}
\end{lemma}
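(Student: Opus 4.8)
The plan is to open up the definition of $\Delhat^\ell(z)$ from Algorithm~\ref{alg:ragest}, use the high-probability event $\cEragest$ to bound the estimation error of $\thetahat^\ell$, and then control the remaining confidence width through the experiment-design guarantee on Line~\ref{line:ragest_allocation}. Writing $w_\ell(z) := \sqrt{\|z - \yhat_\ell\|_{A(\lambda_\ell)^{-1}}^2 \log(\ubb)/\tau_\ell}$ so that $\Delhat^\ell(z) = (z - \yhat_\ell)^\top \thetahat^\ell + w_\ell(z)$, and invoking \Cref{lem:cEragest_high_prob} (which controls $(\thetahat^\ell - \thetast)^\top(\cdot)$ on all differences of elements of $\cX$) to obtain $|(\thetahat^\ell - \thetast)^\top(z-\yhat_\ell)| \le \sqrt{8}\, w_\ell(z)$ on $\cEragest$, one gets
\begin{align*}
|\Delhat^\ell(z) - \thetast^\top(z - \yhat_\ell)| = \big|(\thetahat^\ell - \thetast)^\top(z - \yhat_\ell) + w_\ell(z)\big| \le (1 + \sqrt{8})\, w_\ell(z) \le 4\, w_\ell(z),
\end{align*}
so it will suffice to show $w_\ell(z) \lesssim \ca\big(\pos(\Delhat^{\ell-1}(z)) + \pos(-\Delhatsafe(z)) + \pos(\Delhat^{\ell-1}(\yhat_\ell))\big) + (\cc + \ca + \ca\csafe)\epsilon_\ell$ with the right absolute constants.

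The next step is to recenter. Since $w_\ell(z)$ is measured relative to the new anchor $\yhat_\ell$, whereas the allocation $\lambda_\ell$ on Line~\ref{line:ragest_allocation} is optimized relative to the previous anchor $\yhat_{\ell-1}$, I would set $v_\ell(z) := \sqrt{\|z - \yhat_{\ell-1}\|_{A(\lambda_\ell)^{-1}}^2 \log(\ubb)/\tau_\ell}$ and use the triangle inequality for $\|\cdot\|_{A(\lambda_\ell)^{-1}}$ to get $w_\ell(z) \le v_\ell(z) + v_\ell(\yhat_\ell)$. The stopping rule for $\tau_\ell$ forces the design objective to be at most $\cc\epsilon_\ell$, which for every $z \in \cZ$ rearranges to
\begin{align*}
v_\ell(z) \le \cc\epsilon_\ell + \ca\big(\pos(-\Delhatsafe(z)) + \pos(\Delhat^{\ell-1}(z)) + \epsilon_\ell\big).
\end{align*}
Evaluating this at $z = \yhat_\ell$ — which lies in $\cY \subseteq \cZ$, so $\Delhat^{\ell-1}(\yhat_\ell)$ and the design bound at $\yhat_\ell$ are both available — and using the standing assumption $\Delhatsafe(\yhat_\ell) \ge -\csafe\epsilon$ together with $\epsilon \le 2\epsilon_\ell$ (which holds since $\ragest$ runs $\lceil \log(2/\cf\epsilon)\rceil$ rounds, hence $\epsilon_\ell \ge \epsilon/2$ throughout) bounds $\pos(-\Delhatsafe(\yhat_\ell)) \le 2\csafe\epsilon_\ell$, so that $v_\ell(\yhat_\ell) \le (\cc + \ca + 2\ca\csafe)\epsilon_\ell + \ca\pos(\Delhat^{\ell-1}(\yhat_\ell))$. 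Adding the two displayed bounds, substituting into $w_\ell(z) \le v_\ell(z) + v_\ell(\yhat_\ell)$ and then into the inequality of the first paragraph, and loosening absolute constants (admissible choices are listed in \Cref{lem:constants}) yields the claim.

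The only step that is not a routine manipulation is this recentering: $\yhat_\ell$ is produced only after the round's data is observed, so it cannot anchor the round's design, and I must argue that $\yhat_\ell$ is close to $\yhat_{\ell-1}$ in the scaled $A(\lambda_\ell)^{-1}$ norm. The route above handles it essentially for free by evaluating the design guarantee at $\yhat_\ell$ itself (using $\cY \subseteq \cZ$), which also conveniently surfaces the $\pos(\Delhat^{\ell-1}(\yhat_\ell))$ term of the statement. If one prefers not to rely on $\cY \subseteq \cZ$, an alternative is to use the $\argmin$ defining $\yhat_\ell$: comparing $\yhat_\ell$ against $\yhat_{\ell-1}$ in that objective and invoking $\cEragest$ gives $(8 - \sqrt{8})\, v_\ell(\yhat_\ell) \le \thetast^\top(\yhat_{\ell-1} - \yhat_\ell) \le \Delta(\yhat_{\ell-1})$ (the last inequality since $\Delta(\yhat_\ell) \ge 0$), which then must be paired with a separate inductive bound on $\Delta(\yhat_{\ell-1})$ — doable, but messier, so I would take the first route.
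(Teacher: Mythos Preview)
Your proposal is correct and follows essentially the same route as the paper's proof: bound $|\Delhat^\ell(z)-\thetast^\top(z-\yhat_\ell)|$ by a constant times $w_\ell(z)$ via $\cEragest$, recenter from $\yhat_\ell$ to $\yhat_{\ell-1}$ using the triangle inequality for $\|\cdot\|_{A(\lambda_\ell)^{-1}}$, then invoke the design stopping rule at both $z$ and $\yhat_\ell$ (using $\cY\subseteq\cZ$) and the bound $\pos(-\Delhatsafe(\yhat_\ell))\le 2\csafe\epsilon_\ell$. Your constants are in fact slightly tighter than the stated ones, and the alternative you sketch via the $\argmin$ defining $\yhat_\ell$ is indeed unnecessary here.
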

\begin{proof}
By construction, we have that
\begin{align*}
\max_{z \in \cZ} -\ca ( \pos( - \Delhatsafe(z)) + \pos(\Delhat^{\ell-1}(z)) + \epsilon_\ell) + \sqrt{ \frac{\| z - \yhat_{\ell-1} \|_{A(\lambda_\ell)^{-1}}^2 \cdot \log(\ubb)}{\tau_\ell}}   \le \cc \epsilon_\ell .
\end{align*}
This implies that, for all $z \in \cZ$:
\begin{align*}
\sqrt{ \frac{\| z - \yhat_{\ell-1} \|_{A(\lambda_\ell)^{-1}}^2 \cdot \log(\ubb)}{\tau_\ell}}  \le \ca ( \pos( - \Delhatsafe(z)) + \pos(\Delhat^{\ell-1}(z)) ) + (\cc + \ca) \epsilon_\ell.
\end{align*}
On $\cEragest$, we have
\begin{align*}
| \Delhat_\ell(z)  - \thetast^\top (z - \yhat_{\ell})| & \le \sqrt{8 \| z - \yhat_\ell \|_{A(\lambda_\ell)^{-1}}^2 \cdot \frac{ \log(\ubb)}{\tau_\ell}} \\
& \le  \sqrt{ 16\| \yhat_{\ell-1} - \yhat_\ell \|_{A(\lambda_\ell)^{-1}}^2 \cdot \frac{ \log(\ubb)}{\tau_\ell}} +  \sqrt{ 16\| \yhat_{\ell-1} - \yhat_\ell \|_{A(\lambda_\ell)^{-1}}^2 \cdot \frac{ \log(\ubb)}{\tau_\ell}} \\
& \le 8 \ca \left (\pos( - \Delhatsafe(z)) + \pos( \Delhat^{\ell-1}(z))  + \pos(- \Delhatsafe(\yhat_\ell)) +   \pos( \Delhat^{\ell-1}(\yhat_\ell)) \right ) + 8( \cc + \ca) \epsilon_\ell .
\end{align*}
By construction we have that $\Delhatsafe(\yhat_\ell) \ge -\csafe \epsilon \ge -2\csafe \epsilon_\ell$, so $\pos( - \Delhatsafe(\yhat_\ell)) \le 2\csafe \epsilon_\ell$, which proves the result. 
\end{proof}

\begin{lemma}\label{lem:yhat_gap}
On $\cEragest$ and the event that $\Delhat^{\ell-1}(\yst) \le \cb \epsilon_\ell$, we have 
$$\Delta(\yhat_\ell) \le 6 (\cc + \ca(1+\cb + 2 \csafe))\epsilon_\ell.$$
\end{lemma}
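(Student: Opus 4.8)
The plan is to use the $\argmin$ that defines $\yhat_\ell$, compare it against the feasible competitor $\yst \in \cY$, pass from $\thetahat^\ell$ to $\thetast$ via the good event $\cEragest$, and then bound the leftover exploration term with the allocation guarantee together with the two hypotheses (the lemma's $\Delhat^{\ell-1}(\yst) \le \cb\epsilon_\ell$ and the standing bound $\Delhatsafe(y) \ge -\csafe\epsilon$ for $y\in\cY$).

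First I would abbreviate the round-$\ell$ exploration bonus as $b(y) := \sqrt{\|y - \yhat_{\ell-1}\|_{A(\lambda_\ell)^{-1}}^2 \log(\ubb)/\tau_\ell}$. Since $\yhat_\ell$ minimizes $y \mapsto y^\top\thetahat^\ell + 8\, b(y)$ over $\cY$ and $\yst$ is feasible for this minimization, evaluating the objective at $\yhat_\ell$ and at $\yst$ gives $(\yhat_\ell - \yst)^\top\thetahat^\ell \le 8\, b(\yst) - 8\, b(\yhat_\ell)$. Then I would write $\Delta(\yhat_\ell) = \thetast^\top(\yhat_\ell-\yst) = (\yhat_\ell-\yst)^\top\thetahat^\ell + (\yhat_\ell-\yst)^\top(\thetast-\thetahat^\ell)$ and split the error along $\yhat_\ell - \yst = (\yhat_\ell - \yhat_{\ell-1}) + (\yhat_{\ell-1} - \yst)$: on $\cEragest$ (\Cref{lem:cEragest_high_prob}) these two pieces are at most $\sqrt8\, b(\yhat_\ell)$ and $\sqrt8\, b(\yst)$ respectively. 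Combining the three bounds, the term $b(\yhat_\ell)$ enters with a non-positive coefficient (the bonus constant dominates the $\cEragest$ width), so it drops out, leaving $\Delta(\yhat_\ell)$ bounded by an absolute constant times $b(\yst)$.

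To close I would bound $b(\yst)$ using the allocation. By the choice of $\tau_\ell,\lambda_\ell$ on Line~\ref{line:ragest_allocation}, the design objective is at most $\cc\epsilon_\ell$, so for every $z\in\cZ$ — in particular $z=\yst$ — we have $b(z) \le \ca\big(\pos(-\Delhatsafe(z)) + \pos(\Delhat^{\ell-1}(z)) + \epsilon_\ell\big) + \cc\epsilon_\ell$. For $z=\yst$: the standing assumption gives $\pos(-\Delhatsafe(\yst)) \le \csafe\epsilon \le 2\csafe\epsilon_\ell$ (using $\epsilon \le 2\epsilon_\ell$, as in the proof of \Cref{lem:ragest_gap_bound}), and the hypothesis of the lemma gives $\pos(\Delhat^{\ell-1}(\yst)) \le \cb\epsilon_\ell$. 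Hence $b(\yst) \le \big(\cc + \ca(1+\cb+2\csafe)\big)\epsilon_\ell$, and substituting into the bound of the previous paragraph yields $\Delta(\yhat_\ell) \le 6\big(\cc + \ca(1+\cb+2\csafe)\big)\epsilon_\ell$ for the constants fixed in \Cref{lem:constants}.

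I expect the only real difficulty to be constant tracking rather than ideas. The cancellation of the $b(\yhat_\ell)$ terms — and therefore the precise factor $6$ — relies on the exploration-bonus constant in the definition of $\yhat_\ell$ being at least the $\cEragest$-width constant, so the coefficient multiplying $b(\yst)$ must be reconciled numerically with \Cref{lem:constants}. The one other point to verify is that $\yst\in\cZ$, which is what lets us apply the Line~\ref{line:ragest_allocation} guarantee at $\yst$; this holds in every invocation of $\ragest$ inside \safebai, where the optimal set satisfies $\cY\subseteq\cZ$.
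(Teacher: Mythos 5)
Your proposal is correct and follows essentially the same argument as the paper: compare the $\argmin$ objective at $\yhat_\ell$ against the feasible $\yst$, pass between $\thetahat^\ell$ and $\thetast$ via $\cEragest$, and bound $b(\yst)$ from the Line~\ref{line:ragest_allocation} design tolerance together with $\Delhatsafe(\yst)\ge -\csafe\epsilon$ and $\Delhat^{\ell-1}(\yst)\le\cb\epsilon_\ell$. Your careful separation of the bonus coefficient ($8$) from the $\cEragest$-width constant ($\sqrt{8}$) actually exposes a small inconsistency in the paper: the paper's chain treats the $\argmin$ objective as though its bonus coefficient were $\sqrt{8}$ (giving the factor $2\sqrt{8}\le 6$), whereas with the algorithm's stated coefficient of $8$ one gets $8+\sqrt{8}$, exactly as you note needs reconciling. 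This is a paper-side constant typo rather than a gap in your reasoning; the cancellation of $b(\yhat_\ell)$ and the structure of the argument are both sound.
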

\begin{proof}
By the definition of $\cEragest$ and $\yhat_\ell$, we can bound
\begin{align*}
\thetast^\top(\yhat_{\ell} - \yhat_{\ell-1}) & \le (\thetahat^\ell)^\top (\yhat_{\ell} - \yhat_{\ell-1})  + \sqrt{8\| \yhat_\ell - \yhat_{\ell-1} \|_{A(\lambda_\ell)^{-1}}^2 \cdot \frac{ \log(\ubb)}{\tau_\ell}} \\
& = \min_{y \in \cY} (\thetahat^\ell)^\top (y- \yhat_{\ell-1})  + \sqrt{8\| y - \yhat_{\ell-1} \|_{A(\lambda_\ell)^{-1}}^2 \cdot \frac{ \log(\ubb)}{\tau_\ell}} \\
& \le (\thetahat^\ell)^\top (\yst - \yhat_{\ell-1})  + \sqrt{8\| \yst - \yhat_{\ell-1} \|_{A(\lambda_\ell)^{-1}}^2 \cdot \frac{ \log(\ubb)}{\tau_\ell}} \\
& \le \thetast^\top (\yst - \yhat_{\ell-1})  +  2\sqrt{8\| \yst - \yhat_{\ell-1} \|_{A(\lambda_\ell)^{-1}}^2 \cdot \frac{ \log(\ubb)}{\tau_\ell}} .
\end{align*}
By the definition of $\tau_\ell$ and $\lambda_\ell$, we have
\begin{align*}
\cc \epsilon_\ell & \ge  \max_{z \in \cZ}  -\ca ( \pos( - \Delhatsafe(z)) + \pos( \Delhat^{\ell-1}(z)) + \epsilon_\ell) + \sqrt{\frac{ \| z - \yhat_{\ell-1} \|_{A(\lambda_\ell)^{-1}}^2 \cdot \log(\ubb)}{ \tau_\ell}}   \\
& \ge  -\ca ( \pos( - \Delhatsafe(\yst)) + \pos( \Delhat^{\ell-1}(\yst)) + \epsilon_\ell) + \sqrt{\frac{ \| \yst - \yhat_{\ell-1} \|_{A(\lambda_\ell)^{-1}}^2 \cdot \log(\ubb)}{ \tau_\ell}}   \\
& \overset{(a)}{\ge}  - \ca ( \pos( \Delhat^{\ell-1}(\yst)) + (1+2\csafe)\epsilon_\ell) + \sqrt{\frac{ \| \yst - \yhat_{\ell-1} \|_{A(\lambda_\ell)^{-1}}^2 \cdot \log(\ubb)}{ \tau_\ell}} \\
& \overset{(b)}{\ge} -\ca(1+\cb + 2 \csafe)\epsilon_\ell + \sqrt{\| \yst - \yhat_{\ell-1} \|_{A(\lambda_\ell)^{-1}}^2 \cdot \frac{\log(\ubb)}{ \tau_\ell}}
\end{align*}
where $(a)$ uses that $\Delhatsafe(\yst) \ge -\csafe \epsilon \ge -2\csafe \epsilon_\ell $, by definition, and $(b)$ follows by our assumption on $\Delhat^{\ell-1}(\yst)$. This implies that
\begin{align*}
\sqrt{\| \yst - \yhat_{\ell-1} \|_{A(\lambda_\ell)^{-1}}^2 \cdot \frac{\log(\ubb)}{ \tau_\ell}} \le (\cc + \ca(1+\cb + 2 \csafe))\epsilon_\ell.
\end{align*}
Combining this with the above we have that
\begin{align*}
\thetast^\top(\yhat_\ell - \yhat_{\ell-1}) \le \thetast^\top(\yst - \yhat_{\ell-1}) + 6 (\cc + \ca(1+\cb + 2 \csafe))\epsilon_\ell .
\end{align*}
Rearranging this proves the result. 
\end{proof}

\begin{lemma}\label{lem:ragest_gap_est}
For all $z \in \cZ$ and all $\ell$, on the event $\cEragest$,
\begin{align*}
| \Delhat^\ell(z) - \Delta(z)| \le \cf \left ( \epsilon_\ell + \pos(\Delta(z)) + \pos( - \Delhatsafe(z)) \right ). 
\end{align*}
\end{lemma}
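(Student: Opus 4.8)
The plan is to prove the bound by induction on $\ell$, working throughout on the event $\cEragest$, which by \Cref{lem:cEragest_high_prob} occurs with probability at least $1-\delta$. For the base case ($\ell=1$) the induction hypothesis at index $0$ reads $|\Delhat^0(z) - \Delta(z)| \le \cf(\epsilon_0 + \pos(\Delta(z)) + \pos(-\Delhatsafe(z)))$; since $\Delhat^0 \equiv 0$, $\|\thetast\|_2 \le 1$, and $\|z\|_2, \|\yst\|_2 \le 1$, the left side is $|\Delta(z)| \le 2 = \cf\epsilon_0$, so it holds. So fix $\ell \ge 1$, assume the claim at $\ell-1$ for \emph{all} elements of $\cZ$, and recall $\epsilon_{\ell-1} = 2\epsilon_\ell$.

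The starting point is the decomposition
\begin{align*}
|\Delhat^\ell(z) - \Delta(z)| \le |\Delhat^\ell(z) - \thetast^\top(z-\yhat_\ell)| + |\thetast^\top(z-\yhat_\ell) - \Delta(z)| = |\Delhat^\ell(z) - \thetast^\top(z-\yhat_\ell)| + \Delta(\yhat_\ell),
\end{align*}
where the last step uses $\thetast^\top(z-\yhat_\ell) - \Delta(z) = \thetast^\top(\yst - \yhat_\ell) = -\Delta(\yhat_\ell)$ together with $\Delta(\yhat_\ell) \ge 0$ (as $\yst$ minimizes $y \mapsto y^\top\thetast$ over $\cY \ni \yhat_\ell$). \Cref{lem:ragest_gap_bound} controls the first term by $8\ca\big(\pos(\Delhat^{\ell-1}(z)) + \pos(-\Delhatsafe(z)) + \pos(\Delhat^{\ell-1}(\yhat_\ell))\big) + 8(\cc + \ca + 2\ca\csafe)\epsilon_\ell$, so it remains to bound $\pos(\Delhat^{\ell-1}(z))$, $\pos(\Delhat^{\ell-1}(\yhat_\ell))$, and $\Delta(\yhat_\ell)$.

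For $\pos(\Delhat^{\ell-1}(z))$, the induction hypothesis (plus $\epsilon_{\ell-1} = 2\epsilon_\ell$) gives $\pos(\Delhat^{\ell-1}(z)) \le (1+\cf)\pos(\Delta(z)) + \cf\pos(-\Delhatsafe(z)) + 2\cf\epsilon_\ell$. To bound $\Delta(\yhat_\ell)$ we invoke \Cref{lem:yhat_gap}; its hypothesis $\Delhat^{\ell-1}(\yst) \le \cb\epsilon_\ell$ follows from the induction hypothesis at $\yst$, using $\Delta(\yst) = 0$ and $\pos(-\Delhatsafe(\yst)) \le \pos(\csafe\epsilon) \lesssim \epsilon_\ell$ (by the standing assumption $\Delhatsafe(y) \ge -\csafe\epsilon$ on $\cY$ and $\epsilon \le 2\epsilon_\ell$ over the relevant range of $\ell$), which yields $\Delhat^{\ell-1}(\yst) \le \cf(2\epsilon_\ell + \pos(\csafe\epsilon)) \le \cb\epsilon_\ell$ once $\cf$ is small relative to $\cb$. \Cref{lem:yhat_gap} then gives $\Delta(\yhat_\ell) \le 6(\cc + \ca(1+\cb+2\csafe))\epsilon_\ell$, and combining this with the induction hypothesis at $\yhat_\ell$ (and $\pos(-\Delhatsafe(\yhat_\ell)) \lesssim \epsilon_\ell$) yields $\pos(\Delhat^{\ell-1}(\yhat_\ell)) \lesssim \epsilon_\ell$ with an explicit constant depending on $\ca,\cb,\cc,\csafe,\cf$.

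Substituting these bounds back, $|\Delhat^\ell(z) - \Delta(z)|$ is at most $8\ca(1+\cf)\pos(\Delta(z)) + 8\ca(1+\cf)\pos(-\Delhatsafe(z)) + c'\epsilon_\ell$, where $c'$ is an explicit constant assembled from $\ca,\cb,\cc,\csafe,\cf$. The induction closes provided $8\ca(1+\cf) \le \cf$ and $c' \le \cf$; these are among the constraints on the constants shown to be simultaneously satisfiable in \Cref{lem:constants}. I expect the main obstacle to be purely the constant bookkeeping — certifying $c' \le \cf$ together with all the other constraints feeding \Cref{lem:constants} — rather than anything conceptual; the one structural point to be careful about is that the induction must be carried on all of $\cZ$ at once, since the bound at a fixed $z$ invokes the inductive hypothesis not only at $z$ but also at the data-dependent point $\yhat_\ell$ and at $\yst$.
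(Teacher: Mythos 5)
Your proposal is correct and follows essentially the same route as the paper's own proof: the same decomposition $\Delhat^\ell(z) - \Delta(z) = \big(\Delhat^\ell(z) - \thetast^\top(z - \yhat_\ell)\big) - \Delta(\yhat_\ell)$, the same invocation of \Cref{lem:ragest_gap_bound} and \Cref{lem:yhat_gap} (with the hypothesis of the latter verified by applying the inductive bound at $\yst$ together with $\Delhatsafe(\yst) \ge -\csafe\epsilon$), and the same closing via the constant constraints of \Cref{lem:constants}. The only deviation is your base-case bound $|\Delta(z)| \le 2 = \cf\epsilon_0$, which is in fact the correct numerology for $\epsilon_\ell = \tfrac{2}{\cf}2^{-\ell}$ and quietly fixes a slip in the paper's stated "$|\Delta(z)| \le 1$, $\cf\epsilon_0 = 1$".
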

\begin{proof}
We prove this by induction. Assume that at $\ell-1$, for all $z \in \cZ$,
\begin{align*}
| \Delhat^{\ell-1}(z) - \Delta(z)| \le \cf \left ( \epsilon_{\ell-1} + \pos( \Delta(z)) + \pos( - \Delhatsafe(z)) \right ). 
\end{align*}
On $\cEragest$ and by \Cref{lem:ragest_gap_bound} we can bound
\begin{align*}
| \Delhat^\ell(z) - \Delta(z)| & = |\Delhat^\ell(z) - (R(z) - R(\yhat_\ell) + R(\yhat_\ell) - R(\yst))|\\
& \le | \Delhat^\ell(z) - (R(z) - R(\yhat_{\ell}))| + \Delta(\yhat_\ell) \\
& \le 8 \ca \left (  \pos( \Delhat^{\ell-1}(z)) + \pos( -\Delhatsafe(z)) + \pos(\Delhat^{\ell-1}(\yhat_\ell)) \right ) + 8(\cc + \ca + 2\ca \csafe) \epsilon_\ell + \Delta(\yhat_\ell) .
\end{align*}
By the inductive hypothesis, we can bound
\begin{align*}
& \pos( \Delhat^{\ell-1}(z)) \le (1+\cf) \pos( \Delta(z)) + \cf \pos(- \Delhatsafe(z)) + \cf \epsilon_{\ell-1} \\
& \pos(\Delhat^{\ell-1}(\yhat_\ell)) \le (1+\cf) \pos( \Delta(\yhat_\ell)) + \cf \pos( - \Delhatsafe(\yhat_\ell)) + \cf \epsilon_{\ell-1}. 
\end{align*}
By construction $ \pos( - \Delhatsafe(\yhat_\ell)) \ge -\csafe \epsilon \ge - 2\csafe \epsilon_\ell$, so
\begin{align*}
| \Delhat^{\ell}(z) - \Delta(z)|  \le 8 \ca (1 + \cf) \pos(& \Delta(z))  + 8 \ca ( 1 + \cf) \pos(\Delhatsafe(z)) + (8 \ca (1 + \cf) + 1) \Delta(\yhat_\ell) \\
& + 8( \ca \cf (1 + \csafe) + \cc + \ca + 2\ca \csafe) \epsilon_\ell.
\end{align*}
It remains to bound $\Delta(\yhat_\ell) = R(\yhat_\ell) - R(\yst)$. On the inductive hypothesis, we have that
\begin{align*}
|\Delhat^{\ell-1}(\yst) - \Delta(\yst)| \le \cf \left ( \epsilon_{\ell-1} + \pos( \Delta(\yst)) + \pos( -\Delhatsafe(\yst)) \right ).
\end{align*}
By definition, $\Delta(\yst) = 0$ and $\Delhatsafe(\yst) \ge -\csafe \epsilon \ge - 2 \csafe \epsilon_\ell$, which implies that $\Delhat^{\ell-1}(\yst) \le 2\cf(1 + \csafe) \epsilon_\ell$. It follows that the conditions of \Cref{lem:yhat_gap} are met as long as
\begin{align}\label{eq:ragest_constants_condition1}
2 \cf (1 + \csafe) \le \cb,
\end{align}
so we can bound $\Delta(\yhat_\ell) \le  6 (\cc + \ca(1+\cb + 2 \csafe))\epsilon_\ell$. Thus,
\begin{align*}
| \Delhat^{\ell}(z) - \Delta(z)|  \le 8 \ca (1 + \cf) \pos(& \Delta(z))  + 8 \ca ( 1 + \cf) \pos(\Delhatsafe(z)) + 8( \ca \cf (1 + \csafe) + \cc + \ca + 2\ca \csafe) \epsilon_\ell \\
& +   (8 \ca (1 + \cf) + 1) (6 (\cc + \ca(1+\cb + 2 \csafe))) \epsilon_\ell.
\end{align*}
which proves the inductive hypothesis as long as
\begin{align}\label{eq:ragest_constants_condition2}
& 8( \ca \cf (1 + \csafe) + \cc + \ca + 2\ca \csafe) +  (8 \ca (1 + \cf) + 1) (6 (\cc + \ca(1+\cb + 2 \csafe)))  \le \cf \nonumber \\
& 8 \ca (1 + \cf) \le \cf 
\end{align}

For the base case, we need to show that 
\begin{align*}
| \Delhat^{0}(z) - \Delta(z)| \le \cf \left ( \epsilon_{0} + \pos( \Delta(z)) + \pos( - \Delhatsafe(z)) \right ). 
\end{align*}
By construction $\Delhat^{0}(z) = 0$ for all $z$, and $\pos( \Delta(z)) \ge 0$, $\pos( - \Delhatsafe(z)) \ge 0$. Thus, it suffices to show $|\Delta(z)| \le \cf \epsilon_0$. However, by construction $|\Delta(z)| \le 1$, and $\cf \epsilon_0 = 1$, which proves the base case. 
\end{proof}

\subsection{Bounding the Sample Complexity}

\begin{lemma}\label{lem:ragest_complexity}
On the event $\cEragest$, $\ragest$ will terminate after collecting at most
\begin{align*}
C \cdot \sum_{\ell=1}^{\lceil \log (2/\cf\epsilon) \rceil} \inf_{\lambda \in \simplex_\cX} \max_{z \in \cZ} \frac{ \| z - \yst\|_{A(\lambda)^{-1}}^2}{(\pos(-\Delhatsafe(z)) + \pos( \Delta(z)) + \epsilon_\ell )^2} \cdot \log (\ubb) + 8 \lceil \log \tfrac{2}{\cf\epsilon} \rceil \log (\tfrac{4 |\cZ|^2 \lceil \log \tfrac{2}{\cf\epsilon} \rceil^2}{\delta})
\end{align*}
samples, for a universal constant $C$.
\end{lemma}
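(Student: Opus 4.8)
The plan is to work on the event $\cEragest$ of \Cref{lem:cEragest_high_prob} (which holds with probability at least $1-\delta$) and to control each $\tau_\ell$ by playing the allocation chosen on Line~\ref{line:ragest_allocation} off against an \emph{oracle} allocation. First note that the Line~\ref{line:ragest_allocation} objective is non-increasing in $\tau$ — the $-\ca(\cdots)$ piece is constant in $\tau$ and $\sqrt{1/\tau}$ is decreasing, and an infimum over $\lambda$ of a maximum of such functions inherits the monotonicity — so $\tau_\ell$ is well-defined, and being the smallest dyadic integer exceeding $4\log\tfrac{4|\cZ|^2\ell^2}{\delta}$ at which the objective reaches $\cc\epsilon_\ell$, it satisfies $\tau_\ell \le 2\bar\tau_\ell + 8\log\tfrac{4|\cZ|^2\ell^2}{\delta}$ for \emph{any} $\bar\tau_\ell$ at which some fixed allocation already drives the objective down to $\cc\epsilon_\ell$. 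Set $\rho_\ell := \inf_{\lambda\in\simplex_\cX}\max_{z\in\cZ}\tfrac{\|z-\yst\|_{A(\lambda)^{-1}}^2}{(\pos(-\Delhatsafe(z))+\pos(\Delta(z))+\epsilon_\ell)^2}$, let $\lambda_\ell^\star$ be a minimizer, and write $D_z := \pos(-\Delhatsafe(z))+\pos(\Delta(z))+\epsilon_\ell$; the goal is to show that $\bar\tau_\ell := \lceil C'\rho_\ell\log(\ubb)\rceil$ works for a suitable universal constant $C'$.

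Two reductions are needed. First, to replace $\yhat_{\ell-1}$ by $\yst$ in the numerator: applying \Cref{lem:ragest_gap_est} at round $\ell-2$ to $\yst$, together with $\Delta(\yst)=0$ and $\Delhatsafe(\yst)\ge-\csafe\epsilon$, gives $\Delhat^{\ell-2}(\yst)\le\cb\epsilon_{\ell-1}$ (here \eqref{eq:ragest_constants_condition1} is invoked), so \Cref{lem:yhat_gap} yields $\Delta(\yhat_{\ell-1}) = \cO(\epsilon_\ell)$ for $\ell\ge2$, while $\Delta(\yhat_0)\le2 = \cO(\epsilon_1)$ holds trivially. Since $\yhat_{\ell-1}\in\cY\subseteq\cZ$, it is one of the arms in the maximum defining $\rho_\ell$, hence $\|\yst-\yhat_{\ell-1}\|_{A(\lambda_\ell^\star)^{-1}}^2\le\rho_\ell(\pos(-\Delhatsafe(\yhat_{\ell-1}))+\pos(\Delta(\yhat_{\ell-1}))+\epsilon_\ell)^2 = \cO(\rho_\ell\epsilon_\ell^2)$, and the triangle inequality gives $\|z-\yhat_{\ell-1}\|_{A(\lambda_\ell^\star)^{-1}}^2 \le 2\|z-\yst\|_{A(\lambda_\ell^\star)^{-1}}^2 + 2\|\yst-\yhat_{\ell-1}\|_{A(\lambda_\ell^\star)^{-1}}^2 \le C''\rho_\ell D_z^2$ for all $z\in\cZ$ (using $\epsilon_\ell \le D_z$). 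Second, to replace $\Delhat^{\ell-1}(z)$ by $\Delta(z)$ in the regularizer: \Cref{lem:ragest_gap_est} at round $\ell-1$ gives $\pos(\Delhat^{\ell-1}(z))\ge(1-\cf)\pos(\Delta(z))-\cf\pos(-\Delhatsafe(z))-2\cf\epsilon_\ell$, so $-\ca(\pos(-\Delhatsafe(z))+\pos(\Delhat^{\ell-1}(z))+\epsilon_\ell)\le-c'\ca D_z$ with $c' = 1-2\cf > 0$ provided $\cf$ is small enough.

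Plugging $\lambda=\lambda_\ell^\star$ and $\tau=\bar\tau_\ell$ into the Line~\ref{line:ragest_allocation} objective, each $z$-term is then at most $-c'\ca D_z + \sqrt{C''\rho_\ell D_z^2\log(\ubb)/(C'\rho_\ell\log(\ubb))} = \big(-c'\ca+\sqrt{C''/C'}\,\big)D_z \le 0 \le \cc\epsilon_\ell$ once $C'\ge C''/(c'\ca)^2$ — an extra constraint on the universal constants that remains compatible with \Cref{lem:constants}. Hence $\tau_\ell\le 2\bar\tau_\ell + 8\log\tfrac{4|\cZ|^2\ell^2}{\delta} \le 2C'\rho_\ell\log(\ubb) + 8\log\tfrac{4|\cZ|^2\ell^2}{\delta} + 2$. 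Summing over $\ell=1,\dots,\lceil\log(2/\cf\epsilon)\rceil$, the first piece yields $2C'\log(\ubb)\sum_\ell\rho_\ell$ — the first term of the claim with $C=2C'$ — while each $\log\tfrac{4|\cZ|^2\ell^2}{\delta}\le\log\tfrac{4|\cZ|^2\lceil\log(2/\cf\epsilon)\rceil^2}{\delta}$, so the rest sums to at most $8\lceil\log(2/\cf\epsilon)\rceil\log\tfrac{4|\cZ|^2\lceil\log(2/\cf\epsilon)\rceil^2}{\delta}$ plus a lower-order $2\lceil\log(2/\cf\epsilon)\rceil$ absorbed into the leading terms.

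The main obstacle is the first reduction: the design on Line~\ref{line:ragest_allocation} measures every $z$ against the \emph{empirical} minimizer $\yhat_{\ell-1}$, whereas the target complexity measures against the \emph{true} minimizer $\yst$, and these must be shown close not merely in value but in the data-adapted norm $\|\cdot\|_{A(\lambda_\ell^\star)^{-1}}$. The fix — observing that $\yhat_{\ell-1}$ is itself one of the arms in $\cZ$, so its distance to $\yst$ is automatically bounded by $\rho_\ell$ times its provably small gap — forces one to chain the inductive guarantees of \Cref{lem:ragest_gap_est} into \Cref{lem:yhat_gap} across consecutive rounds, so that care with the round indexing ($\ell$ vs.\ $\ell-1$ vs.\ $\ell-2$, and the $\ell=1$ base case) and with the accumulation of constants — so that the required inequalities on $\ca,\cb,\cc,\cf,\csafe$ and $C'$ all land inside the feasible region of \Cref{lem:constants} — is where the real work lies.
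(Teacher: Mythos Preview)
Your argument is correct and follows the paper's overall strategy: bound each $\tau_\ell$ by the dyadic overshoot of an oracle $\bar\tau_\ell$, swap $\yhat_{\ell-1}$ for $\yst$ via the triangle inequality plus the observation that $\yhat_{\ell-1}$ is itself one of the arms in the $\max$, and replace $\Delhat^{\ell-1}(z)$ by $\Delta(z)$ using \Cref{lem:ragest_gap_est}.

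The one place you take a longer route than the paper is your first reduction. You control $D_{\yhat_{\ell-1}}$ by first establishing $\Delta(\yhat_{\ell-1})=\cO(\epsilon_\ell)$ through a two-step chain (\Cref{lem:ragest_gap_est} at round $\ell-2$ feeding into \Cref{lem:yhat_gap}), which forces you to track indices $\ell,\ell-1,\ell-2$ and handle a separate $\ell=1$ base case. The paper instead works in the \emph{empirical} denominator $\ca\pos(-\Delhatsafe(z))+\ca\pos(\Delhat^{\ell-1}(z))+(\ca+\cc)\epsilon_\ell$ and simply observes that $\Delhat^{\ell-1}(\yhat_{\ell-1})=0$ \emph{by definition} (since $\Delhat^{\ell-1}(y)=(y-\yhat_{\ell-1})^\top\thetahat^{\ell-1}+\|y-\yhat_{\ell-1}\|\cdot(\ldots)$ vanishes at $y=\yhat_{\ell-1}$) together with $\pos(-\Delhatsafe(\yhat_{\ell-1}))\le 2\csafe\epsilon_\ell$. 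This immediately gives the $\cO(\epsilon_\ell)$ denominator at $\yhat_{\ell-1}$ without touching \Cref{lem:yhat_gap} at all, and only \emph{afterwards} is \Cref{lem:ragest_gap_est} invoked (once, at round $\ell-1$) to pass from $\Delhat^{\ell-1}(z)$ to $\Delta(z)$. Your approach buys nothing extra; the paper's shortcut is cleaner and avoids the round-indexing bookkeeping you flag as the ``main obstacle.'' Both proofs implicitly use $\yhat_{\ell-1}\in\cZ$ (i.e.\ $\cY\subseteq\cZ$), which is satisfied in every invocation of $\ragest$ in the paper but is only stated as $\cY\subseteq\cX$ in the preliminaries.
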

\begin{proof}
If, for all $z \in \cZ$,
\begin{align*}
\tau \ge \frac{\| z - \yhat_{\ell-1} \|_{A(\lambda)^{-1}}^2}{( \ca ( \pos( - \Delhatsafe(z)) + \pos(\Delhat^{\ell-1}(z)) + \epsilon_\ell) + \cc \epsilon_\ell)^2} \cdot \log(\ubb)
\end{align*}
we will have that the objective on \Cref{line:ragest_allocation} of $\ragest$ is less than $\cc \epsilon_\ell$. Since we can take the best-case $\lambda \in \simplex_\cX$, and since we have that $\tau_\ell$ will be at most a factor of 2 from the optimal $\tau$, it follows that
\begin{align*}
\tau_\ell & \le \inf_{\lambda \in \simplex_\cX} \max_{z \in \cZ} \frac{2\| z - \yhat_{\ell-1} \|_{A(\lambda)^{-1}}^2}{( \ca ( \pos( - \Delhatsafe(z)) + \pos(\Delhat^{\ell-1}(z)) + \epsilon_\ell) + \cc \epsilon_\ell)^2} \cdot \log(\ubb) \vee 8 \log \ubb \\
& \le  \inf_{\lambda \in \simplex_\cX} \max_{z \in \cZ} \frac{2\| z - \yhat_{\ell-1} \|_{A(\lambda)^{-1}}^2}{( \ca ( \pos( - \Delhatsafe(z)) + \pos(\Delhat^{\ell-1}(z)) + \epsilon_\ell) + \cc \epsilon_\ell)^2} \cdot \log(\ubb) + 8 \log \ubb
\end{align*}
where the additional $8 \log \ubb$ factor arises since we always require $\tau_\ell \ge 4 \log \ubb$.

We can upper bound
\begin{align*}
 \| z - \yhat_{\ell-1} \|_{A(\lambda)^{-1}}^2 \le 2  \| z - \yst \|_{A(\lambda)^{-1}}^2 + 2  \| \yst - \yhat_{\ell-1} \|_{A(\lambda)^{-1}}^2 .
\end{align*}
By construction, $\pos(- \Delhatsafe(\yhat_{\ell-1})) \le 2 \csafe \epsilon_\ell$, so for any $z$, $\pos(- \Delhatsafe(\yhat_{\ell-1})) - 2 \csafe \epsilon_\ell \le \pos(-\Delhatsafe(z))$. Furthermore, by definition,
\begin{align*}
\Delhat^{\ell-1}(\yhat_{\ell-1}) & = 0
\end{align*}
so $\pos(\Delhat^{\ell-1}(z)) \ge \pos(\Delhat^{\ell-1}(\yhat_{\ell-1})) $. Thus, 
\begin{align*}
& \inf_{\lambda \in \simplex_\cX} \max_{z \in \cZ} \frac{\| z - \yhat_{\ell-1} \|_{A(\lambda)^{-1}}^2}{(\ca \pos(-\Delhatsafe(z)) + \ca \pos( \Delhat^{\ell-1}(z)) + (\ca + \cc ) \epsilon_\ell )^2} \\
& \le \inf_{\lambda \in \simplex_\cX} \max_{z \in \cZ} \frac{2 \| z - \yst\|_{A(\lambda)^{-1}}^2}{(\ca \pos(-\Delhatsafe(z)) + \ca \pos( \Delhat^{\ell-1}(z)) + (\ca + \cc ) \epsilon_\ell )^2} \\
& \qquad + \frac{2 \| \yhat_{\ell-1} - \yst\|_{A(\lambda)^{-1}}^2}{(\ca \pos(-\Delhatsafe(\yhat_{\ell-1})) + \ca \pos( \Delhat^{\ell-1}(\yhat_{\ell-1})) + (\ca + \cc - 2 \ca \csafe ) \epsilon_\ell )^2} \\
& \le \inf_{\lambda \in \simplex_\cX} \max_{z \in \cZ} \frac{4 \| z - \yst\|_{A(\lambda)^{-1}}^2}{(\ca \pos(-\Delhatsafe(z)) + \ca \pos( \Delhat^{\ell-1}(z)) + (\ca + \cc - 2 \ca \csafe )  \epsilon_\ell )^2}.
\end{align*} 
By \Cref{lem:ragest_gap_est}, we can lower bound
\begin{align*}
\Delhat^{\ell-1}(z) & \ge \Delta(z) - \cf ( \epsilon_\ell + \pos( \Delta(z))+ \pos(-\Delhatsafe(z))) 
\end{align*}
so 
\begin{align*}
& \ca \pos(-\Delhatsafe(z)) + \ca \pos( \Delhat^{\ell-1}(z)) + (\ca + \cc - 2 \ca \csafe )  \epsilon_\ell \\
& \qquad \ge  \ca ( 1 - \cf) \pos(- \Delhatsafe(z)) + \ca ( 1 - \cf) \pos(\Delta(z)) + (\ca + \cc - 2 \ca \csafe - \ca \cf )  \epsilon_\ell .
\end{align*}
The result follows by combining these inequalities and as long as 
\begin{align}\label{eq:ragest_constants_condition3}
\ca (1 - \cf) \ge \cabs, \qquad \ca + \cc - 2 \ca \csafe  - \ca \cf  \ge \cabs.
\end{align}
\end{proof}

\begin{proof}[Proof of \Cref{thm:ragest}]
\Cref{thm:ragest} follows directly from \Cref{lem:ragest_complexity} and \Cref{lem:ragest_gap_est} since, by \Cref{lem:cEragest_high_prob}, $\cEragest$ holds with probability at least $1-\delta$. 
\end{proof}


\section{Safe Best-Arm Identification}\label{sec:safe_bai_proofs}

\begin{algorithm}[h]
\caption{\textbf{Be}st \textbf{S}afe Arm \textbf{Ide}ntification (\safebai, defined with generic constants)}
\begin{algorithmic}[1]
\State \textbf{input:} tolerance $\epsilon$, confidence $\delta$
\State $\iotaeps \leftarrow \lceil \log(\frac{2}{\min \{ c_3, c_4 \} \cdot \epsilon}) \rceil$, $\Delhatsafe^0(z) \leftarrow 0, \Delhat^0(z) \leftarrow 0$ for all $z \in \cZ$
\For{$\ell = 1,2,\ldots,\iotaeps$}
	\State $\epsilon_\ell \leftarrow  \frac{2}{\min \{ c_3, c_4 \}} \cdot 2^{-\ell}$
	\Statex {\color{blue} \texttt{// Solve experiment to reduce uncertainty on safety constraints}}
	\State Let $\tau_\ell$ be the minimal value of $\tau = 2^{j} \ge 4 \log \ub$ such that the objective to the following is no greater than $\ce \epsilon_\ell$, and $\lambda_\ell$ the corresponding optimal distribution 
	\begin{align*}
	\inf_{\lambda \in \simplex_\cX} \max_{z \in \cZ}  - \cd \left ( \min_j |\Delhatsafe^{j,\ell-1}(z)| + \max_{j} \pos(-\Delhatsafe^{j,\ell-1}(z)) + \pos(\Delhat^{\ell-1}(z)) + \epsilon_\ell \right ) + \sqrt{\frac{ \| z \|_{A(\lambda)^{-1}}^2 \cdot \log(\ub)}{ \tau}}
	\end{align*}
	\State Sample $x_t \sim \lambda_\ell$, collect $\tau_\ell$ observations $\{ (x_t,r_t,s_{t,1},\ldots,s_{t,m}) \}_{t=1}^{\tau_\ell}$ 
	\State $\{ \muhat^{i,\ell} \}_{i=1}^m \leftarrow \rips(\{ (x_t, s_{t,i} ) \}_{t=1}^{\tau_\ell},\cZ,\frac{\delta}{2m\ell^2})$ \hfill {\color{blue} \texttt{// Estimate safety constraints}}
	\State $\Delhatsafe^{i,\ell}(z) \leftarrow  \gamma - z^\top \muhat^{i,\ell} + \| z \|_{A(\lambda_\ell)^{-1}}\sqrt{\tau_\ell^{-1}   \log(\ub)}$ {\color{blue} \texttt{// Safety gap estimates}} 
	\Statex {\color{blue} \texttt{// Form set of arms guaranteed to be safe}}
	\State \begin{align*}
	\cY_\ell \leftarrow \bigg \{ z \in \cZ \ : \  & 8 \cd \Big ( \min_j |\Delhatsafe^{j,\ell-1}(z)| + \max_{j} \pos(-\Delhatsafe^{j,\ell-1}(z)) + \pos(\Delhat^{\ell-1}(z)) \Big ) \\
	& + 8(\cd + \ce)\epsilon_\ell \le \Delhatsafe^{i,\ell}(z), \forall i \in [n] \bigg \} \cup \cY_{\ell-1}
	\end{align*}
	\Statex {\color{blue} \texttt{// Refine estimates of optimality gaps}}
	\State $\{ \Delhat^{\ell}(z) \}_{z \in \cZ} \leftarrow \ragest \Big (\cZ,\cY_\ell,\epsilon_\ell,\tfrac{\delta}{4 \ell^2}, \{ \Delhatsafe(z) \leftarrow \max_j \pos(-\Delhatsafe^{j,\ell}(z)) \}_{z \in \cZ} \Big )$
\EndFor
\Statex {\color{blue} \texttt{// Form set of arms guaranteed to be at most $\epsilon$-unsafe}}
\State \begin{align*}
\cYend \leftarrow \bigg \{ z \in \cZ \ : \  & 8 \cd \Big ( \min_j |\Delhatsafe^{j,\iotaeps}(z)| + \max_{j} \pos(-\Delhatsafe^{j,\iotaeps}(z)) + \pos(\Delhat^{\iotaeps}(z)) \Big ) \\
	& + 8(\cd + \ce)\epsilon- \cg \epsilon \le \Delhatsafe^{i,\iotaeps}(z), \forall i \in [n] \bigg \} 
\end{align*}
\Statex {\color{blue}  \texttt{// Find $\epsilon$-good arm out of $\epsilon$-safe arms}}
\State $\{ \Delhat^\eend(z) \}_{z \in \cYend} \leftarrow \ragest(\cYend,\cYend,\epsilon,\delta)$ 
\State \textbf{return} $\zhat = \argmin_{z \in \cYend} \Delhat^{\eend}(z)$
\end{algorithmic}
\label{alg:safe_bai_comp_const}
\end{algorithm}

\subsection{Preliminaries}
In general we want to consider multiple safety constraints, and let $m$ denote the number of constraints. In such settings, we will denote $\Delsafe^i(z)$ the safety gap for safety constraint $i$. 

Define
\begin{align*}
\Deltil^{\ell}(z) := \thetast^\top z -  \min_{y \in \cY_\ell} \thetast^\top y.
\end{align*}

\subsection{Algorithm and Main Result}

\begin{theorem}[Full version of \Cref{thm:main_complexity_nice}]\label{thm:main_complexity}
With probability at least $1-2\delta$, \Cref{alg:safe_bai_comp} returns an arm $\zhat$ such that 
\begin{align}\label{eq:main_thm_correctness}
\zhat^\top \thetast \ge (\zst)^\top \thetast -  \epsilon, \quad \Delsafe(\zhat) \ge -  \epsilon
\end{align}
and terminates after collecting at most
\begin{align*}
& C \cdot \sum_{\ell=1}^{\iotaeps}\inf_{\lambda \in \simplex_\cX} \max_{z \in \cZ} \frac{ \| z \|_{A(\lambda)^{-1}}^2 \cdot \log(\ub)}{\left ( \min_j |\Delsafe^j(z)| + \max_j \pos(-\Delsafe^j(z))  + \pos(\Delta^{\epsilon_{\ell-1}}(z)) + \epsilon_\ell \right )^2} \\
& \qquad + C \log \frac{1}{\epsilon} \cdot \sum_{\ell=1}^{\iotaeps}  \inf_{\lambda \in \simplex_\cX} \max_{z \in \cZ} \frac{\| z - \zst \|_{A(\lambda)^{-1}}^2 \cdot \log(\tfrac{8 |\cZ|^2 \log^4(1/\epsilon)}{\delta})}{( \max_j \pos( - \Delsafe^{j}(z)) + \pos ( \Delta^{\epsilon_\ell}(z)) + \epsilon_\ell)^2} + C_0
\end{align*}
samples for a universal constant $C$, $C_0 = \poly\log(\frac{1}{\epsilon}, | \cZ|) \cdot \log \frac{1}{\delta}$.
\end{theorem}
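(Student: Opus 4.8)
The plan is to follow the standard template for round-based experiment-design algorithms (as in \cite{fiez2019sequential}), but with an extra layer of induction that couples the safety-gap estimates produced in Phase~2 with the optimality-gap estimates produced by \ragest in Phase~3. First I would define the global good event $\cE$ as the intersection, over all rounds $\ell \le \iotaeps$ together with the final round, of the event that the bound \eqref{eq:rips_alg} from the \rips estimator of Phase~2 holds (invoking \Cref{prop:rips} with confidence $\delta/2m\ell^2$ for each of the $m$ constraints) and the event $\cEragest$ that each \ragest call satisfies its guarantee (\Cref{thm:ragest}, with confidence $\delta/4\ell^2$ in the loop and $\delta$ for the final call). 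A union bound, using $\sum_{\ell\ge1}\ell^{-2}=\pi^2/6$, gives $\Pr[\cE]\ge 1-2\delta$, and all subsequent claims are made on $\cE$.

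The core of the argument is an induction on $\ell$ establishing two facts simultaneously: (i) the safety-gap estimates concentrate, $|\Delhatsafe^{i,\ell}(z)-\Delsafe^i(z)| \lesssim c_\ell(z)+\epsilon_\ell$ for all $i,z$, which follows from \eqref{eq:rips_alg} together with the defining property of the Phase-1 design (the objective on \Cref{line:safety_gap_est} being at most $\ce\epsilon_\ell$ forces $\|z\|_{A(\lambda_\ell)^{-1}}\sqrt{\tau_\ell^{-1}\log(\ub)}\lesssim c_\ell(z)+\epsilon_\ell$); and (ii) the regularizer $c_\ell(z)$ is sandwiched between constant multiples of the true quantity $\min_j|\Delsafe^j(z)| + \max_j\pos(-\Delsafe^j(z)) + \pos(\Delta^{\epsilon_{\ell-1}}(z)) + \epsilon_\ell$, i.e.\ \eqref{eq:safety_exp_design_result2}. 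Fact (ii) is where the two estimators interact: $c_\ell$ is built from the round-$(\ell-1)$ safety estimates and from $\Delhat^{\ell-1}(z)$, and to relate $\Delhat^{\ell-1}(z)$ to the true optimality gap $\Delta^{\epsilon_{\ell-1}}(z)$ I would invoke \Cref{thm:ragest} from round $\ell-1$, which bounds $|\Delhat^{\ell-1}(z)-\Deltil^{\ell-1}(z)|$ in terms of $\pos(\Deltil^{\ell-1}(z))$, $\pos(-\Delhatsafe^{\ell-1}(z))$ and $\epsilon_{\ell-1}$, where $\Deltil^{\ell-1}(z)=\thetast^\top z - \min_{y\in\cY_{\ell-1}}\thetast^\top y$ is the gap relative to the currently certified-safe set. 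I would then prove, again by induction, the structural properties of $\cY_\ell$: that it is monotone increasing, that it only ever contains arms that are $O(\epsilon)$-safe (so no genuinely unsafe arm is certified), and---using the Phase-1 accuracy from fact (i)---that it contains every arm with $\min_i\Delsafe^i(z)\gtrsim\epsilon_{\ell-1}$, which yields $\Deltil^\ell(z)\ge\Delta^{\epsilon_{\ell-1}}(z)-O(\epsilon_\ell)$ and closes the loop needed for (ii). The bookkeeping with the generic constants is handled by \Cref{lem:constants}, which guarantees the inequalities required at each step (e.g.\ \eqref{eq:ragest_constants_condition1}--\eqref{eq:ragest_constants_condition3} and the additional conditions referenced there) are simultaneously satisfiable.

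With the induction in hand, correctness \eqref{eq:main_thm_correctness} follows from the final round: the structural properties of $\cYend$ show every $z\in\cYend$ is $\epsilon$-safe and that $\cYend$ contains an arm whose value is within $\epsilon$ of $\zst^\top\thetast$ (because $\zst$ itself, or a sufficiently near-optimal safe arm, survives all the certification thresholds once $\epsilon_\ell\le O(\epsilon)$); then the final \ragest call on $(\cYend,\cYend)$ estimates the within-$\cYend$ gaps accurately enough (\Cref{thm:ragest} with tolerance $\epsilon$ and confidence $\delta$) that $\zhat=\argmin_{z\in\cYend}\Delhat^{\eend}(z)$ has value within $\epsilon$ of the best arm in $\cYend$. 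For the sample-complexity bound I would sum $\tau_\ell$ over $\ell$: the Phase-1 contribution is bounded, via the definition of $\tau_\ell$ as roughly the smallest power of two making the design objective at most $\ce\epsilon_\ell$, by $\inf_\lambda\max_z\|z\|_{A(\lambda)^{-1}}^2\log(\ub)/(c_\ell(z)+\epsilon_\ell)^2$ plus the floor $8\log(\ub)$; substituting the lower bound on $c_\ell(z)$ from fact (ii) turns the denominator into the true regularizer, giving the first sum. The Phase-3 contribution is read off from \Cref{thm:ragest} applied in each round with $\cY\leftarrow\cY_\ell$, and---after using $\Deltil^\ell(z)\ge\Delta^{\epsilon_\ell}(z)-O(\epsilon_\ell)$ and that \ragest's $\yst$ agrees in value with $\zst$---yields the second sum. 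The final-round \ragest contributes the $C_0=\poly\log(1/\epsilon,|\cZ|)\log(1/\delta)$ term (its confidence is $\delta$, not $\delta/\ell^2$, so no extra $\log$ of a round index appears), and collecting the $\poly\log(1/\epsilon)$ factors from $\iotaeps$ and from \Cref{thm:ragest}'s own round count gives $C=\poly\log(1/\epsilon)$. The main obstacle I expect is keeping fact (ii)---the two-sided comparison of $c_\ell(z)$ with the true problem-dependent regularizer---consistent through the mutual recursion between Phases~2 and 3: the safety design in round $\ell$ is regularized by round-$(\ell-1)$ quantities while \ragest in round $\ell$ is regularized by round-$\ell$ safety estimates, so the induction must be sequenced carefully (Phase~2 of round $\ell$, then the $\cY_\ell$ update, then Phase~3 of round $\ell$) and every inequality must survive multiplication by the various generic constants; getting $\cY_\ell$ to contain exactly the right arms---enough that $\Deltil^\ell$ dominates $\Delta^{\epsilon_{\ell-1}}(z)$ yet never a genuinely unsafe arm---is the delicate point, and is precisely where the specific form of the certification threshold defining $\cY_\ell$ (with its $8\cd(\cdots)+8(\cd+\ce)\epsilon_\ell$ slack) is needed.
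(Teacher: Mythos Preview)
Your overall architecture matches the paper's proof almost exactly: the good event, the coupled induction between Phases~2 and~3, the sample-complexity bookkeeping, and the constants handled by \Cref{lem:constants} are all as in the paper. There is, however, one genuine gap in the plan.

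You propose to obtain $\Deltil^\ell(z)\ge \Delta^{\epsilon_{\ell-1}}(z)$ by showing that $\cY_\ell$ \emph{contains every arm} with $\min_i\Delsafe^i(z)\gtrsim\epsilon_{\ell-1}$. This is too strong and is false in general: the certification threshold defining $\cY_\ell$ has the term $\pos(\Delhat^{\ell-1}(z))$ on the left-hand side, so a very suboptimal but comfortably safe arm (say $\min_i\Delsafe^i(z)=10\epsilon_\ell$ but $\Deltil^{\ell-1}(z)$ huge) will fail the threshold and will \emph{not} be certified. The paper (\Cref{lem:safebai_gap_lb}) sidesteps this by arguing only about the single arm $y_\ell:=\argmax_{y:\min_i\Delsafe^i(y)\ge\epsilon_\ell} y^\top\thetast$: if $y_\ell\notin\cY_\ell$, then using the inductive control of the estimated gaps the failed certification forces $\pos(\Deltil^\ell(y_\ell))>0$, i.e.\ some arm already in $\cY_\ell$ has strictly larger value than $y_\ell$, which directly yields $\Deltil^\ell(z)\ge\Delta^{\epsilon_\ell}(z)$ without ever needing $y_\ell\in\cY_\ell$.

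A related point: you write that $\cY_\ell$ ``only ever contains arms that are $O(\epsilon)$-safe.'' For $\ell\le\iotaeps$ the paper actually shows the stronger statement that every $y\in\cY_\ell$ is \emph{exactly} safe (\Cref{lem:safe_bai_gaps1}), and this matters: both \Cref{lem:safebai_gap_lb} and the correctness argument (\Cref{lem:Yend_properties}, \Cref{lem:correctness}) use $\Deltil^{\ell}(\zst)\le 0$, which requires that no arm in $\cY_\ell$ beats $\zst$---something you cannot guarantee if $\cY_\ell$ were allowed to contain slightly-unsafe arms with higher value. The $O(\epsilon)$-safe relaxation applies only to $\cYend$, whose definition has the extra $-\cg\epsilon$ slack precisely to admit $\zst$ while still giving the $\Delsafe^i(\zhat)\ge -\epsilon$ conclusion.
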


\subsection{Estimating the Safety Value}

\begin{lemma}\label{lem:cEsafe_high_prob}
Let $\cEsafe$ denote the event that, for all $\ell$, $z \in \cZ$, $i \in [m]$: 
\begin{align*}
| z^\top (\muhat^{i,\ell} - \must^i)|  \le \sqrt{8 \| z \|_{A(\lambda_\ell)^{-1}}^2 \cdot \frac{ \log(\ub)}{\tau_\ell}} .
\end{align*}
Then $\Pr[\cEsafe] \ge 1-\delta$. 
\end{lemma}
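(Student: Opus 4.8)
The plan is to obtain $\cEsafe$ by applying the \rips guarantee of \Cref{prop:rips} once for each round $\ell$ and each safety constraint $i \in [m]$, and then taking a union bound over all such pairs. Fix $\ell$ and $i$. In round $\ell$ the algorithm forms $\muhat^{i,\ell}$ by running $\rips$ on the $\tau_\ell$ samples $\{(x_t, s_{t,i})\}_{t=1}^{\tau_\ell}$ with active set $\cZ$ and confidence $\delta_\ell := \tfrac{\delta}{2m\ell^2}$. Since $s_{t,i} = \musti^\top x_t + w_{t,i}^\mu$ with $w_{t,i}^\mu$ i.i.d.\ mean-zero $1$-subGaussian (in particular, finite variance), the hypotheses of \Cref{prop:rips} are met as soon as $\tau_\ell \ge 4\log\tfrac{2|\cZ|}{\delta_\ell}$. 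The first thing I would verify is that this is exactly the sample-size floor built into the algorithm: substituting $\delta_\ell = \tfrac{\delta}{2m\ell^2}$ gives $\tfrac{2|\cZ|}{\delta_\ell} = \tfrac{4m|\cZ|\ell^2}{\delta} = \ub$, and by construction $\tau_\ell \ge 4\log\ub$, so the condition holds.

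Granting this, \Cref{prop:rips} gives that with probability at least $1-\delta_\ell$, for all $z \in \cZ$,
\begin{align*}
| z^\top (\muhat^{i,\ell} - \musti)| \le \| z \|_{A(\lambda_\ell)^{-1}} \sqrt{\frac{8\log(2|\cZ|/\delta_\ell)}{\tau_\ell}} = \sqrt{8 \| z \|_{A(\lambda_\ell)^{-1}}^2 \cdot \frac{\log\ub}{\tau_\ell}},
\end{align*}
using $2|\cZ|/\delta_\ell = \ub$ once more. This is precisely the inequality defining $\cEsafe$ for that particular pair $(\ell, i)$.

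The last step is the union bound. Writing $\mathcal{B}_{\ell,i}$ for the failure event of round $\ell$ and constraint $i$, we have $\Pr[\mathcal{B}_{\ell,i}] \le \delta_\ell$, so
\begin{align*}
\Pr\Big[\bigcup_{\ell \ge 1}\bigcup_{i=1}^m \mathcal{B}_{\ell,i}\Big] \le \sum_{\ell=1}^\infty \sum_{i=1}^m \frac{\delta}{2m\ell^2} = \frac{\delta}{2}\sum_{\ell=1}^\infty \frac{1}{\ell^2} = \frac{\pi^2}{12}\delta \le \delta,
\end{align*}
and on the complement $\cEsafe$ holds, which gives $\Pr[\cEsafe] \ge 1-\delta$.

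I do not expect any genuine obstacle here; the argument is essentially bookkeeping, mirroring the union-bound proof for $\cEragest$ in \Cref{lem:cEragest_high_prob}. The only point that requires care is checking that the confidence deflation $\tfrac{\delta}{2m\ell^2}$ chosen in the algorithm simultaneously matches (i) the quantity $\ub$ appearing inside the logarithm of the confidence width and (ii) the sample-size lower bound $\tau_\ell \ge 4\log\ub$, so that \Cref{prop:rips} can be invoked verbatim, and that the resulting series in $\ell$ is summable with total mass at most $\delta$.
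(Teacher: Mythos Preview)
Your proposal is correct and follows exactly the approach the paper takes: apply \Cref{prop:rips} for each $(\ell,i)$ with confidence $\delta/(2m\ell^2)$ and union bound, just as in \Cref{lem:cEragest_high_prob}. The paper's proof is a one-line reference to these two ingredients, and you have simply (and accurately) filled in the bookkeeping details, including the check that $2|\cZ|/\delta_\ell = \ub$ and that $\tau_\ell \ge 4\log\ub$.
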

\begin{proof}
This follows directly from \Cref{prop:rips} and a union bound, as in \Cref{lem:cEragest_high_prob}. 
\end{proof}

\begin{lemma}\label{lem:safe_event_bound}
On $\cEsafe$, for all $z \in \cZ$, $i \in [m]$, and all $\ell$,
\begin{align*}
| \Delhatsafe^{i,\ell}(z) - \Delsafe^i(z)| \le 3 \cd \left ( \min_j |\Delhatsafe^{j,\ell-1}(z)| + \max_{j} \pos(-\Delhatsafe^{j,\ell-1}(z)) + \pos(\Delhat^{\ell-1}(z)) \right ) + 3(\cd + \ce)\epsilon_\ell .
\end{align*}
\end{lemma}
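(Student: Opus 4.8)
The plan is a short, non-inductive computation: unfold the two definitions and substitute in the two facts we already have, namely the concentration event $\cEsafe$ and the defining property of the Phase-1 allocation $(\tau_\ell,\lambda_\ell)$ from \Cref{alg:safe_bai_comp_const}. No induction on $\ell$ is needed at this step, because the right-hand side is expressed entirely through the previous round's quantities $\Delhatsafe^{j,\ell-1}$, $\Delhat^{\ell-1}$ --- exactly the regularizer that enters the round-$\ell$ design. First I would write, using $\Delsafe^i(z) = \gamma - z^\top\must^i$ and the definition $\Delhatsafe^{i,\ell}(z) = \gamma - z^\top\muhat^{i,\ell} + \|z\|_{A(\lambda_\ell)^{-1}}\sqrt{\tau_\ell^{-1}\log(\ub)}$,
\begin{align*}
\Delhatsafe^{i,\ell}(z) - \Delsafe^i(z) = z^\top\big(\must^i - \muhat^{i,\ell}\big) + \|z\|_{A(\lambda_\ell)^{-1}}\sqrt{\tau_\ell^{-1}\log(\ub)}.
\end{align*}
On $\cEsafe$, \Cref{lem:cEsafe_high_prob} bounds the first term by $\sqrt{8}\,\|z\|_{A(\lambda_\ell)^{-1}}\sqrt{\tau_\ell^{-1}\log(\ub)}$, hence
\begin{align*}
\big|\Delhatsafe^{i,\ell}(z) - \Delsafe^i(z)\big| \le (1+\sqrt{8})\,\|z\|_{A(\lambda_\ell)^{-1}}\sqrt{\tau_\ell^{-1}\log(\ub)}.
\end{align*}

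The second and final step is to use that $\tau_\ell$ and $\lambda_\ell$ are, by construction, chosen so that the Phase-1 design objective is at most $\ce\epsilon_\ell$ at $(\tau_\ell,\lambda_\ell)$; evaluating that objective at an arbitrary $z\in\cZ$ and rearranging gives
\begin{align*}
\|z\|_{A(\lambda_\ell)^{-1}}\sqrt{\tau_\ell^{-1}\log(\ub)} \le \cd\Big(\min_j|\Delhatsafe^{j,\ell-1}(z)| + \max_j\pos(-\Delhatsafe^{j,\ell-1}(z)) + \pos(\Delhat^{\ell-1}(z))\Big) + (\cd+\ce)\,\epsilon_\ell.
\end{align*}
Substituting this into the previous display and collecting terms yields the asserted bound with leading constant $1+\sqrt{8}$; since $1+\sqrt{8}<4$ this is exactly the claimed inequality up to the choice of absolute constant (recorded as $3$ in the statement --- harmless, since every downstream use of this lemma, e.g.\ the definition of $\cY_\ell$ with its factor $8\cd$, carries a fixed constant margin, and \Cref{lem:constants} certifies that all such absolute constants can be chosen consistently).

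I do not expect a real obstacle in this particular lemma --- it is the most elementary brick of the safety analysis, and its only content is to make explicit that the Phase-1 experiment design is built precisely so that the estimation error of each arm's safety gap is a $\cd$-fraction of its regularizer $\min_j|\Delhatsafe^{j,\ell-1}(z)| + \max_j\pos(-\Delhatsafe^{j,\ell-1}(z)) + \pos(\Delhat^{\ell-1}(z))$ plus a floor at scale $\ce\epsilon_\ell$. The two points deserving a sentence of care are: matching the RIPS confidence level $\delta/(2m\ell^2)$ to the logarithmic factor $\log(\ub)=\log(4m|\cZ|\ell^2/\delta)$ sitting inside the definition of $\Delhatsafe^{i,\ell}$ (already absorbed into \Cref{lem:cEsafe_high_prob} via \Cref{prop:rips} together with a union bound over $i\in[m]$ and over $\ell$), and checking that the minimizing $\tau_\ell$ exists, which holds because as $\tau\to\infty$ the design objective decreases to $\max_{z}\,-\cd(\cdots+\epsilon_\ell)<0\le\ce\epsilon_\ell$.
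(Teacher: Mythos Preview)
Your proof is correct and follows essentially the same two-step approach as the paper: first use the design's stopping criterion to bound $\sqrt{\tau_\ell^{-1}\|z\|_{A(\lambda_\ell)^{-1}}^2\log(\ub)}$ by the regularizer, then invoke $\cEsafe$ to control the estimation error. You are in fact slightly more careful than the paper, which writes the bound as $\sqrt{8}$ times the square-root term (apparently dropping the additive $\|z\|_{A(\lambda_\ell)^{-1}}\sqrt{\tau_\ell^{-1}\log(\ub)}$ in the definition of $\Delhatsafe^{i,\ell}$) whereas you correctly obtain $(1+\sqrt{8})$; as you note, this constant discrepancy is immaterial given \Cref{lem:constants}.
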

\begin{proof}
By construction, we have that
\begin{align*}
 \max_{z \in \cZ}  - \cd \left ( \min_j |\Delhatsafe^{j,\ell-1}(z)| + \max_{j} \pos(-\Delhatsafe^{j,\ell-1}(z)) + \pos(\Delhat^{\ell-1}(z)) + \epsilon_\ell \right ) + \sqrt{\frac{ \| z \|_{A(\lambda_\ell)^{-1}}^2 \cdot \log(\ub)}{ \tau_\ell}} \le \ce \epsilon_\ell.
	\end{align*}
This implies that, for all $z \in \cZ$,
\begin{align*}
\sqrt{\frac{ \| z \|_{A(\lambda_\ell)^{-1}}^2 \cdot \log(\ub)}{ \tau_\ell}}  \le \min_j \cd |\Delhatsafe^{j,\ell-1}(z)| + \max_{j} \cd \pos(-\Delhatsafe^{j,\ell-1}(z)) + \cd \pos(\Delhat^{\ell-1}(z)) +(\cd + \ce) \epsilon_\ell .
\end{align*}
On $\cEsafe$, we have
\begin{align*}
| \Delhatsafe^{i,\ell}(z)  - \Delsafe^i(z) | & \le \sqrt{8\frac{ \| z \|_{A(\lambda_\ell)^{-1}}^2 \cdot  \log(\ub)}{ \tau_\ell}} \\
& \le  \min_j 3\cd |\Delhatsafe^{j,\ell-1}(z)| + \max_{j} 3\cd \pos(-\Delhatsafe^{j,\ell-1}(z)) + 3 \cd \pos(\Delhat^{\ell-1}(z)) + 3(\cd + \ce)\epsilon_\ell
\end{align*}
which proves the result.
\end{proof}

\subsection{Tying Together Safety Estimation with Optimality Estimation}

\begin{defn}[Optimality Good Event]
Let $\cEragest^\ell$ denote the success event of $\ragest$ when called at the $\ell$th epoch, and $\cEragest := \cup_\ell \cEragest^\ell$. 
\end{defn}

\begin{lemma}\label{lem:safe_bai_gaps1}
On the event $\cEsafe \cap \cEragest$, we have that:
\begin{enumerate}
\item For all $\ell \le \iotaeps$, $y \in \cY_\ell$, and $i \in [m]$, $y^\top \musti \le \gamma$ .
\item For all $\ell$ and $z \in \cZ$, $\Deltil^{\ell-1}(z) \le \Deltil^\ell(z)$.
\end{enumerate}
\end{lemma}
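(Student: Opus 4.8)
The plan is to prove the two claims in order: claim (1) by induction on $\ell$, and claim (2) by a one-line monotonicity argument. Both will only use $\cEsafe$ together with \Cref{lem:safe_event_bound}, so the appearance of $\cEragest$ in the hypothesis is just for consistency with the global good event used downstream.

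For claim (1), I would induct on $\ell$ with the vacuous base case $\cY_0 := \emptyset$. For the inductive step, fix $y \in \cY_\ell$ and $i \in [m]$. By the definition of $\cY_\ell$ in \Cref{alg:safe_bai_comp_const}, either $y \in \cY_{\ell-1}$, in which case $y^\top \musti \le \gamma$ is immediate from the inductive hypothesis, or $y$ satisfies the membership condition
\[
8 \cd \big ( \min_j |\Delhatsafe^{j,\ell-1}(y)| + \max_{j} \pos(-\Delhatsafe^{j,\ell-1}(y)) + \pos(\Delhat^{\ell-1}(y)) \big ) + 8(\cd + \ce)\epsilon_\ell \le \Delhatsafe^{i,\ell}(y).
\]
In the latter case I would invoke \Cref{lem:safe_event_bound}, which on $\cEsafe$ gives $\Delsafe^i(y) \ge \Delhatsafe^{i,\ell}(y) - 3 \cd \big ( \min_j |\Delhatsafe^{j,\ell-1}(y)| + \max_{j} \pos(-\Delhatsafe^{j,\ell-1}(y)) + \pos(\Delhat^{\ell-1}(y)) \big ) - 3(\cd + \ce)\epsilon_\ell$; combining with the membership inequality leaves $\Delsafe^i(y) \ge 5 \cd(\cdots) + 5(\cd+\ce)\epsilon_\ell \ge 0$, since $\cd,\ce,\epsilon_\ell > 0$ and all the $|\cdot|$ and $\pos(\cdot)$ terms are non-negative. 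Hence $\gamma - y^\top \musti = \Delsafe^i(y) \ge 0$, which closes the induction. The only point requiring care is that the regularizer in \Cref{lem:safe_event_bound} is literally the same quantity appearing in the definition of $\cY_\ell$, so that the two bounds cancel cleanly and leave a positive coefficient $8-3=5$.

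For claim (2), the key observation is that $\cY_{\ell-1} \subseteq \cY_\ell$ by the very definition of $\cY_\ell$ (it is a union with $\cY_{\ell-1}$). Minimizing $\thetast^\top y$ over the larger set can only decrease the value, so $\min_{y \in \cY_\ell}\thetast^\top y \le \min_{y \in \cY_{\ell-1}}\thetast^\top y$; subtracting both sides from $\thetast^\top z$ yields $\Deltil^{\ell}(z) \ge \Deltil^{\ell-1}(z)$ for every $z \in \cZ$ (with the convention that an empty set makes the corresponding minimum $+\infty$, so the inequality is vacuous whenever $\cY_{\ell-1}=\emptyset$). I do not expect any real obstacle: claim (1) is a routine two-case induction whose only content is the constant arithmetic above, and claim (2) is immediate from the nested structure of the sets $\cY_\ell$.
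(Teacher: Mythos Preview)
Your proposal is correct and follows essentially the same approach as the paper: use \Cref{lem:safe_event_bound} together with the membership condition defining $\cY_\ell$ to conclude $\Delsafe^i(y) \ge 0$, and use $\cY_{\ell-1} \subseteq \cY_\ell$ for the monotonicity of $\Deltil^\ell$. You are simply more explicit than the paper about the induction handling the case $y \in \cY_{\ell-1}$ and about the $8-3=5$ arithmetic, which the paper elides.
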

\begin{proof}
By \Cref{lem:safe_event_bound}, we have that
\begin{align*}
\Delhatsafe^{i,\ell}(z) - 3 \cd \left ( \min_j |\Delhatsafe^{j,\ell-1}(z)| + \max_{j} \pos(-\Delhatsafe^{j,\ell-1}(z)) + \pos(\Delhat^{\ell-1}(z)) \right ) - 3(\cd + \ce)\epsilon_\ell \le \Delsafe^i(z).
\end{align*}
Thus, if the inclusion condition of $\cY_\ell$ is met, it must be the case that $\Delsafe^i(z) \ge 0$ for all $i$.

The second conclusion follows directly since $\cY_{\ell-1} \subseteq \cY_\ell$. 
\end{proof}

\begin{lemma}[Key Estimation Error Bound]\label{lem:safe_bai_gaps2}
On the event $\cEsafe \cap \cEragest$, for all $z \in \cZ$, $\ell$, and $i$, we have
\begin{align*}
|\Delhat^\ell(z) - \Deltil^\ell(z)| & \le c_3 \left (\epsilon_\ell +  \pos(\Deltil^{\ell}(z)) + \max_j \pos(-\Delsafe^j(z)) \right ) \\
| \Delhatsafe^{i,\ell}(z) - \Delsafe^i(z) | & \le c_4 \left (\epsilon_\ell +  \pos(\Deltil^{\ell}(z)) +  \min_j | \Delsafe^j(z) | + \max_j \pos(-\Delsafe^j(z)) \right ).
\end{align*}
\end{lemma}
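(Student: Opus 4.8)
The plan is to prove the two displayed inequalities simultaneously by induction on the epoch index $\ell$, throughout working on the event $\cEsafe \cap \cEragest$. The base case $\ell = 1$ is immediate: the epoch‑$0$ quantities $\Delhat^0(z)$ and $\Delhatsafe^{i,0}(z)$ are identically zero, so \Cref{lem:safe_event_bound} and the Phase‑1 stopping rule each contribute only $O((\cd+\ce)\epsilon_1)$ of slack, which sits inside $c_4\epsilon_1$ and $c_3\epsilon_1$ for the chosen constants. For the inductive step at $\ell \ge 2$ I would establish the \emph{safety} bound first and the \emph{optimality} bound second, which is consistent with the information flow inside the epoch — Phase 1 and Phase 2 produce $\lambda_\ell$ and $\Delhatsafe^{j,\ell}$ using only epoch‑$(\ell-1)$ estimates, and Phase 3's $\ragest$ call is then fed $\Delhatsafe^{j,\ell}$ — so there is no circularity.

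Step 1 (safety bound). Starting from \Cref{lem:safe_event_bound}, which on $\cEsafe$ bounds $|\Delhatsafe^{i,\ell}(z) - \Delsafe^i(z)|$ by $3\cd\big(\min_j|\Delhatsafe^{j,\ell-1}(z)| + \max_j\pos(-\Delhatsafe^{j,\ell-1}(z)) + \pos(\Delhat^{\ell-1}(z))\big) + 3(\cd+\ce)\epsilon_\ell$, I would replace each epoch‑$(\ell-1)$ quantity by the corresponding true quantity plus slack, using the two inductive hypotheses: $|\Delhatsafe^{j,\ell-1}(z)| \le |\Delsafe^j(z)| + c_4(\cdots)$ and $\pos(-\Delhatsafe^{j,\ell-1}(z)) \le \pos(-\Delsafe^j(z)) + c_4(\cdots)$, and $\pos(\Delhat^{\ell-1}(z)) \le \pos(\Deltil^{\ell-1}(z)) + c_3(\cdots)$. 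Substituting $\epsilon_{\ell-1} = 2\epsilon_\ell$ and $\pos(\Deltil^{\ell-1}(z)) \le \pos(\Deltil^\ell(z))$ (the latter by \Cref{lem:safe_bai_gaps1}, since $\cY_{\ell-1}\subseteq\cY_\ell$), and collecting the coefficients of $\min_j|\Delsafe^j(z)|$, $\max_j\pos(-\Delsafe^j(z))$, $\pos(\Deltil^\ell(z))$, and $\epsilon_\ell$, yields the claimed safety inequality provided the constants obey inequalities of the shape $3\cd(1 + O(c_3+c_4)) \le c_4$ and $3\cd\,O(c_3+c_4) + 3(\cd+\ce) \le c_4$ — these are among the constraints discharged in \Cref{lem:constants}.

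Step 2 (optimality bound). Apply \Cref{thm:ragest} to the epoch‑$\ell$ invocation of $\ragest$ — its preconditions hold since every $y\in\cY_\ell$ has $\min_i\Delhatsafe^{i,\ell}(y) > 0 \ge -\csafe\epsilon_\ell$ by the rule defining $\cY_\ell$, and the target gap of that call coincides with $\Deltil^\ell$ — to get $|\Delhat^\ell(z) - \Deltil^\ell(z)| \le \cf\big(\epsilon_\ell + \pos(\Deltil^\ell(z)) + \max_j\pos(-\Delhatsafe^{j,\ell}(z))\big)$. The crux is to bound the regularizer $\max_j\pos(-\Delhatsafe^{j,\ell}(z))$ in terms of $\epsilon_\ell$, $\pos(\Deltil^\ell(z))$, and $\max_j\pos(-\Delsafe^j(z))$ only. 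Writing $b_\ell(z) := \|z\|_{A(\lambda_\ell)^{-1}}\sqrt{\tau_\ell^{-1}\log\ub}$, the $\cEsafe$ bound gives $\Delhatsafe^{j,\ell}(z) \ge \Delsafe^j(z) - (\sqrt8-1)b_\ell(z)$, hence $\max_j\pos(-\Delhatsafe^{j,\ell}(z)) \le \max_j\pos(-\Delsafe^j(z)) + (\sqrt8-1)b_\ell(z)$; and the Phase‑1 stopping rule together with the epoch‑$(\ell-1)$ hypotheses give $b_\ell(z) \le \cd\big(\kappa_1\min_j|\Delsafe^j(z)| + \kappa_2\max_j\pos(-\Delsafe^j(z)) + \kappa_2\pos(\Deltil^\ell(z))\big) + O(\cd+\ce)\epsilon_\ell$ with absolute constants $\kappa_1,\kappa_2$, each of the form $1 + O(c_3+c_4)$. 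The only dangerous term is $\min_j|\Delsafe^j(z)|$, which has no place in the target bound, so I would split on whether $z$ is safe: if $z$ is unsafe then $\min_j|\Delsafe^j(z)| \le \max_j\pos(-\Delsafe^j(z))$ and folds into an allowed term; if $z$ is safe then for the worst constraint $j^\star$ one has $\pos(-\Delhatsafe^{j^\star,\ell}(z)) \le \pos((\sqrt8-1)b_\ell(z) - \Delsafe^{j^\star}(z))$, and since $\Delsafe^{j^\star}(z) \ge \min_j\Delsafe^j(z) \ge 0$ while $(\sqrt8-1)\cd\kappa_1 < 1$, the $\min_j\Delsafe^j(z)$ contribution is more than cancelled, leaving only $O(\cd)\pos(\Deltil^\ell(z)) + O(\cd+\ce)\epsilon_\ell$. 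Substituting the resulting bound $\max_j\pos(-\Delhatsafe^{j,\ell}(z)) \le (1+O(\cd))\max_j\pos(-\Delsafe^j(z)) + O(\cd)\pos(\Deltil^\ell(z)) + O(\cd+\ce)\epsilon_\ell$ into the $\ragest$ estimate closes the induction, subject to constant inequalities of the form $\cf(1+O(\cd+\ce)) \le c_3$, again supplied by \Cref{lem:constants}.

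The main obstacle is Step 2: the $\ragest$ error bound re‑introduces the estimated worst‑case constraint violation, and a careless triangle inequality leaks a $\min_j|\Delsafe^j(z)|$ term — ``how deep $z$ sits inside its tightest constraint'' — into the optimality estimate, where the target bound has no room for it. The case split above is what removes it, and it works precisely because $\cd$ is taken small enough that $(\sqrt8-1)\cd\kappa_1 < 1$, so in the safe case that term is dominated by (and subtracted off against) the very gap it would otherwise corrupt. The remainder is routine constant‑chasing and repeated use of $\epsilon_{\ell-1}=2\epsilon_\ell$ and the monotonicity $\Deltil^{\ell-1}\le\Deltil^\ell$.
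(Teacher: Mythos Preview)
Your proposal is correct and follows essentially the same inductive strategy as the paper: both prove the two bounds simultaneously by induction on $\ell$, reduce the epoch-$(\ell-1)$ estimated quantities to true quantities via the inductive hypotheses together with $\epsilon_{\ell-1}=2\epsilon_\ell$ and the monotonicity $\Deltil^{\ell-1}\le\Deltil^{\ell}$ from \Cref{lem:safe_bai_gaps1}, and both must kill a spurious $\min_j|\Delsafe^j(z)|$ term that tries to leak into the optimality bound.

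A few small differences worth noting. First, the paper's base case is at $\ell=0$, using only that $\epsilon_0 = 2/\min\{c_3,c_4\}$ is large enough that $|\Deltil^0(z)|\le c_3\epsilon_0$ and $|\Delsafe^i(z)|\le c_4\epsilon_0$ hold trivially; this is slightly cleaner than your $\ell=1$ base case. Second, in the paper the optimality step is written entirely in terms of $\Delhatsafe^{j,\ell-1}$, so only the inductive hypothesis (rather than your just-proved Step~1 at level $\ell$) is needed; your version, which works with $\Delhatsafe^{j,\ell}$ and the Phase-1 stopping rule directly, matches the algorithm more faithfully and is equally valid. Third, for eliminating the $\min_j|\Delsafe^j(z)|$ contribution, the paper argues per constraint, using that $\pos(-\Delsafe^i(z)+c_4|\Delsafe^i(z)|)=(1+c_4)\pos(-\Delsafe^i(z))$ regardless of the sign of $\Delsafe^i(z)$ (for $c_4<1$); your safe/unsafe case split on $z$ reaches the same conclusion and is a fine alternative.
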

\begin{proof}
We prove this by induction. Assume that the above inequalities hold at epoch $\ell-1$. On $\cEsafe \cap \cEragest$, by \Cref{lem:ragest_gap_est} and \Cref{lem:safe_event_bound}, we have
\begin{align*}
|\Delhat^\ell(z) - \Deltil^\ell(z)| & \le c_1(\epsilon_\ell +  \pos(\Deltil^{\ell}(z)) + \max_j \pos(-\Delhatsafe^{j,\ell-1}(z))) \\
| \Delhatsafe^{i,\ell}(z) - \Delsafe^i(z) | & \le c_2(\epsilon_\ell +  \pos(\Delhat^{\ell-1}(z)) +  \min_j | \Delhatsafe^{j,\ell-1}(z) | + \max_j \pos(-\Delhatsafe^{j,\ell-1}(z))) .
\end{align*}
By the inductive hypothesis, we can bound
\begin{align*}
\pos(\Delhat^{\ell-1}(z)) & \le \pos \left ( \Deltil^{\ell-1}(z) + c_3(\epsilon_{\ell-1} +  \pos(\Deltil^{\ell-1}(z)) +  \max_j \pos(-\Delsafe^j(z))) \right ) \\
& \le (1+c_3) \pos(\Deltil^{\ell-1}(z)) + c_3 \epsilon_{\ell-1} + \max_j c_3 \pos(-\Delsafe^j(z)) \\
& \le (1+c_3) \pos(\Deltil^{\ell}(z)) + 2 c_3 \epsilon_{\ell} + \max_j c_3 \pos(-\Delsafe^j(z))
\end{align*}
where the last inequality follows since, by \Cref{lem:safe_bai_gaps1}, $\Deltil^{\ell-1}(z) \le \Deltil^\ell(z)$.

Furthermore, again applying the inductive hypothesis,
\begin{align*}
\Delhatsafe^{i,\ell-1}(z) & \le \Delsafe^i(z) + c_4(\epsilon_{\ell-1} + \pos(\Deltil^{\ell-1}(z)) +   \min_j | \Delsafe^j(z) | + \max_j \pos(-\Delsafe^j(z))) \\
& \le \Delsafe^i(z) + 2c_4\epsilon_{\ell} + c_4 \pos(\Deltil^{\ell}(z)) +   \min_j c_4 | \Delsafe^j(z) | + \max_j c_4 \pos(-\Delsafe^j(z)) \\
& \le \Delsafe^i(z) + 2c_4\epsilon_{\ell} + c_4 \pos(\Deltil^{\ell}(z)) +    c_4 | \Delsafe^i(z) | + \max_j c_4 \pos(-\Delsafe^j(z)).
\end{align*}
Similarly,
\begin{align*}
\pos(-\Delhatsafe^{i,\ell-1}(z)) & \le \pos \left ( - \Delsafe^i(z) + 2c_4 \epsilon_{\ell} + c_4 \pos(\Deltil^{\ell}(z)) + \min_j c_4 | \Delsafe^j(z) | + \max_j c_4 \pos(-\Delsafe^j(z)) \right ) \\
& \le \pos \left ( -\Delsafe^i(z) + \min_j c_4 | \Delsafe^j(z) | \right ) +  2 c_4 \epsilon_{\ell} + c_4 \pos(\Deltil^{\ell}(z)  ) + \max_j c_4 \pos(-\Delsafe^j(z)) \\
& \le \pos \left ( -\Delsafe^i(z) +  c_4 | \Delsafe^i(z) | \right ) +  2 c_4 \epsilon_{\ell} + c_4 \pos(\Deltil^{\ell}(z)  ) + \max_j c_4 \pos(-\Delsafe^j(z)).
\end{align*}
Note that if $\Delsafe^i(z) \le 0$, then 
\begin{align*}
\pos ( -\Delsafe^i(z) + c_4 | \Delsafe^i(z)|) = \pos(-\Delsafe^i(z) - c_4 \Delsafe^i(z)) = (1+c_4) \pos(-\Delsafe^i(z))
\end{align*}
and if $\Delsafe^i(z) > 0$, then for $c_4 < 1$, $-\Delsafe^i(z) + c_4 | \Delsafe^i(z)| \le 0$, so
\begin{align*}
\pos ( -\Delsafe^i(z) + c_4 | \Delsafe^i(z)|) = 0 = (1+c_4) \pos(-\Delsafe^i(z)).
\end{align*}
Thus,
\begin{align*}
\pos(-\Delhatsafe^{i,\ell-1}(z)) \le  (1+c_4) \pos(-\Delsafe^i(z)) +  2 c_4 \epsilon_{\ell} + c_4 \pos(\Deltil^{\ell}(z)  ) + \max_j c_4 \pos(-\Delsafe^j(z)).
\end{align*}

Combining these inequalities, it follows that
\begin{align*}
| \Delhatsafe^{i,\ell}(z) - \Delsafe^i(z) | & \le c_2 \left (\epsilon_\ell +  \pos(\Delhat^{\ell-1}(z)) +  \min_j | \Delhatsafe^{j,\ell-1}(z) | + \max_j \pos(-\Delhatsafe^{j,\ell-1}(z)) \right ) \\
& \le c_2(1 + 2 c_3 + 4 c_4) \epsilon_\ell + c_2 (1 + c_3 + 2c_4) \pos(\Deltil^{\ell}(z)) \\
& \qquad +  c_2(1+c_3 + 3c_4) \max_j \pos(-\Delsafe^j(z)) + c_2 ( 1 + c_4) \min_j |\Delsafe^j(z)|
\end{align*}
and
\begin{align*}
|\Delhat^\ell(z) - \Deltil^\ell(z)| & \le c_1(\epsilon_\ell +  \pos(\Deltil^{\ell}(z)) + \max_j \pos(-\Delhatsafe^{j,\ell-1}(z)))  \\
& \le c_1(1 + 2 c_4) \epsilon_\ell + c_1 ( 1 + c_4) \pos(\Deltil^\ell(z)) + c_1 (1 + 2c_4)\max_j \pos(-\Delsafe^j(z)).
\end{align*}
This proves the inductive hypothesis, as long as
\begin{align}\label{eq:constants_condition6}
c_1 (1 + 2 c_4) \le c_3, \quad c_2 (1 + 2 c_3 + 4 c_4) \le c_4.
\end{align}

For the base case, we need to show that 
\begin{align*}
|\Delhat^0(z) - \Deltil^0(z)| & \le c_3 (\epsilon_0 +  \pos(\Deltil^{0}(z)) +  \max_j \pos(-\Delsafe^j(z))) \\
| \Delhatsafe^0(z) - \Delsafe(z) | & \le c_4 (\epsilon_0 +  \pos(\Deltil^{0}(z)) + \min_j | \Delsafe^j(z) | + \max_j \pos(-\Delsafe^j(z)) ).
\end{align*}
By construction, $\Delhat^0(z) = \Delhatsafe^0(z) = 0$. Thus, it suffices to show $|\Deltil^0(z)| \le c_3 \epsilon_0$ and $|\Delsafe(z)| \le c_4 \epsilon_0$. However, both of these are true by our choice of $\epsilon_0$.
\end{proof}

\begin{lemma}\label{lem:safebai_gap_lb}
On the event $\cEsafe \cap \cEragest$, for all $z \in \cZ$ and all $\ell$, we will have
\begin{align*}
\Deltil^\ell(z) \ge \Delta^{\epsilon_\ell}(z) \quad \text{where} \quad \Delta^{\epsilon_\ell}(z) = \max_{y \in \cZ \ : \ \epsilon_\ell \le \min_i \Delsafe^i(y)} y^\top \thetast - z^\top \thetast .
\end{align*}
\end{lemma}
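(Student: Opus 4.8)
The plan is to show, for each $\ell$, that the certified-safe set $\cY_\ell$ contains an arm at least as good as the optimal arm whose true minimum safety gap is $\ge\epsilon_\ell$. Throughout we work on $\cEsafe\cap\cEragest$, so \Cref{lem:safe_event_bound} and \Cref{lem:safe_bai_gaps2} are available. Write $\cS_{\epsilon_\ell}:=\{y\in\cZ:\min_i\Delsafe^i(y)\ge\epsilon_\ell\}$; if $\cS_{\epsilon_\ell}=\emptyset$ the claim is vacuous, so let $y^\dagger$ be its optimizer. Since both $\Deltil^\ell(z)$ and $\Delta^{\epsilon_\ell}(z)$ are the suboptimality of $z$ measured against the best arm of a set — $\cY_\ell$ in the first case and $\cS_{\epsilon_\ell}$ in the second — the desired inequality is equivalent to the assertion that $\cY_\ell$ contains an arm at least as good as $y^\dagger$. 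I would prove this directly, with no induction on $\ell$: it uses only \Cref{lem:safe_event_bound} at epoch $\ell$, \Cref{lem:safe_bai_gaps2} at epoch $\ell-1$, and the fact that $\cY_{\ell-1}\subseteq\cY_\ell$ by construction.

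The proof splits on the sign of $\Deltil^{\ell-1}(y^\dagger)$. If $\Deltil^{\ell-1}(y^\dagger)>0$, then by definition $\cY_{\ell-1}$ already contains an arm strictly better than $y^\dagger$; since $\cY_{\ell-1}\subseteq\cY_\ell$, that arm lies in $\cY_\ell$ and we are done. The substantive case is $\Deltil^{\ell-1}(y^\dagger)\le 0$, where I would show that $y^\dagger$ itself enters $\cY_\ell$ by checking its membership condition $8\cd\big(\min_j|\Delhatsafe^{j,\ell-1}(y^\dagger)|+\max_j\pos(-\Delhatsafe^{j,\ell-1}(y^\dagger))+\pos(\Delhat^{\ell-1}(y^\dagger))\big)+8(\cd+\ce)\epsilon_\ell\le\Delhatsafe^{i,\ell}(y^\dagger)$ for every $i$.

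To verify it, I would first exploit $y^\dagger\in\cS_{\epsilon_\ell}$: all safety gaps of $y^\dagger$ are positive, so $\max_j\pos(-\Delsafe^j(y^\dagger))=0$ and $\min_j|\Delsafe^j(y^\dagger)|=\min_j\Delsafe^j(y^\dagger)=:g\ge\epsilon_\ell$. Feeding $\pos(\Deltil^{\ell-1}(y^\dagger))=0$ (this case) and $\max_j\pos(-\Delsafe^j(y^\dagger))=0$ into \Cref{lem:safe_bai_gaps2} at epoch $\ell-1$ gives $|\Delhatsafe^{j,\ell-1}(y^\dagger)-\Delsafe^j(y^\dagger)|\le c_4(\epsilon_{\ell-1}+g)$ for all $j$ and $\Delhat^{\ell-1}(y^\dagger)\le c_3\epsilon_{\ell-1}$; using $c_4<1/3$ and $g\ge\epsilon_{\ell-1}/2$ these force $\max_j\pos(-\Delhatsafe^{j,\ell-1}(y^\dagger))=0$ and $\min_j|\Delhatsafe^{j,\ell-1}(y^\dagger)|\le(1+c_4)g+c_4\epsilon_{\ell-1}$, so the bracketed ``confidence'' quantity in the membership condition is at most $(1+c_4)g+(c_3+c_4)\epsilon_{\ell-1}$. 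Combining this with the lower bound $\Delhatsafe^{i,\ell}(y^\dagger)\ge\Delsafe^i(y^\dagger)-3\cd\,(\text{the same confidence quantity})-3(\cd+\ce)\epsilon_\ell$ from \Cref{lem:safe_event_bound}, the membership condition reduces to $g\ge 11\cd\big((1+c_4)g+2(c_3+c_4)\epsilon_\ell\big)+11(\cd+\ce)\epsilon_\ell$, which, since $g\ge\epsilon_\ell$, holds whenever the generic constants obey $11\cd(1+c_4)+22\cd(c_3+c_4)+11(\cd+\ce)\le 1$ — an inequality consistent with the settings of \Cref{lem:constants}. Hence $y^\dagger\in\cY_\ell$, and in both cases $\cY_\ell$ contains an arm at least as good as $y^\dagger$, which is the claim.

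The step I expect to be the crux is the second case. The subtlety is that the confidence term governing how accurately we must re-estimate $y^\dagger$'s safety necessarily contains $\min_j|\Delhatsafe^{j,\ell-1}(y^\dagger)|$, which is of order $g$ — the very gap we are trying to certify — so the multiplicative constants $8\cd$ and $3\cd$ must not consume it; the computation closes only because these constants are chosen small and every $\epsilon_\ell$-scale error is absorbed into the slack afforded by $g\ge\epsilon_\ell$. The one genuinely new idea, beyond bookkeeping with the earlier lemmas, is the case split itself: either $y^\dagger$ is dominated by a previously certified arm (nothing to do), or $\Deltil^{\ell-1}(y^\dagger)\le0$, which zeroes out the suboptimality regularizer $\pos(\Delhat^{\ell-1}(y^\dagger))$ and makes $y^\dagger$ cheap to certify safe.
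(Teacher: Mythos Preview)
Your proof is correct and follows essentially the same line as the paper. Both arguments pivot on the same dichotomy for the optimizer $y^\dagger$ (the paper's $y_\ell$): either some already-certified arm beats it, or $y^\dagger$ itself gets certified at epoch $\ell$. The only cosmetic difference is the split variable: the paper asks ``is $y_\ell\in\cY_\ell$?'' and, when the answer is no, shows the sufficient condition \eqref{eq:suff_cond_Yell} forces $\pos(\Deltil^{\ell}(y_\ell))>0$; you instead split on the sign of $\Deltil^{\ell-1}(y^\dagger)$ and, when it is $\le 0$, verify the membership condition directly. These are contrapositive presentations of the same mechanism, and the constant inequality you arrive at, while not literally \eqref{eq:constants_condition4}, is comfortably satisfied by the settings in \Cref{lem:constants}.
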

\begin{proof}
By definition, we will have $z \in \cY_\ell$ if
\begin{align*}
8 \cd \left ( \min_j |\Delhatsafe^{j,\ell-1}(z)| + \max_{j} \pos(-\Delhatsafe^{j,\ell-1}(z)) + \pos(\Delhat^{\ell-1}(z)) \right )  + 8(\cd + \ce)\epsilon_\ell \le \Delhatsafe^{i,\ell}(z).
\end{align*}
The following claim allows us to obtain a sufficient condition to guarantee $z \in \cY_\ell$. 
\begin{claim}\label{claim:Yell_condition}
On the event $\cEsafe \cap \cEragest$,
\begin{align*}
& \min_j |\Delhatsafe^{j,\ell-1}(z)| + \max_{j} \pos(-\Delhatsafe^{j,\ell-1}(z)) + \pos(\Delhat^{\ell-1}(z)) \\
& \qquad \le 2(c_3 + 2 c_4) \epsilon_\ell + (1 + c_3 + 2 c_4) \pos(\Deltil^\ell(z)) + (1 + 2 c_4 ) \min_j | \Delsafe^j(z)| + (1 + c_3 + 2 c_4) \max_j \pos(-\Delsafe^j(z)).
\end{align*}
\end{claim}
\begin{proof}[Proof of \Cref{claim:Yell_condition}]
By \Cref{lem:safe_bai_gaps1} and \Cref{lem:safe_bai_gaps2}, we can bound
\begin{align*}
& \min_j |\Delhatsafe^{j,\ell-1}(z)| \le (1 + c_4) \min_j | \Delsafe^j(z)| +  2c_4 \epsilon_\ell + c_4 \pos(\Deltil^\ell(z)) + c_4 \max_j \pos(-\Delsafe^j(z)) \\
& \max_{j} \pos(-\Delhatsafe^{j,\ell-1}(z)) \le (1 + c_4) \max_j \pos(-\Delsafe^j(z)) + 2c_4 \epsilon_\ell + c_4 \pos(\Deltil^\ell(z))  + c_4 \min_j |\Delsafe^j(z)| \\
& \pos(\Delhat^{\ell-1}(z)) \le (1+c_3) \pos(\Deltil^{\ell}(z)) + 2 c_3 \epsilon_\ell + c_3 \max_j \pos(-\Delsafe^j(z)).
\end{align*}
The claim follows by summing these upper bounds.
\end{proof}
Thus, by \Cref{claim:Yell_condition}, we can bound
\begin{align*}
& 3 \cd \left ( \min_j |\Delhatsafe^{j,\ell-1}(z)| + \max_{j} \pos(-\Delhatsafe^{j,\ell-1}(z)) + \pos(\Delhat^{\ell-1}(z)) \right )  + 3(\cd + \ce)\epsilon_\ell \\
& \qquad \le 3( \cd +  \ce + 2 \cd c_3 + 4 \cd c_4) \epsilon_\ell + 3 \cd (1 + c_3 + 2 c_4) \pos(\Deltil^\ell(z)) \\
& \qquad + 3 \cd (1 + 2 c_4 ) \min_j | \Delsafe^j(z)| + 3 \cd (1 + c_3 + 2 c_4) \max_j \pos(-\Delsafe^j(z)).
\end{align*}
Furthermore, by \Cref{lem:safe_bai_gaps2},
\begin{align*}
\Delsafe^i(z) - c_4 \left ( \epsilon_\ell +  \pos(\Deltil^{\ell}(z)) + \min_j | \Delsafe^j(z)| + \max_j \pos(-\Delsafe^j(z)) \right ) \le \Delhatsafe^{i,\ell}(z)
\end{align*}
It follows that a sufficient condition for $z \in \cY_\ell$ is
\begin{align}\label{eq:suff_cond_Yell}
\begin{split}
& ( 3\cd +  3\ce + 6 \cd c_3 + 12 \cd c_4 + c_4) \left ( \epsilon_\ell +   \pos(\Deltil^\ell(z)) + \min_j | \Delsafe^j(z)| +   \max_j \pos(-\Delsafe^j(z)) \right ) \\
&\qquad \le \Delsafe^i(z), \quad \forall i \in [m].
\end{split}
\end{align}


If $y_\ell =  \argmax_{y \in \cZ \ : \ \epsilon_\ell \le \min_i \Delsafe^i(y)} y^\top \thetast$ is in $\cY_\ell$, then we are done. Assume then that $y_\ell \not\in \cY_\ell$. By construction, since $\Delsafe^i(y_\ell) > 0$ for all $i$, $\max_j \pos(-\Delsafe^j(z)) = 0$. Using that \eqref{eq:suff_cond_Yell} is a sufficient condition for inclusion in $\cY_\ell$, this implies that
\begin{align*}
\exists i \in [m] \quad \text{s.t.} \quad ( 3\cd +  3\ce + 6 \cd c_3 + 12 \cd c_4 + c_4) \left ( \epsilon_\ell +   \pos(\Deltil^\ell(y_\ell)) + \min_j | \Delsafe^j(y_\ell)|  \right ) > \Delsafe^i(y_\ell).
\end{align*}
which implies 
\begin{align}\label{eq:Deltil_Suff}
\exists i \in [m] \quad \text{s.t.} \quad ( 3\cd + 3\ce + 6 \cd c_3 + 12 \cd c_4 + c_4) \left ( \epsilon_\ell +   \pos(\Deltil^\ell(y_\ell)) + | \Delsafe^i(y_\ell)|  \right ) > \Delsafe^i(y_\ell).
\end{align}
By construction, though, $\Delsafe^i(y_\ell) \ge \epsilon_\ell$. If we assume that 
\begin{align}\label{eq:constants_condition4}
3\cd +  3\ce + 6 \cd c_3 + 12 \cd c_4 + c_4 \le 1/4,
\end{align}
then \eqref{eq:Deltil_Suff} can only hold if $ \pos(\Deltil^{\ell}(y_\ell)) > 0$. This implies that $\max_{y \in \cY_\ell} y^\top \thetast > y_\ell^\top \thetast$. Thus, in this case,
\begin{align*}
\Deltil^\ell(z) = \max_{y \in \cY_\ell} y^\top \thetast - z^\top \thetast > y_\ell^\top \thetast - z^\top \thetast = \Delta^{\epsilon_\ell}(z)
\end{align*}
which proves the result.
\end{proof}

\begin{lemma}\label{lem:Yend_properties}
On $\cEsafe \cap \cEragest$, for all $z \in \cYend$ we have
\begin{align*}
& \Delsafe^i(z) \ge - \cg \epsilon, \quad \forall i \in [m], \\
& \Delhatsafe^{i,\iotaeps}(z) \ge (3 \cd + 3 \ce - \cg) \epsilon, \quad \forall i \in [m].
\end{align*}
Furthermore, $\zst \in \cYend$. 
\end{lemma}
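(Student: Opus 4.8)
The plan is to derive all three conclusions from the defining inequality of $\cYend$ together with the estimation guarantees already proved on $\cEsafe\cap\cEragest$ (\Cref{lem:safe_event_bound,lem:safe_bai_gaps1,lem:safe_bai_gaps2} and \Cref{claim:Yell_condition}). Write the $\cYend$ condition as: every $z\in\cYend$ satisfies, for each $i\in[m]$,
\[
8\cd\,A^{\iotaeps}(z) + \big(8(\cd+\ce)-\cg\big)\epsilon \;\le\; \Delhatsafe^{i,\iotaeps}(z),
\qquad
A^{\ell}(z) := \min_j|\Delhatsafe^{j,\ell}(z)| + \max_j \pos(-\Delhatsafe^{j,\ell}(z)) + \pos(\Delhat^{\ell}(z)).
\]
The second bullet is then immediate: $A^{\iotaeps}(z)\ge 0$ and $\cd,\ce\ge0$, so $\Delhatsafe^{i,\iotaeps}(z)\ge(8(\cd+\ce)-\cg)\epsilon\ge(3\cd+3\ce-\cg)\epsilon$. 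Throughout I use $\epsilon/2\le\epsilon_{\iotaeps}\le\epsilon$, which follows from $\iotaeps=\lceil\log(2/(\min\{c_3,c_4\}\epsilon))\rceil$.

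For the first bullet, fix $z\in\cYend$ and $i$. \Cref{lem:safe_event_bound} at epoch $\iotaeps$ gives $\Delsafe^i(z)\ge\Delhatsafe^{i,\iotaeps}(z)-3\cd\,A^{\iotaeps-1}(z)-3(\cd+\ce)\epsilon_{\iotaeps}$; combining with the $\cYend$ inequality and $\epsilon_{\iotaeps}\le\epsilon$ yields $\Delsafe^i(z)\ge 8\cd\,A^{\iotaeps}(z)-3\cd\,A^{\iotaeps-1}(z)+(5(\cd+\ce)-\cg)\epsilon$. The crux is to show the telescoped term $8\cd\,A^{\iotaeps}(z)-3\cd\,A^{\iotaeps-1}(z)$ is bounded below by $-\cO(\epsilon)$ with a small enough constant. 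Since the two confidence quantities live at consecutive epochs and carry no monotonicity, I route the comparison through the instance-dependent quantity $D:=\min_j|\Delsafe^j(z)|+\max_j\pos(-\Delsafe^j(z))+\pos(\Deltil^{\iotaeps}(z))$: applying \Cref{lem:safe_bai_gaps2} (with the triangle inequality applied separately to $|\cdot|$, $\pos(-\cdot)$ and $\pos(\cdot)$) gives $A^{\iotaeps}(z)\ge(1-c_3-2c_4)D-(c_3+2c_4)\epsilon_{\iotaeps}$, while \Cref{claim:Yell_condition} (which already packages exactly the needed upper bound on the epoch-$(\iotaeps-1)$ confidence sum in terms of $\Deltil^{\iotaeps}(z)$ and the true gaps) gives $A^{\iotaeps-1}(z)\le(1+c_3+2c_4)D+2(c_3+2c_4)\epsilon_{\iotaeps}$. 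Substituting and collecting terms, one computes $8\cd\,A^{\iotaeps}(z)-3\cd\,A^{\iotaeps-1}(z)\ge\cd(5-11c_3-22c_4)D-14\cd(c_3+2c_4)\epsilon$, so $\Delsafe^i(z)\ge\cd(5-11c_3-22c_4)D+(5(\cd+\ce)-14\cd(c_3+2c_4)-\cg)\epsilon$. Imposing the mild constant constraints $5-11c_3-22c_4\ge0$ and $5(\cd+\ce)\ge 14\cd(c_3+2c_4)$ — both met by \Cref{lem:constants} — everything except $-\cg\epsilon$ is nonnegative, giving $\Delsafe^i(z)\ge-\cg\epsilon$.

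For $\zst\in\cYend$ I verify the defining inequality with $z=\zst$ directly. The two structural facts are: $\zst$ is safe, so $\max_j\pos(-\Delsafe^j(\zst))=0$; and $\zst$ is optimal among safe arms while $\cY_{\iotaeps}$ consists only of safe arms (\Cref{lem:safe_bai_gaps1}), so $\pos(\Deltil^{\iotaeps}(\zst))=0$. Hence $D^{\zst}$ collapses to $g:=\min_j\Delsafe^j(\zst)\ge0$. Using \Cref{lem:safe_bai_gaps2} to bound $A^{\iotaeps}(\zst)\le(1+c_3+2c_4)g+(c_3+2c_4)\epsilon_{\iotaeps}$, and bounding $\Delhatsafe^{i,\iotaeps}(\zst)$ from below via its definition, the event $\cEsafe$ (whence $\Delhatsafe^{i,\iotaeps}(\zst)\ge\Delsafe^i(\zst)-(\sqrt8-1)\|\zst\|_{A(\lambda_{\iotaeps})^{-1}}\sqrt{\tau_{\iotaeps}^{-1}\log(\ub)}$), and the Phase-1 design guarantee $\|\zst\|_{A(\lambda_{\iotaeps})^{-1}}\sqrt{\tau_{\iotaeps}^{-1}\log(\ub)}\le\cd\,\mathrm{reg}_{\iotaeps-1}(\zst)+(\cd+\ce)\epsilon_{\iotaeps}$ (which for $\zst$ is itself $\cO(\cd g+(\cd+\ce)\epsilon_{\iotaeps})$ by \Cref{lem:safe_bai_gaps2}), one gets $\Delhatsafe^{i,\iotaeps}(\zst)\ge g-(\sqrt8-1)\big(\cO(\cd)g+\cO(\cd+\ce)\epsilon_{\iotaeps}\big)$. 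Since the $g$-coefficient on the left of the $\cYend$ inequality ($\approx 8\cd(1+c_3+2c_4)$) is smaller than the one on the right ($\approx 1-(\sqrt8-1)\cO(\cd)$) for small constants, the $g$-terms cause no trouble; the residual $\epsilon$-terms plus the $+8(\cd+\ce)\epsilon$ on the left are exactly what the $-\cg\epsilon$ slack is sized to swallow, which is one more constant constraint covered by \Cref{lem:constants}.

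The main obstacle is the bookkeeping in the first bullet: the $\cYend$ membership condition is written with \emph{epoch-$\iotaeps$} confidence quantities ($A^{\iotaeps}$), whereas the sharp estimation bound \Cref{lem:safe_event_bound} naturally produces \emph{epoch-$(\iotaeps-1)$} quantities ($A^{\iotaeps-1}$), and there is no monotonicity relating the two. The whole argument therefore has to pass through the true gaps $\min_j|\Delsafe^j(z)|,\ \max_j\pos(-\Delsafe^j(z)),\ \pos(\Deltil^{\iotaeps}(z))$, and the delicate point is choosing the numerical constants so the $D$-coefficient in $8\cd\,A^{\iotaeps}-3\cd\,A^{\iotaeps-1}$ stays nonnegative while the leftover $\epsilon$-slack never exceeds $\cg\epsilon$; this is precisely what forces the relations collected in \Cref{lem:constants}, and the second bullet is exactly the statement that feeds the required $\csafe=3\cd+3\ce-\cg$ bound into the final $\ragest$ call.
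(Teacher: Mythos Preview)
Your treatment of the first two conclusions is sound, and in fact more careful than the paper's. The paper simply writes $\Delhatsafe^{i,\iotaeps}(z)\le\Delsafe^i(z)+3\cd\,A^{\iotaeps}(z)+3(\cd+\ce)\epsilon$ and subtracts the $\cYend$ inequality, glossing over the fact that \Cref{lem:safe_event_bound} only yields the epoch-$(\iotaeps{-}1)$ version of that bound. Your detour through the instance quantity $D=\min_j|\Delsafe^j(z)|+\max_j\pos(-\Delsafe^j(z))+\pos(\Deltil^{\iotaeps}(z))$ is more work but closes this honestly; the two extra constraints you need ($5-11c_3-22c_4\ge 0$ and $5(\cd+\ce)\ge 14\cd(c_3+2c_4)$) do hold for the constants in \Cref{lem:constants}.

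The gap is in your argument that $\zst\in\cYend$. You assert the leftover $\epsilon$-budget is ``one more constant constraint covered by \Cref{lem:constants}'', but with the $8\cd,\,8(\cd+\ce)$ coefficients you took from Algorithm~\ref{alg:safe_bai_comp_const} and the values $\cd=\ce=0.01$, $c_3=c_4=0.1$, $\cg=0.178$, the slack your bounds actually require is
\[
8(\cd+\ce)+8\cd(c_3+2c_4)+(\sqrt{8}-1)\bigl(2\cd(c_3+2c_4)+(\cd+\ce)\bigr)\approx 0.23,
\]
which exceeds $\cg$ (and \Cref{lem:constants} moreover enforces $\cg\le 0.2$). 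In particular your inequality fails when $g=\min_j\Delsafe^j(\zst)=0$. The paper's proof avoids this by working with the coefficients $3\cd,\,3(\cd+\ce)$---note that the proof explicitly recalls $\cYend$ with these, so the ``$8$'' in the algorithm display is a typo---and then the $\epsilon$-coefficient in the resulting sufficient condition is exactly $3\cd+3\ce+6\cd c_3+12\cd c_4+c_4-\cg$, which is constraint \eqref{eq:constants_condition2} and is set to zero in \Cref{lem:constants}. The paper's route is also structurally cleaner: it packages the upper bound on $A^{\iotaeps}(\zst)$ and the lower bound on $\Delhatsafe^{i,\iotaeps}(\zst)$ (both via \Cref{lem:safe_bai_gaps2}) into a single sufficient condition in the style of \eqref{eq:suff_cond_Yell}, then uses $\pos(\Deltil^{\iotaeps}(\zst))=0$, $\max_j\pos(-\Delsafe^j(\zst))=0$, and $g\le\Delsafe^i(\zst)$ to reduce to \eqref{eq:constants_condition2} and \eqref{eq:constants_condition3}.
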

\begin{proof}
Recall that 
\begin{align*}
\cYend = \{ z \in \cZ \ : \  & 3 \cd \left ( \min_j |\Delhatsafe^{j,\iotaeps}(z)| + \max_{j} \pos(-\Delhatsafe^{j,\iotaeps}(z)) + \pos(\Delhat^{\iotaeps}(z)) \right ) \\
	& + 3(\cd + \ce)\epsilon- \cg \epsilon \le \Delhatsafe^{i,\iotaeps}(z), \forall i \in [m] \} 
\end{align*}
On $\cEsafe$, we have
\begin{align*}
\Delhatsafe^{i,\iotaeps}(z) \le \Delsafe^i(z) + 3 \cd \left ( \min_j |\Delhatsafe^{j,\iotaeps}(z)| + \max_{j} \pos(-\Delhatsafe^{j,\iotaeps}(z)) + \pos(\Delhat^{\iotaeps}(z)) \right ) + 3(\cd + \ce)\epsilon
\end{align*}
so it follows that if $z \in \cYend$, then
\begin{align*}
-\cg \epsilon \le \Delsafe^i(x).
\end{align*}

To see that $\zst \in \cYend$, note that by definition of $\cYend$, using a calculation analogous to \eqref{eq:suff_cond_Yell}, a sufficient condition for $z \in \cYend$ is
\begin{align*}
& ( 3\cd +  3\ce + 6 \cd c_3 + 12 \cd c_4 + c_4 - \cg) \epsilon +  (3 \cd + 3 \cd c_3 + 6 \cd c_4 + c_4) \pos(\Deltil^{\iotaeps}(z)) \\
& \qquad +  (3 \cd + 6 \cd c_4 + c_4 ) \min_j | \Delsafe^j(z)| +  (3 \cd + 3 \cd c_3 + 6 \cd c_4 + c_4) \max_j \pos(-\Delsafe^j(z)) \\
&\qquad \le \Delsafe^i(z), \quad \forall i \in [m].
\end{align*}
By definition of $\zst$ and since, by \Cref{lem:safe_bai_gaps1}, all $z \in \cY_{\iotaeps}$ are safe, we have $\Delta^{\epsilon_{\iotaeps}}(\zst) \le 0$. Furthermore, by definition we also have $\Delsafe^j(\zst) \ge 0$ for all $j$, so $\pos(-\Delsafe^j(\zst)) = 0$. Thus, assuming that
\begin{align}\label{eq:constants_condition2}
3\cd +  3\ce + 6 \cd c_3 + 12 \cd c_4 + c_4 - \cg \le 0
\end{align}
a sufficient condition to guarantee $\zst \in \cYend$ is that
\begin{align*}
(8 \cd + 16 \cd c_4 + c_4 ) \min_j | \Delsafe^j(\zst)|  \le \Delsafe^i(\zst), \quad \forall i \in [m].
\end{align*}
However, as long as
\begin{align}\label{eq:constants_condition3}
3 \cd + 6 \cd c_4 + c_4 \le 1,
\end{align}
this is true, since by definition $\Delsafe^i(\zst) \ge 0$. 
\end{proof}

\subsection{Algorithm Correctness and Sample Complexity}
\begin{lemma}[Correctness]\label{lem:correctness}
On $\cEsafe \cap \cEragest$, we will have that
\begin{align*}
\zhat^\top \thetast \ge (\zst)^\top \thetast - \frac{c_3(1+\cg)}{1 - c_3} \epsilon, \quad \Delsafe^i(\zhat) \ge - \cg \epsilon, \forall i \in [m]. 
\end{align*}
\end{lemma}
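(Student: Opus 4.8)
The plan is to separate the two conclusions. The safety bound is immediate: by construction $\zhat = \argmin_{z \in \cYend} \Delhat^{\eend}(z) \in \cYend$, and \Cref{lem:Yend_properties} guarantees that, on $\cEsafe \cap \cEragest$, every $z \in \cYend$ satisfies $\Delsafe^i(z) \ge -\cg\epsilon$ for all $i \in [m]$; applying this to $z = \zhat$ gives the second inequality of the statement.

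For the optimality bound, the two structural facts I would use are both from \Cref{lem:Yend_properties}: (i) $\zst \in \cYend$, and (ii) every arm of $\cYend$ is at most mildly unsafe, which also ensures the terminal call $\ragest(\cYend,\cYend,\epsilon,\delta)$ is well-posed (its preconditions $\cYend \subseteq \cX$ and $\Delhatsafe(\cdot) \ge -\csafe\epsilon$ hold, the latter because the safety regularizer passed to $\ragest$, $\max_j \pos(-\Delhatsafe^{j,\iotaeps}(\cdot))$, is nonnegative). Write $y^\star := \argmax_{y \in \cYend} \thetast^\top y$, so that $\thetast^\top y^\star \ge \thetast^\top \zst$ by (i), and let $\Delta^{\eend}(\cdot)$ denote the optimality gaps of the arms of $\cYend$ relative to $y^\star$ that this call of $\ragest$ estimates. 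Then \Cref{thm:ragest}, equivalently \Cref{lem:ragest_gap_est}, applied on $\cEragest$, yields for all $z \in \cYend$
\[
|\Delhat^{\eend}(z) - \Delta^{\eend}(z)| \le \cf\big(\epsilon + \pos(\Delta^{\eend}(z))\big),
\]
the $\pos(-\Delhatsafe)$ term having dropped out since that regularizer is nonnegative on $\cYend$.

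From here the argument is a short self-referential manipulation. First bound $\Delhat^{\eend}(\zhat) \le \Delhat^{\eend}(y^\star) \le \Delta^{\eend}(y^\star) + \cf(\epsilon + \pos(\Delta^{\eend}(y^\star))) = \cf\epsilon$, using that $y^\star \in \cYend$, that $\zhat$ minimizes $\Delhat^{\eend}$ over $\cYend$, and that $\Delta^{\eend}(y^\star) = 0$. Next, since $\zhat \in \cYend$ we have $\Delta^{\eend}(\zhat) \ge 0$, so $\pos(\Delta^{\eend}(\zhat)) = \Delta^{\eend}(\zhat)$ and the accuracy bound rearranges to $(1-\cf)\Delta^{\eend}(\zhat) \le \Delhat^{\eend}(\zhat) + \cf\epsilon \le 2\cf\epsilon$. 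Hence $\Delta^{\eend}(\zhat) \le \tfrac{2\cf}{1-\cf}\epsilon$, and
\[
\thetast^\top \zhat = \thetast^\top y^\star - \Delta^{\eend}(\zhat) \ge \thetast^\top \zst - \tfrac{2\cf}{1-\cf}\,\epsilon .
\]
\textbf{Main obstacle.} The only remaining work is converting this crude constant into the precise $\tfrac{c_3(1+\cg)}{1-c_3}$ of the statement; this is pure bookkeeping with the constants fixed in \Cref{lem:constants} (where, e.g., $\tfrac{c_3(1+\cg)}{1-c_3}\le 0.2$ is recorded). One may either check directly that the chosen settings of $\cf,c_3,\cg$ give $\tfrac{2\cf}{1-\cf} \le \tfrac{c_3(1+\cg)}{1-c_3}$, or route the same estimate through the in-loop gap estimate $\Delhat^{\iotaeps}$ — whose \Cref{lem:safe_bai_gaps2} guarantee already carries the constant $c_3$ — together with the $\cg\epsilon$ slack built into the definition of $\cYend$, to land on the stated form. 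One should also note that $\cEragest$ is understood to include the success event of this terminal $\ragest$ call, which is what the extra $\delta$ in the $1-2\delta$ probability of \Cref{thm:main_complexity} accounts for.
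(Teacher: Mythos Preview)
Your argument is sound: the safety conclusion follows from \Cref{lem:Yend_properties} exactly as you say, and for the optimality bound you correctly invoke the $\ragest$ guarantee on the terminal call, dropping the safety-regularizer term because the quantity passed in is nonnegative. The route, however, differs from the paper's in a way that explains why you land on $\tfrac{2\cf}{1-\cf}$ rather than the stated $\tfrac{c_3(1+\cg)}{1-c_3}$.

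The paper proceeds slightly differently on two points. First, it uses the sharper observation $\Delhat^{\eend}(\zhat)\le 0$ rather than $\le \cf\epsilon$: the last inner iterate $\yhat$ of the terminal $\ragest$ call lies in $\cY=\cYend$ and satisfies $\Delhat^{\eend}(\yhat)=0$, so the argmin $\zhat$ has $\Delhat^{\eend}(\zhat)\le 0$. Second, instead of applying \Cref{lem:ragest_gap_est} with constant $\cf$ (and a vanishing safety term), the paper applies the analog of \Cref{lem:safe_bai_gaps2} at the terminal round, whose bound carries the constant $c_3$ together with the \emph{true} safety term $\max_j \pos(-\Delsafe^j(\zhat))$. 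By \Cref{lem:Yend_properties} this term is at most $\cg\epsilon$, so one obtains
\[
\Deltil^{\eend}(\zhat) \;\le\; c_3(1+\cg)\epsilon + c_3\,\Deltil^{\eend}(\zhat),
\]
and rearranging gives exactly $\tfrac{c_3(1+\cg)}{1-c_3}\epsilon$. In short, the $(1+\cg)$ factor in the statement arises structurally from the $\cg\epsilon$ safety slack of $\cYend$, not from a numerical comparison of constants. Your approach is a bit more direct and actually yields a tighter numerical bound (one can check $\tfrac{2\cf}{1-\cf}\le \tfrac{c_3(1+\cg)}{1-c_3}$ with the settings in \Cref{lem:constants}), but it leaves that inequality as a separate verification; the paper's route makes the stated constant appear without it.
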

\begin{proof}
We choose $\zhat$ to be any $z \in \cYend$ such that $\Delhat^{\eend}(z) = 0$. By \Cref{lem:Yend_properties}, we have that $\Delsafe^i(\zhat) \ge - \cg \epsilon$ for all $i \in [m]$. If $\Deltil^{\eend}(\zhat) \le 0$, we are done, since by \Cref{lem:Yend_properties}, $\zst \in \cYend$, so $\zhat^\top \thetast \ge (\zst)^\top \thetast$. Assume that $\Deltil^{\eend}(\zhat) > 0$. 
By \Cref{lem:safe_bai_gaps2}, we have that 
\begin{align*}
\Deltil^{\eend}(\zhat) \le c_3 \epsilon + c_3 \pos(\Deltil^{\eend}(\zhat)) + c_3 \max_j  \pos(-\Delsafe^j(\zhat)).
\end{align*}
By \Cref{lem:Yend_properties}, since $\zhat \in \cYend$, $\pos(-\Delsafe^j(\zhat)) \le  \cg \epsilon$ for all $j$, so we can bound
\begin{align*}
\Deltil^{\eend}(\zhat) \le c_3 (1 + \cg) \epsilon + c_3 \pos(\Deltil^{\eend}(\zhat)) = c_3 (1 + \cg) \epsilon + c_3 \Deltil^{\eend}(\zhat).
\end{align*}
We can rearrange this as
\begin{align*}
\Deltil^{\eend}(\zhat) \le \frac{c_3(1+\cg)}{1 - c_3} \epsilon
\end{align*}
which proves the result, since, by \Cref{lem:Yend_properties}, $\Delend(\zhat) = \max_{y \in \cYend} y^\top \thetast - \zhat^\top \thetast \ge (\zst)^\top \thetast - \zhat^\top \thetast$.
\end{proof}

\begin{lemma}\label{lem:complexity_safety}
On $\cEragest \cap \cEsafe$, the total complexity of \Cref{line:safety_gap_est} is bounded by
\begin{align*}
C \cdot \sum_{\ell=1}^{\iotaeps}\inf_{\lambda \in \simplex_\cX} \max_{z \in \cZ} \frac{ \| z \|_{A(\lambda)^{-1}}^2 \cdot \log(\ub)}{\left ( \min_j |\Delsafe^j(z)| + \max_j \pos(-\Delsafe^j(z))  + \pos(\Delta^{\epsilon_{\ell-1}}(z)) + \epsilon_\ell \right )^2} + 4 \iotaeps \log (\tfrac{4 m |\cZ| \iotaeps^2}{\delta})
\end{align*}
for an absolute constant $C$. 
\end{lemma}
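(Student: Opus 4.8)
The plan is to mimic the proof of \Cref{lem:ragest_complexity}. Write
\begin{align*}
\widehat{D}_\ell(z) := \min_j |\Delhatsafe^{j,\ell-1}(z)| + \max_j \pos(-\Delhatsafe^{j,\ell-1}(z)) + \pos(\Delhat^{\ell-1}(z)) + \epsilon_\ell
\end{align*}
for the empirical denominator appearing in the design on \Cref{line:safety_gap_est}, and
\begin{align*}
D_\ell(z) := \min_j |\Delsafe^j(z)| + \max_j \pos(-\Delsafe^j(z)) + \pos(\Delta^{\epsilon_{\ell-1}}(z)) + \epsilon_\ell
\end{align*}
for the target denominator. Exactly as in \Cref{lem:ragest_complexity}, whenever $\lambda$ satisfies $\tau \ge \|z\|_{A(\lambda)^{-1}}^2 \log(\ub) / (\cd \widehat{D}_\ell(z))^2$ for every $z \in \cZ$, the objective on \Cref{line:safety_gap_est} evaluated at $(\lambda,\tau)$ is nonpositive, hence feasible (i.e. $\le \ce\epsilon_\ell$); since $\tau_\ell$ is the smallest power of two that is both $\ge 4\log(\ub)$ and feasible, it overshoots the optimal feasible real $\tau$ by at most a factor of two, so
\begin{align*}
\tau_\ell \le 2 \inf_{\lambda \in \simplex_\cX} \max_{z \in \cZ} \frac{\|z\|_{A(\lambda)^{-1}}^2 \log(\ub)}{\cd^2\, \widehat{D}_\ell(z)^2} + 8 \log(\ub).
\end{align*}

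The crux is then to show that, on $\cEsafe \cap \cEragest$, $\widehat{D}_\ell(z) \ge c\, D_\ell(z)$ for an absolute constant $c > 0$. For $\ell \ge 2$ I would argue term by term, applying \Cref{lem:safe_bai_gaps2} at epoch $\ell-1$. Writing $E_4(z) := c_4(\epsilon_{\ell-1} + \pos(\Deltil^{\ell-1}(z)) + \min_j|\Delsafe^j(z)| + \max_j\pos(-\Delsafe^j(z)))$ and $E_3(z) := c_3(\epsilon_{\ell-1} + \pos(\Deltil^{\ell-1}(z)) + \max_j\pos(-\Delsafe^j(z)))$, \Cref{lem:safe_bai_gaps2} gives $|\Delhatsafe^{j,\ell-1}(z) - \Delsafe^j(z)| \le E_4(z)$ and $\Delhat^{\ell-1}(z) \ge \Deltil^{\ell-1}(z) - E_3(z)$; using $\pos(a-b) \ge \pos(a) - b$ for $b \ge 0$ and $\Deltil^{\ell-1}(z) \ge \Delta^{\epsilon_{\ell-1}}(z)$ from \Cref{lem:safebai_gap_lb}, this yields $\min_j|\Delhatsafe^{j,\ell-1}(z)| \ge \min_j|\Delsafe^j(z)| - E_4(z)$, $\max_j\pos(-\Delhatsafe^{j,\ell-1}(z)) \ge \max_j\pos(-\Delsafe^j(z)) - E_4(z)$, and $\pos(\Delhat^{\ell-1}(z)) \ge (1-c_3)\pos(\Delta^{\epsilon_{\ell-1}}(z)) - c_3\epsilon_{\ell-1} - c_3\max_j\pos(-\Delsafe^j(z))$. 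The only quantity still uncontrolled is $\pos(\Deltil^{\ell-1}(z))$, which sits inside $E_3$ and $E_4$; for this I would use the self-referential bound $\pos(\Deltil^{\ell-1}(z)) \le \pos(\Delhat^{\ell-1}(z)) + E_3(z)$ (again from \Cref{lem:safe_bai_gaps2}) and solve for it -- the same manoeuvre used in \Cref{lem:correctness} -- obtaining $\pos(\Deltil^{\ell-1}(z)) \le \tfrac{1}{1-c_3}(\pos(\Delhat^{\ell-1}(z)) + c_3\epsilon_{\ell-1} + c_3\max_j\pos(-\Delsafe^j(z)))$. Substituting into $E_4(z)$, the $\pos(\Delhat^{\ell-1}(z))$ it contributes to $-2E_4(z)$ is partly absorbed by the $+\pos(\Delhat^{\ell-1}(z))$ already in $\widehat{D}_\ell(z)$, leaving net coefficient $1 - \tfrac{2c_4}{1-c_3}$, which is positive once $2c_4 < 1 - c_3$. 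Collecting coefficients and using $\epsilon_{\ell-1} = 2\epsilon_\ell$, the cross terms telescope and I expect to land on
\begin{align*}
\widehat{D}_\ell(z) \ge (1 - 2c_4 - c_3)\Big(\min_j|\Delsafe^j(z)| + \max_j\pos(-\Delsafe^j(z)) + \pos(\Delta^{\epsilon_{\ell-1}}(z))\Big) + (1 - 4c_4 - 2c_3)\epsilon_\ell ,
\end{align*}
so that $\widehat{D}_\ell(z) \ge (1 - 4c_4 - 2c_3)\, D_\ell(z) =: c\, D_\ell(z)$, which is positive under the constants fixed in \Cref{lem:constants}.

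The base case $\ell = 1$ is handled directly: there $\Delhat^0 \equiv \Delhatsafe^{j,0} \equiv 0$, so $\widehat{D}_1(z) = \epsilon_1$, while $|\Delsafe^j(z)| = \cO(1)$, $\Delta^{\epsilon_0}(z) \le \Delta(z) = \cO(1)$ and $\epsilon_1 \ge 1$ together give $D_1(z) \lesssim \epsilon_1$. Plugging $\widehat{D}_\ell(z) \ge c\, D_\ell(z)$ into the bound on $\tau_\ell$ and summing over $\ell = 1,\dots,\iotaeps$ (bounding $\log(\ub) \le \log(\tfrac{4 m |\cZ| \iotaeps^2}{\delta})$ in each round) yields the stated bound with $C = 2/(\cd c)^2$. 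I expect the main obstacle to be the term-by-term lower bound $\widehat{D}_\ell(z) \ge c\, D_\ell(z)$: the bookkeeping of the relative-error terms, and in particular arranging that after subtracting them the coefficients of $\pos(\Delta^{\epsilon_{\ell-1}}(z))$ and of $\epsilon_\ell$ stay strictly positive, is exactly what pins down the constraints among $\cd, \ce, c_3, c_4$ that \Cref{lem:constants} must satisfy -- with $2c_4 < 1 - c_3$ being the condition needed to absorb the self-referential $\pos(\Deltil^{\ell-1}(z))$ term.
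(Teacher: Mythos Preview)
Your proposal is correct and follows essentially the same route as the paper: bound $\tau_\ell$ by the design value as in \Cref{lem:ragest_complexity}, then show $\widehat D_\ell(z)\gtrsim D_\ell(z)$ via \Cref{lem:safe_bai_gaps2} and \Cref{lem:safebai_gap_lb}.

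The one place where you work harder than necessary is the handling of $\pos(\Deltil^{\ell-1}(z))$. You split its occurrences into a ``positive'' one (inside the lower bound for $\pos(\Delhat^{\ell-1}(z))$) and ``negative'' ones (inside $E_4$), and then invoke a self-referential bound to eliminate the latter. The paper avoids this detour: it simply keeps $\pos(\Deltil^{\ell-1}(z))$ as a single symbol while adding the three term-wise lower bounds, obtaining a net coefficient $1-c_3-2c_4>0$ on it (this is exactly ``the argument of \Cref{claim:Yell_condition} in the opposite direction''). Since that coefficient is positive, one application of \Cref{lem:safebai_gap_lb} replaces $\pos(\Deltil^{\ell-1}(z))$ by $\pos(\Delta^{\epsilon_{\ell-1}}(z))$ and finishes. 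The only constant constraint needed is \eqref{eq:constants_condition1}, i.e.\ $1-2c_3-4c_4\ge\cabs$; your condition $2c_4<1-c_3$ is implied by it, so no new constraints arise. Relatedly, the paper does not treat $\ell=1$ separately: because \Cref{lem:safe_bai_gaps2} already has its base case at $\ell=0$, the same inequality works uniformly for all $\ell\ge 1$. Finally, a minor sharpening: you may use the feasibility threshold $\ce\epsilon_\ell$ rather than $0$, putting $(\cd\widehat D_\ell(z)+\ce\epsilon_\ell)^2$ in the denominator and leading to the additional harmless requirement \eqref{eq:constants_condition7}.
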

\begin{proof}
Applying the same argument as in \Cref{claim:Yell_condition} but in the opposite direction, we have
\begin{align*}
&  \min_j |\Delhatsafe^{j,\ell-1}(z)| + \max_{j} \pos(-\Delhatsafe^{j,\ell-1}(z)) + \pos ( \Delhat^{\ell-1}(z) ) \\
& \ge -2(c_3 + 2 c_4) \epsilon_\ell + (1 - c_3 - 2 c_4) \pos(\Deltil^\ell(z)) + (1 - 2 c_4 ) \min_j | \Delsafe^j(z)| + (1 - c_3 - 2 c_4) \max_j \pos(-\Delsafe^j(z)).
\end{align*}
We assume that $c_3,c_4$, and $\cabs$ are chosen such that
\begin{align}\label{eq:constants_condition1}
1 - 2 c_3 - 4 c_4 \ge \cabs, 
\end{align}
which allows us to bound:
\begin{align*}
& \inf_{\lambda \in \simplex_\cX} \max_{z \in \cZ}  - \cd \left ( \min_j |\Delhatsafe^{j,\ell-1}(z)| + \max_{j} \pos(-\Delhatsafe^{j,\ell-1}(z)) + \pos(\Delhat^{\ell-1}(z)) + \epsilon_\ell \right ) + \sqrt{\frac{\| z \|_{A(\lambda)^{-1}}^2 \cdot \log ( \ub)}{\tau}}\\
& \le \inf_{\lambda \in \simplex_\cX} \max_{z \in \cZ} - \cd \cabs \left ( \min_j |\Delsafe^j(z)| + \max_j \pos(-\Delsafe^j(z))  + \pos(\Deltil^{\ell-1}(z)) + \epsilon_\ell \right ) + \sqrt{\frac{\| z \|_{A(\lambda)^{-1}}^2 \cdot \log ( \ub)}{\tau}} .
\end{align*}
It follows that if, for all $z \in \cZ$,
\begin{align*}
\tau \ge \frac{ \| z \|_{A(\lambda)^{-1}}^2}{ \left ( \cd \cabs \min_j |\Delsafe^j(z)| + \cd \cabs \max_j \pos(-\Delsafe^j(z))  + \cd \cabs \pos(\Deltil^{\ell-1}(z)) + (\cd \cabs + \ce) \epsilon_\ell \right )^2} \cdot \log \ub
\end{align*}
we will have that this is less than $\ce \epsilon_\ell$. Since we can take the best-case $\lambda \in \simplex_\cX$, and since $\tau_\ell$ is always within a factor of 2 of the optimal, it follows that
\begin{align*}
\tau_\ell & \le \inf_{\lambda \in \simplex_\cX} \max_{z \in \cZ} \frac{2 \| z \|_{A(\lambda)^{-1}}^2 \cdot \log \ub}{\left ( \cd \cabs \min_j |\Delsafe^j(z)| + \cd \cabs \max_j \pos(-\Delsafe^j(z))  + \cd \cabs \pos(\Deltil^{\ell-1}(z)) + (\cd \cabs + \ce) \epsilon_\ell \right )^2} \\
& \qquad + 4 \log \ub
\end{align*}	
The result then follows by summing over epochs and lower bounding $\Deltil^{\ell-1}(z)$ by $\Delta^{\epsilon_{\ell-1}}(z)$ using \Cref{lem:safebai_gap_lb}, and assuming that
\begin{align}\label{eq:constants_condition7}
\cd \cabs + \ce \ge \cabs.
\end{align}
\end{proof}

\begin{proof}[Proof of \Cref{thm:main_complexity}]
By \Cref{lem:cEsafe_high_prob} we have that $\cEsafe$ holds with probability at least $1-\delta$. By \Cref{lem:cEragest_high_prob}, we have that $\cEragest^\ell$ holds with probability at least $1-\delta/(4 \ell^2)$. It follows then that $\cEsafe \cup (\cup_\ell \cEragest^\ell)$ holds with probability at least
\begin{align*}
1 - \delta - \sum_\ell \frac{\delta}{4 \ell^2} \ge 1 - 2 \delta.
\end{align*}
Assume henceforth that $\cEsafe \cup (\cup_\ell \cEragest^\ell)$ holds.
\Cref{eq:main_thm_correctness} follows by \Cref{lem:correctness}. The total number of samples collected on \Cref{line:safety_gap_est}  can be bounded by \Cref{lem:complexity_safety}. It remains to bound the total number of samples used by $\ragest$. 

By \Cref{lem:ragest_complexity}, at epoch $\ell$ $\ragest$ will collect at most
\begin{align*}
C \lceil \log \frac{2}{\cf \epsilon_\ell} \rceil \cdot \inf_{\lambda \in \simplex_\cX} \max_{z \in \cZ} \frac{\| z - \yst^\ell \|_{A(\lambda)^{-1}}^2 \cdot \log(\tfrac{8 |\cZ|^2  \log^4(1/\epsilon)}{\delta} )}{( \max_j \pos(-\Delhatsafe^{j,\ell-1}(z)) + \pos ( \Deltil^{\ell}(z)) + \epsilon_\ell)^2} + 8 \lceil \log \tfrac{2}{\cf\epsilon} \rceil \log (\tfrac{4 |\cZ|^2 \lceil \log \tfrac{2}{\cf\epsilon} \rceil^2}{\delta})
\end{align*}
samples, where $\yst^\ell = \argmax_{y \in \cY_\ell} y^\top \thetast$. Assume that $\max_j \pos(-\Delsafe^j(z)) > 0$, then we can upper bound $\min_j | \Delsafe^j(z)| \le \max_j \pos(-\Delsafe^j(z))$, and by \Cref{lem:safe_bai_gaps2} we can lower bound
\begin{align*}
\max_j \pos(-\Delhatsafe^{j,\ell-1}(z)) & \ge (1-2c_4) \max_j \pos(-\Delsafe^{j}(z)) - c_4 \pos(\Deltil^{\ell-1}(z)) - c_4 \epsilon_{\ell-1}.
\end{align*}
Assume instead that $\max_j \pos(-\Delsafe^j(z)) = 0$. Then again by \Cref{lem:safe_bai_gaps2}:
\begin{align*}
\max_j \pos(-\Delhatsafe^{j,\ell-1}(z))  & \ge 0 = \max_j \pos(-\Delsafe^j(z)) \ge (1-2c_4) \max_j \pos(-\Delsafe^{j}(z)) - c_4 \pos(\Deltil^{\ell-1}(z)) - c_4 \epsilon_{\ell-1}.
\end{align*}
By \Cref{lem:safe_bai_gaps1}, it follows that
\begin{align*}
&  \max_j \pos(-\Delhatsafe^{j,\ell-1}(z)) + \pos ( \Deltil^{\ell}(z)) + \epsilon_\ell \\
& \ge (1-2c_4) \max_j \pos(-\Delsafe^{j}(z)) + (1-c_4) \pos(\Deltil^\ell(z)) + (1 - 2 c_4) \epsilon_\ell.
\end{align*}
By definition and \Cref{lem:safe_bai_gaps1} and \Cref{lem:Yend_properties} for all $\ell$ including $\ell = \eend$, we can bound $\pos ( -\Delsafe^{j}(\yst^\ell)) \le \cg \epsilon$. Furthermore, by definition $\pos(\Deltil^\ell(\yst^\ell)) = 0$. Putting all of this together, we have:
\begin{align*}
& \inf_{\lambda \in \simplex_\cX} \max_{z \in \cZ} \frac{\| z - \yst^\ell \|_{A(\lambda)^{-1}}^2 \cdot \log(\tfrac{8 |\cZ|^2 \log^4(1/\epsilon)}{\delta})}{( \max_j \pos( - \Delhatsafe^{j,\ell-1}(z)) + \pos ( \Deltil^{\ell}(z)) + \epsilon_\ell)^2} \\
& \le \inf_{\lambda \in \simplex_\cX} \max_{z \in \cZ} \frac{\| z - \yst^\ell \|_{A(\lambda)^{-1}}^2 \cdot \log(\tfrac{8 |\cZ|^2  \log^4(1/\epsilon)}{\delta})}{( (1-2c_4) \max_j \pos( - \Delsafe^{j}(z)) + (1-c_4)\pos ( \Deltil^{\ell}(z)) + (1-2c_4)\epsilon_\ell)^2} \\
& \le \inf_{\lambda \in \simplex_\cX} \max_{z \in \cZ} \frac{2\| z - \zst \|_{A(\lambda)^{-1}}^2 \cdot \log(\tfrac{8 |\cZ|^2  \log^4(1/\epsilon)}{\delta})}{( (1-2c_4) \max_j \pos( - \Delsafe^{j}(z)) + (1-c_4)\pos ( \Deltil^{\ell}(z)) + (1-2c_4)\epsilon_\ell)^2} \\
& \qquad + \inf_{\lambda \in \simplex_\cX}  \frac{2 \| \zst - \yst^\ell \|_{A(\lambda)^{-1}}^2 \cdot \log(\tfrac{8 |\cZ|^2  \log^4(1/\epsilon)}{\delta})}{( (1-2c_4) \max_j \pos( - \Delsafe^{j}(\yst^\ell)) + (1-c_4)\pos ( \Deltil^{\ell}(\yst^\ell)) + (1-2c_4-\cg)\epsilon_\ell)^2} \\
& \le \inf_{\lambda \in \simplex_\cX} \max_{z \in \cZ} \frac{4\| z - \zst \|_{A(\lambda)^{-1}}^2 \cdot \log(\tfrac{8 |\cZ|^2  \log^4(1/\epsilon)}{\delta})}{( (1-2c_4) \max_j \pos( - \Delsafe^{j}(z)) + (1-c_4)\pos ( \Deltil^{\ell}(z)) + (1-2c_4-\cg)\epsilon_\ell)^2}
\end{align*}
As long as
\begin{align}\label{eq:constants_condition5}
1 - 2 c_4 - \cg \ge \cabs, 
\end{align}
summing over the epochs and lower bounding $\Deltil^{\ell}(z)$ by $\Delta^{\epsilon_\ell}(z)$ via \Cref{lem:safebai_gap_lb} gives the result. Finally, the settings of the constants follows from \Cref{lem:constants}.
\end{proof}

\subsection{Proofs of Corollaries to \Cref{thm:main_complexity_nice}}\label{sec:complexity_corollaries}

\begin{proof}[Proof of \Cref{cor:bai}]
If $m = 1$, $\mu_{*,1} = 0$, and $\gamma = 1$, then we have $\Delsafe(z) = 1$ for each $z$, and $\Delta^{\epstil}(z) = \Delta(z)$ for $\epsilon \le 1$. The result follows directly from this and some algebra.
\end{proof}

\begin{proof}[Proof of \Cref{cor:worst_case_upper}]
We can trivially upper bound the complexity given in \Cref{thm:main_complexity_nice} by
\begin{align*}
& C \cdot \inf_{\lambda \in \simplex_\cX} \max_{z \in \cZ} \frac{ \| z \|_{A(\lambda)^{-1}}^2 \cdot \log(\frac{m | \cZ |}{\delta})}{\epsilon^2} + C \cdot \inf_{\lambda \in \simplex_\cX} \max_{z \in \cZ} \frac{\| z - \zst \|_{A(\lambda)^{-1}}^2 \cdot \log(\tfrac{ |\cZ|}{\delta})}{\epsilon^2} + C_0 \\
& \le C \cdot \inf_{\lambda \in \simplex_\cX} \max_{z \in \cZ} \frac{ \| z \|_{A(\lambda)^{-1}}^2 \cdot \log(\frac{m | \cZ |}{\delta})}{\epsilon^2} + C_0.
\end{align*}
In the case when $\cX = \cZ$, we can bound $ \inf_{\lambda \in \simplex_\cX} \max_{z \in \cZ} \| z \|_{A(\lambda)^{-1}}^2 \le d$ by Kiefer-Wolfowitz \citep{lattimore2020bandit}, which proves the result. 
\end{proof}

\newcommand{\betatil}{\widetilde{\beta}}

\section{Computationally Efficient Optimization}

Throughout, we will let $\cR(z;\xi_1,\ldots,\xi_n)$ denote some generic weighted risk estimate of the form
\begin{align*}
\cR(z;\xi_1,\ldots,\xi_n) = \sum_{t=1}^T f_t(\xi_1,\ldots,\xi_n) \I \{ z(u_t) \neq v_t \}
\end{align*}
for some weights $ f_t(\xi_1,\ldots,\xi_n)$ and observations $(u_t,v_t)$. The exact setting of $\cR$ will change from line to line---we simply use it as a stand-in for an objective that a cost-sensitive-classification oracle can efficiently minimize. We will also use $f(\xi_1,\ldots,\xi_n)$ to refer to some generic function (the particular form of which is not important).

\begin{lem}\label{lem:class_dist_equiv}
Consider some $z,\ztil \in \simplex_\cH$. Denote
\begin{align*}
\rho_\lambda(h,h') = \Exp_{U \sim \nu} \left [\frac{\I \{ h(U) \neq h'(U) \}}{\lambda(U)/\nu(U)} \right ] = \| h - h' \|_{A(\lambda)^{-1}}^2
\end{align*}
and overload notation so that $z = \sum_{h \in \cH} z_h h$ denotes the feature vector for the mixed classifier $z$. Then,
\begin{align*}
\sum_{h,h' \in \cH} z_h \ztil_{h'} \rho_\lambda(h,h') = \Exp_{U \sim \nu} \left [ \frac{(z(U) - \ztil(U))^2}{\lambda(U)/\nu(U)} \right ] = \| z - \ztil \|_{A(\lambda)^{-1}}^2.
\end{align*}
\end{lem}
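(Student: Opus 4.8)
The statement is a chain of two equalities, and I would treat them in opposite order from how they are written. The rightmost one, $\Exp_{U\sim\nu}\!\big[(z(U)-\ztil(U))^2/(\lambda(U)/\nu(U))\big] = \|z-\ztil\|_{A(\lambda)^{-1}}^2$, is just the identity already invoked in the hypothesis for individual classifiers ($\rho_\lambda(h,h') = \|h-h'\|_{A(\lambda)^{-1}}^2$), now read for the real-valued feature vectors $z=\sum_h z_h h$ and $\ztil=\sum_{h'}\ztil_{h'}h'$: in the classification setting $A(\lambda) = \sum_{x\in\cX}\lambda_x e_x e_x^\top$ and $\nu = \mathrm{Unif}(\cX)$, so for any vector $v$ the weighted expectation $\Exp_{U\sim\nu}[v(U)^2/(\lambda(U)/\nu(U))]$ is by definition $\|v\|_{A(\lambda)^{-1}}^2$. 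I would dispatch this in a line and concentrate on the first equality.

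For the first equality the plan is to reduce to a pointwise statement and then exploit that predictions are $\{0,1\}$-valued. Substituting $\rho_\lambda(h,h') = \Exp_{U\sim\nu}[\I\{h(U)\neq h'(U)\}/(\lambda(U)/\nu(U))]$ and using linearity of the finite sum over $h,h'$ against the expectation, it suffices to prove, for each fixed $U\in\cX$,
\[
\sum_{h,h'\in\cH} z_h\,\ztil_{h'}\,\I\{h(U)\neq h'(U)\} \;=\; \big(z(U)-\ztil(U)\big)^2 .
\]
I would prove this by first writing $\I\{h(U)\neq h'(U)\} = (h(U)-h'(U))^2$, valid since $h(U),h'(U)\in\{0,1\}$; expanding the square and pushing the two sums inside, the cross term becomes $-2\big(\sum_h z_h h(U)\big)\big(\sum_{h'}\ztil_{h'}h'(U)\big) = -2\,z(U)\ztil(U)$ after using $\sum_h z_h = \sum_{h'}\ztil_{h'} = 1$, while the two diagonal terms become $\sum_h z_h h(U)^2 = z(U)$ and $\sum_{h'}\ztil_{h'}h'(U)^2 = \ztil(U)$ using $h(U)^2 = h(U)$. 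It then remains to identify $z(U) + \ztil(U) - 2\,z(U)\ztil(U)$ with $(z(U)-\ztil(U))^2$, which I would do by once more invoking binary-valuedness — now of the aggregated predictions — to replace $z(U)$ by $z(U)^2$ and $\ztil(U)$ by $\ztil(U)^2$. Applying the weighted expectation $\Exp_{U\sim\nu}[\,\cdot\,/(\lambda(U)/\nu(U))]$ to both sides of the pointwise identity and combining with the previous paragraph closes the chain.

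The one genuinely delicate point is that last move: $z(U) + \ztil(U) - 2z(U)\ztil(U)$ equals $(z(U)-\ztil(U))^2$ exactly when $z(U),\ztil(U)\in\{0,1\}$ for every $U$, i.e. when $z$ and $\ztil$ act as deterministic classifiers. This is precisely the regime in which the lemma is needed downstream — the designs on Line~\ref{line:eff_Designs} of Algorithm~\ref{alg:active_cc} are computed over the trained classifiers $h_i$, which are $\{0,1\}$-valued — so I would state and prove the pointwise identity under that hypothesis and note that this is what the algorithm invokes. Everything else is routine bookkeeping with the simplex normalizations $\sum_h z_h = \sum_{h'}\ztil_{h'} = 1$, which are exactly what make the diagonal and cross terms assemble into a perfect square.
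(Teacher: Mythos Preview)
Your approach matches the paper's exactly: both rewrite $\I\{h(U)\neq h'(U)\}$ as $(h(U)-h'(U))^2$, expand, use the simplex normalization to obtain $z(U)+\ztil(U)-2z(U)\ztil(U)$, and then identify this with $(z(U)-\ztil(U))^2$; the second equality is handled in both by unpacking the feature-vector convention $[z]_u=\nu(u)z(u)$.

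You are in fact more careful than the paper on the last step. The paper simply asserts $z(U)+\ztil(U)-2z(U)\ztil(U)=(z(U)-\ztil(U))^2$ without comment, whereas you correctly note this requires $z(U),\ztil(U)\in\{0,1\}$. Your diagnosis is accurate: for a genuinely fractional mixture the first equality fails (take $z=\ztil$ a nontrivial convex combination; the left side is $\sum_{h,h'}z_h z_{h'}\rho_\lambda(h,h')>0$ while $(z(U)-\ztil(U))^2=0$). So the paper's own proof carries the same gap but does not flag it. Your proposed restriction to binary-valued aggregates is the right qualification of the statement; one minor caveat is that the paper invokes the lemma in its computational-efficiency appendix on elements $y\in\simplex_\cH$, not only on the deterministic $h_i$ of Algorithm~\ref{alg:active_cc}, so the issue you have spotted is a genuine wrinkle in the paper rather than a deficiency of your argument.
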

\begin{proof}
Note that
\begin{align*}
\rho_\lambda(h,h') = \Exp_{U \sim \nu} \left [\frac{\I \{ h(U) \neq h'(U) \}}{\lambda(U)/\nu(U)} \right ] = \Exp_{U \sim \nu} \left [\frac{| h(U) - h'(U) |}{\lambda(U)/\nu(U)} \right ] = \Exp_{U \sim \nu} \left [\frac{( h(U) - h'(U) )^2}{\lambda(U)/\nu(U)} \right ]
\end{align*}
where the final equality holds because $|h(U) - h'(U)|$ is always either 0 or 1. Thus,
\begin{align}
\sum_{h,h' \in \cH} z_h \ztil_{h'} \rho_\lambda(h,h') & = \sum_{h,h' \in \cH} z_h \ztil_{h'}  \Exp_{U \sim \nu} \left [\frac{( h(U) - h'(U) )^2}{\lambda(U)/\nu(U)} \right ] \nonumber \\
& =  \Exp_{U \sim \nu} \left [\frac{\sum_{h,h' \in \cH} z_h \ztil_{h'} ( h(U) - h'(U) )^2}{\lambda(U)/\nu(U)} \right ] \nonumber \\
& =  \Exp_{U \sim \nu} \left [\frac{\sum_{h,h' \in \cH} z_h \ztil_{h'}(h(U) + h'(U) - 2 h(U) h'(U))}{\lambda(U)/\nu(U)} \right ] \label{eq:mixed_class_equiv}.
\end{align}
However, 
\begin{align*}
\sum_{h,h' \in \cH} z_h \ztil_{h'} h(U) = \sum_{h \in \cH} z_h h(U) = z(U), \quad \sum_{h,h' \in \cH} z_h \ztil_{h'} h'(U) = \ztil(U)
\end{align*}
and
\begin{align*}
\sum_{h,h' \in \cH} z_h \ztil_{h'} h(U) h'(U) & = ( \sum_{h \in \cH} z_h h(U)) ( \sum_{h' \in \cH} \ztil_{h'} h'(U)) = z(U) \ztil(U).
\end{align*}
Thus,
\begin{align*}
\eqref{eq:mixed_class_equiv} = \Exp_{U \sim \nu} \left [\frac{(z(U) - \ztil(U))^2}{\lambda(U)/\nu(U)} \right ]
\end{align*}
which proves the first equality. To prove the second, recall that $[h]_u = \nu(u) h(u)$, so $[z]_u = \sum_{h \in \cH} z_h [h]_u =  \nu(u) z(u)$. It follows that,
\begin{align*}
\| z - \ztil \|_{A(\lambda)^{-1}}^2 & = \sum_{u} \frac{\nu(u)^2}{\lambda(u)} (z(u) - \ztil(u))^2 =  \Exp_{U \sim \nu} \left [\frac{(z(U) - \ztil(U))^2}{\lambda(U)/\nu(U)} \right ]
\end{align*}
which proves the second equality. 
\end{proof}

\subsection{Computational Efficiency of $\ragest$}
$\ragest$ requires solving the optimization
\begin{align}\label{eq:ragest_opt}
\inf_{\lambda \in \simplex_\cX} \max_{z \in \simplex_\cH} \min_{\alpha \in \cA} - \ca ( \pos(- \Delhatsafe(z)) + \pos( \Delhat^{\ell-1}(z)) + \epsilon_\ell) + \alpha \| z - \yhat_{\ell-1} \|_{A(\lambda)^{-1}}^2 + \frac{\log(2|\cZ|^2 |\cA| \ell^2/\delta)}{\alpha \tau}   .
\end{align}
Here we take $\tau$ to be fixed, and recall that 
\begin{align*}
	\yhat_\ell & \leftarrow \argmin_{y \in \cY} \min_{\alpha \in \cA} \Rtilgam_\ell(y) - \Rtilgam_\ell(\yhat_{\ell-1}) + 2\alpha \| y - \yhat_{\ell-1} \|_{A(\lambda_\ell)^{-1}}^2 + \frac{2 \log(2 |\cZ|^2 |\cA| \ell^2/\delta)}{\alpha \tau_\ell} \\
	\Delhat^\ell(y) & \leftarrow \min_{\alpha \in \cA} \Rtilgam_\ell(y) - \Rtilgam_\ell(\yhat_{\ell}) + \alpha \| y - \yhat_{\ell} \|_{A(\lambda_\ell)^{-1}}^2 + \frac{\log(2|\cZ|^2 |\cA|\ell^2/ \delta)}{\alpha \tau_\ell} .
\end{align*}
Furthermore, $\cY$ will be a set of the form
\begin{align*}
\bigcup_{k = 1}^{\ell'} \cY_k = \bigcup_{k = 1}^{\ell'} \left \{ z \in \cZ \ : \ c ( \epsilon_k + \pos(\Delhat^{k-1}(z)) +  \max_{j \in [n]} \pos(-\Delhatsafe^{j,k-1}(z)) +  \min_{j \in [n]} | \Delhatsafe^{j,k-1}(h)|) \le \Delhatsafe^{i,k}(h),  \forall i \in [n] \right \} 
\end{align*}
Recall also that
\begin{align*}
 \| h - h' \|_{A(\lambda)^{-1}}^2 = \Exp_{U \sim \nu} \left [ \frac{\I \{ h(U) \neq h'(U) \}}{(9\lambda(U)/10 + 1/10d)/\nu(U)}  \right ] = \sum_{U \in \cX} \frac{\nu(U)^2}{9\lambda(U)/10 + 1/10d} \I \{ h(U) \neq h'(U) \} 
\end{align*}
and
\begin{align*}
\Rtilgam_\ell(h) = \frac{1}{\tau_\ell} \sum_{t=1}^{\tau_\ell} \frac{1}{w_t + \alpha} \I \{ h(u_t) \neq v_t \} .
\end{align*}
For $z \in \simplex_\cH$, we denote $\Rtilgam_\ell(z) = \sum_{h \in \cH} z_h \Rtilgam_\ell(h)$ and $\cR(z;\alpha) = \sum_{h \in \cH} z_h \cR(h;\alpha)$. Finally, we assume that $\Delhatsafe(z) = \min_{\alpha \in \cA} \cR(z; \alpha) + f(\alpha)$.


\subsubsection{Solving for $\yhat_\ell$}
Using \Cref{lem:class_dist_equiv}, we can write the optimization for $\yhat_\ell$ as
\begin{align*}
& \min_{k \in [\ell']} \min_{y \in \cY_k} \min_{\alpha \in \cA} \frac{1}{\tau_\ell} \sum_{t=1}^{\tau_\ell} \frac{1}{w_t + \alpha} \sum_{h \in \cH} y_h \I \{ h(u_t) \neq v_t \}  + \alpha  \sum_{h, h' \in \cH} y_h \yhat_{\ell-1,h'}  \sum_{U \in \cX} \frac{\nu(U)^2}{9\lambda_\ell(U)/10 + 1/10d} \I \{ h(U) \neq h'(U) \} \\
& \qquad - \Rtilgam_\ell(\yhat_{\ell-1})  + \frac{\log(2 |\cZ|^2 |\cA| \ell^2/\delta)}{\alpha \tau_\ell} 
\end{align*}
We can rewrite 
\begin{align*}
\sum_{h, h' \in \cH} y_h \yhat_{\ell-1,h'}  \sum_{U \in \cX} \frac{\nu(U)^2}{9\lambda_\ell(U)/10 + 1/10d} \I \{ h(U) \neq h'(U) \} = \sum_{h \in \cH} y_h \sum_{i=1}^{\| \yhat_{\ell-1} \|_0 | \cX|} w_i \I \{ h(u_i) \neq v_i \}
\end{align*}
for some weights $w_i$. It follows that if $\| \yhat_{\ell-1} \|_0$ is polynomial in problem parameters then the optimization for $\yhat_\ell$ can be written as
\begin{align*}
\min_{k \in [\ell']} \min_{y \in \cY_k} \min_{\alpha \in \cA} \cR(y; \alpha) + f(\alpha)
\end{align*}
for $\cR(y;\alpha)$ a CSC loss over only polynomially many points (as well as linear in $y$ and convex in $\alpha$), and $f(\alpha)$ convex in $\alpha$. Note also that, for any $y$, we can upper bound $\cR(y;\alpha) \le \cO( \frac{1}{\alpha} + d \alpha)$. 
Here $\cY_k$ a set of the form
\begin{align*}
\left \{ z \in \simplex_\cH \ : \  \sum_{h \in \cH} z_h c \Big ( \epsilon_k + \pos(\Delhat^{k-1}(h)) +  \max_{j \in [n]} \pos(-\Delhatsafe^{j,k-1}(h)) +  \min_{j \in [n]} | \Delhatsafe^{j,k-1}(h)| \Big ) \le \sum_{h \in \cH} z_h \Delhatsafe^{i,k}(h),  \forall i \in [n] \right \} 
\end{align*}
$\yhat_\ell$ will be the element in $\cY_k$ minimizing the, for the $k$ achieving the minimum.
The dual of this problem has the form 
\begin{align*}
\min_{k \in [\ell']} & \min_{z \in \simplex_\cH} \min_{\alpha \in \cA} \max_{\mu_i \ge 0, i \in [n]}  \cR(z; \alpha) + f(\alpha) \\
& + \sum_{i=1}^n \mu_i \left ( \sum_{h \in \cH} z_h c \Big ( \epsilon_k + \pos(\Delhat^{k-1}(h)) +  \max_{j \in [n]} \pos(-\Delhatsafe^{j,k-1}(h)) +  \min_{j \in [n]} | \Delhatsafe^{j,k-1}(h)| \Big ) - \sum_{h \in \cH} z_h \Delhatsafe^{i,k}(h) \right ).
\end{align*}
Note that we can swap the min over $\alpha$ and $z$ without issue. Furthermore, for a fixed $\mu$, the objective is linear in $z$, and for a fixed $z$, the objective is linear in $\mu$. By the minimax theorem, we can then swap the min and max to obtain the equivalent optimization:
\begin{align*}
\min_{k \in [\ell']} & \min_{\alpha \in \cA} \max_{\mu_i \ge 0, i \in [n]} \min_{z \in \simplex_\cH}    \cR(z; \alpha) + f(\alpha) \\
& + \sum_{i=1}^n \mu_i \left ( \sum_{h \in \cH} z_h c \Big ( \epsilon_k + \pos(\Delhat^{k-1}(h)) +  \max_{j \in [n]} \pos(-\Delhatsafe^{j,k-1}(h)) +  \min_{j \in [n]} | \Delhatsafe^{j,k-1}(h)| \Big ) - \sum_{h \in \cH} z_h \Delhatsafe^{i,k}(h) \right ).
\end{align*}
We can simply enumerate over $k$ and $\alpha$, as there are a finite number of each of these constraints. For a fixed $k$ and $\alpha$, to solve the inner maxmin problem, we can apply the approach proposed in \cite{agarwal2018reductions}. In particular, we alternate between running the exponential gradient algorithm for the $\mu$ player, and computing the best-response for the $z$ player. The update to the $\mu$ player is trivial, as the problem is simply linear in $\mu$ (in practice, as in \cite{agarwal2018reductions}, we will also upper bound the domain of $\mu_i$ by some value $B$, to ensure this is finite). 

Computing the best-response for the $z$ player (with $\mu$ fixed) is slightly trickier. Ignoring all other parameters, which are all currently fixed, the minimization over $z$ can be written as
\begin{align*}
& \min_{z \in \simplex_\cH}  \sum_{h \in \cH} z_h \sum_t a_t \I \{ h(u_t) \neq o_t \}  \\
& + \sum_{i=1}^n \mu_i \left ( \sum_{h \in \cH} z_h c \Big ( \epsilon_k + \pos(\Delhat^{k-1}(h)) +  \max_{j \in [n]} \pos(-\Delhatsafe^{j,k-1}(h)) +  \min_{j \in [n]} | \Delhatsafe^{j,k-1}(h)| \Big ) - \sum_{h \in \cH} z_h \Delhatsafe^{i,k}(h) \right ). \\
& = \min_{z \in \simplex_\cH} \sum_{h \in \cH} z_h \bigg ( \sum_t a_t \I \{ h(u_t) \neq o_t \} \\
& \qquad +  \sum_{i \in [n]} c_i \Big ( \epsilon_k + \pos(\Delhat^{k-1}(h)) +  \max_{j \in [n]} \pos(-\Delhatsafe^{j,k-1}(h)) +  \min_{j \in [n]} | \Delhatsafe^{j,k-1}(h)| -  \Delhatsafe^{i,k}(h) \Big ) \bigg ) .
\end{align*}
Now note that $\max_{j \in [n]} \pos(-\Delhatsafe^{j,k-1}(z)) = \sup_{\lamtil \in \simplex_n} \sum_{j \in [n]}  \lamtil_j \pos(-\Delhatsafe^{j,k-1}(z))$, and similarly for $\min_{j \in [n]} | \Delhatsafe^{j,k-1}(z)|$. 
Using this, we can rewrite the above optimization as
\begin{align*}
& \min_{z \in \simplex_\cH}  \max_{\lamtil^{1h} \in \simplex_n, h \in \cH}  \min_{\lamtil^{2h} \in \simplex_n, h \in \cH} \sum_{h \in \cH} z_h \bigg ( \sum_t a_t \I \{ h(u_t) \neq o_t \} \\
& \qquad +  \sum_{i \in [n]} c_i \Big ( \epsilon_k + \pos( \Delhat^{k-1}(h)) +  \sum_{j \in [n]} \lamtil_j^{1h} \pos(-\Delhatsafe^{j,k-1}(h)) +  \sum_{j \in [n]} \lamtil_j^{2h} | \Delhatsafe^{j,k-1}(h) | -  \Delhatsafe^{i,k}(h) \Big ) \bigg ). 
\end{align*}
We also have:
\begin{align*}
\pos(-\Delhatsafe^{j,k-1}(z)) = \max_{\beta \in [0,1]} -\beta \Delhatsafe^{j,k-1}(z) , \quad | \Delhatsafe^{j,k-1}(z)| = \max_{\beta \in [-1,1]} \beta \Delhatsafe^{j,k-1}(z) .
\end{align*}
So we can further simplify the above to:
\begin{align*}
&  \min_{z \in \simplex_\cH}  \max_{\lamtil^{1h} \in \simplex_n, h \in \cH}  \min_{\lamtil^{2h} \in \simplex_n, h \in \cH} \max_{\beta_1^h, \beta_{2}^{hj} \in [0,1], \beta_{3}^{hj} \in [-1,1], h \in \cH} \sum_{h \in \cH} z_h \bigg ( \sum_t a_t \I \{ h(u_t) \neq o_t \} \\
& \qquad +  \sum_{i \in [n]} c_i \Big ( \epsilon_k + \beta_{1}^{h} \Delhat^{k-1}(h) -  \sum_{j \in [n]} \lamtil_j^{1h} \beta_{2}^{hj} \Delhatsafe^{j,k-1}(h)  +  \sum_{j \in [n]} \lamtil_j^{2h} \beta_{3}^{hj} \Delhatsafe^{j,k-1}(h) -  \Delhatsafe^{i,k}(h) \Big ) \bigg ). 
\end{align*}
Note that the objective is linear in $\beta$ and $\lamtil^{2}$, and both have continuous, compact, convex constraint sets, so we can swap the min and max to get that the above is equivalent to 
\begin{align*}
&  \min_{z \in \simplex_\cH}  \max_{\lamtil^{1h} \in \simplex_n, h \in \cH}  \max_{\beta_1^h, \beta_{2}^{hj} \in [0,1], \beta_{3}^{hj} \in [-1,1], h \in \cH} \min_{\lamtil^{2h} \in \simplex_n, h \in \cH}  \sum_{h \in \cH} z_h \bigg ( \sum_t a_t \I \{ h(u_t) \neq o_t \} \\
& \qquad +  \sum_{i \in [n]} c_i \Big ( \epsilon_k + \beta_{1}^{h} \Delhat^{k-1}(h) -  \sum_{j \in [n]} \lamtil_j^{1h} \beta_{2}^{hj} \Delhatsafe^{j,k-1}(h)  +  \sum_{j \in [n]} \lamtil_j^{2h} \beta_{3}^{hj} \Delhatsafe^{j,k-1}(h) -  \Delhatsafe^{i,k}(h) \Big ) \bigg ). 
\end{align*}
We can write this in the form
\begin{align}\label{eq:fw_minimax_opt}
&  \min_{z \in \simplex_\cH}  \max_{\lamtil^{1}, \beta} g(z; \lamtil^1, \beta)
\end{align}
for 
\begin{align*}
& g(z; \lamtil^1, \beta) := \min_{\lamtil^{2h} \in \simplex_n, h \in \cH}  \sum_{h \in \cH} z_h \bigg ( \sum_t a_t \I \{ h(u_t) \neq o_t \} \\
& \qquad +  \sum_{i \in [n]} c_i \Big ( \epsilon_k + \beta_{1}^{h} \Delhat^{k-1}(h) -  \sum_{j \in [n]} \lamtil_j^{1h} \beta_{2}^{hj} \Delhatsafe^{j,k-1}(h)  +  \sum_{j \in [n]} \lamtil_j^{2h} \beta_{3}^{hj} \Delhatsafe^{j,k-1}(h) -  \Delhatsafe^{i,k}(h) \Big ) \bigg ). 
\end{align*}

To solve this, we will apply a version of Frank-Wolfe that handles adversarial losses to the outer player (see Section 4.2 of \cite{hazan2012projection}), and will play best response for the inner player. 

From the perspective of the outer player, at iteration $t$ of the algorithm given in \cite{hazan2012projection}, they must optimize the function
\begin{align*}
f_t(z) = g(z; \lamtil^1_t, \beta_t) = \sum_{h \in \cH} z_h c_h(\lamtil_t^1,\beta_t)
\end{align*}
for some $c_h(\lamtil_t^1,\beta_t)$. Note that this is $L = \max_h |c_h(\lamtil_t^1,\beta_t)|$ Lipschitz in the $\ell_1$-norm, and that we can bound this $L$ for all $t$ by something like $\cO(\frac{1}{\alpha} + d \alpha + n)$. The algorithm introduced in Section 4.2 of \cite{hazan2012projection} computes the standard FW update
\begin{align*}
\ztil_t = \argmin_{z \in \simplex_{\cH}} \nabla F_t(z_t)^\top z, \quad z_{t+1} = (1-t^{-1/4}) z_t + t^{-1/4} \ztil_t
\end{align*}
for
\begin{align*}
F_t(z) = \frac{1}{t} \sum_{\tau=1}^t \nabla f_\tau(z_\tau)^\top z + \sigma_t \| z - z_1 \|_2^2
\end{align*}
for $\sigma_t = (L/D) t^{-1/4}$ for $D = \max_{z_1,z_2 \in \simplex_{\cH}} \| z_1 - z_2 \|_1$ (note that in that work, the function seems to be Lipschitz in the $\ell_2$ norm while here we use $\ell_1$---this does not seem to change their result at all). It is shown in \cite{hazan2012projection} that running this procedure we obtain the bound, for any $z \in \simplex_{\cH}$,
\begin{align*}
\sum_{t=1}^T (f_t(z_t) - f_t(z)) \le 57 LD T^{3/4} .
\end{align*}
It follows that if we are able to compute $\ztil_t$ efficiently, and if the max player plays best response (and the best response can be computed efficiently), using analysis similar to that in \cite{agarwal2018reductions}, we can show that an approximate solution to \eqref{eq:fw_minimax_opt} will be found in a polynomial number of iterations.

\paragraph{Computing the Best Response for $\lamtil^1,\beta$.}
For the inner player, they must solve
\begin{align*}
\max_{\lamtil^{1}, \beta} g(z_t; \lamtil^1, \beta).
\end{align*}
Assume that $\| \ztil_t\|_0 \le m$ for each $t$, and that $\| z_1 \|_0 = 1$. Then $z_t$ will be $(mt+1)$-sparse, so the sum in $g(z_t; \lamtil^1, \beta)$ will contain at most $(mt+1)$ values. Note that the optimization over $\beta^h$ and $\lamtil^{1h}$ is completely independent, so to compute the best-response, we need to solve the following problem at most $(mt+1)$ times:
\begin{align*}
&  \max_{\lamtil^{1h} \in \simplex_n}  \max_{\beta_1^h, \beta_{2}^{hj} \in [0,1], \beta_{3}^{hj} \in [-1,1]} \min_{\lamtil^{2h} \in \simplex_n}     \sum_{i \in [n]} c_i \Big ( \beta_{1}^{h} \Delhat^{k-1}(h) -  \sum_{j \in [n]} \lamtil_j^{1h} \beta_{2}^{hj} \Delhatsafe^{j,k-1}(h)  +  \sum_{j \in [n]} \lamtil_j^{2h} \beta_{3}^{hj} \Delhatsafe^{j,k-1}(h)  \Big ) .
\end{align*}
The optimization over the first two terms is trivial and can be solved by enumerating. The third term now is a maxmin problem, however, this can also be solved trivially as it is equivalent to $\min_{j \in [n]} | \Delhatsafe^{j,k-1}(h)|$. Note that each of these gap terms is themself the solution to an optimization over $\alpha \in \cA$, but that can be solved easily for each (since there are at most polynomial of them), so they can be regarded as constants.

Thus, we conclude that the best response for $\lamtil^1,\beta$ can be computed efficiently, assuming that $m$ is polynomial in problem parameters. Note that the values of $\beta^h$ and $\lamtil^{1h}$ do not matter for $h \not\in \mathrm{support}(z_t)$ do not matter to compute the best response, so we can set them to the same value for all $h \not\in \mathrm{support}(z_t)$.

\paragraph{Computing $\ztil_t$.}
It remains to show that we can efficiently find a near-optimal $\ztil_t$ such that $\| \ztil_t \|_0 \le m$. The optimization for $\ztil_t$ will have the form
\begin{align*}
\ztil_t = \argmin_{z \in \simplex_\cH} \sum_{\tau=1}^t \nabla f_\tau(z_\tau)^\top z + 2 \sigma_t (z_t - z_1)^\top z
\end{align*}
for 
\begin{align*}
 [\nabla f_\tau(z_\tau)]_h  & =  c_h(\lamtil^1_{\tau},\beta_\tau) \\
& = \min_{\lamtil^{2h} \in \simplex_n} \sum_j a_j \I \{ h(u_j) \neq o_j \}   +  \sum_{i \in [n]} c_i \Big ( \epsilon_k + \beta_{1\tau}^{h} \Delhat^{k-1}(h) -  \sum_{j \in [n]} \lamtil_{j\tau}^{1h} \beta_{2\tau}^{hj} \Delhatsafe^{j,k-1}(h)  \\
& \qquad +  \sum_{j \in [n]} \lamtil_{j}^{2h} \beta_{3\tau}^{hj} \Delhatsafe^{j,k-1}(h) -  \Delhatsafe^{i,k}(h) \Big ) .
\end{align*}
Let $C_t \subseteq \cH$ denote the classifiers supported on $z_t$ and assume that $z_1$ is only supported on a single classifier $h_0$. Note from our discussion on computing the best-response for the $\lamtil^1$ and $\beta$ player, we have that $\beta^h$ and $\lamtil^{1h}$ are identical for all $h \not\in C_t$. We can therefore rewrite the above objective as (dropping the $\tau$ subscript and denoting, e.g. $\beta_{1}^h = \sum_{\tau=1}^t \beta_{1\tau}^h$):
\begin{align*}
& \min_{\lamtil^{2h} \in \simplex_n, h \in \cH} \sum_{h \in \cH \backslash C_t} z_h \bigg ( \sum_j a_j \I \{ h(u_j) \neq o_j \}   +  \sum_{i \in [n]} c_i \Big ( \epsilon_k + \beta_{1} \Delhat^{k-1}(h) -  \sum_{j \in [n]} \lamtil_{j}^{1} \beta_{2}^{j} \Delhatsafe^{j,k-1}(h)  \\
& \qquad +  \sum_{j \in [n]} \lamtil_{j}^{2h} \beta_{3}^{j} \Delhatsafe^{j,k-1}(h) -  \Delhatsafe^{i,k}(h) \Big ) \bigg ) \\
& \qquad+ \sum_{h \in C_t} \bigg ( \sum_j a_j \I \{ h(u_j) \neq o_j \}   +  \sum_{i \in [n]} c_i \Big ( \epsilon_k + \beta_{1\tau}^{h} \Delhat^{k-1}(h) -  \sum_{j \in [n]} \lamtil_{j\tau}^{1h} \beta_{2\tau}^{hj} \Delhatsafe^{j,k-1}(h)  \\
& \qquad +  \sum_{j \in [n]} \lamtil_{j}^{2h} \beta_{3\tau}^{hj} \Delhatsafe^{j,k-1}(h) -  \Delhatsafe^{i,k}(h) \Big ) + 2 \sigma_t z_t \bigg ) - 2 \sigma_t z_{h_0} .
\end{align*}

We will focus first on the sum over $\cH \backslash C_t$. Note that $\Delhat^{k-1}(h)$ and $\Delhatsafe^{j,k}(h)$ are both of the form
\begin{align*}
\min_{\alpha \in \cA} \sum_t \frac{1}{w_t + \alpha} \I \{ h(u_t) \neq o_t \} + \alpha \sum_t \wtil_t \I \{ h(u_t) \neq o_t \} + \frac{c}{\alpha} .
\end{align*}
Given this, we can rewrite the minimization over the first term as (where the $\alphatil$ correspond to the gaps that have negative coefficients, which is where the max comes from):
\begin{align*}
\min_{z \in \simplex_\cH}  \min_{\lamtil^{2h} \in \simplex_n, h \in \cH} \min_{\alpha^h \in \cA^k, h \in \cH} \max_{\alphatil^h \in \cA^k, h \in \cH} \sum_{h \in \cH \backslash C_t} z_h \bigg ( \cR(h;\alpha^h,\alphatil^h, \lamtil^{2h}) + f(\alpha^h, \lamtil^{2h}) + g(\alphatil^h, \lamtil^{2h}) \bigg )
\end{align*}
for $\cR$ convex in $\alpha$, and concave in $\alphatil$, $f$ convex in $\alpha$, and $g$ concave in $\alphatil$, and all functions are linear in $\lamtil^2$. Normally $\cA$ is a discrete set, but if we let $\cAtil$ be a continuous relaxation of it, we can rewrite the above as
\begin{align*}
\min_{z \in \simplex_\cH} \max_{\alphatil^h \in \cA^k, h \in \cH} \min_{\lamtil^{2h} \in \simplex_n, h \in \cH} \min_{\alpha^h \in \cA^k, h \in \cH}  \sum_{h \in \cH \backslash C_t} z_h \bigg ( \cR(h;\alpha^h,\alphatil^h, \lamtil^{2h}) + f(\alpha^h, \lamtil^{2h}) + g(\alphatil^h, \lamtil^{2h}) \bigg ) .
\end{align*}
To solve this we can again apply the FW algorithm of \cite{hazan2012projection} with the max player playing best-response. As before, as long as $\z_t$ (where $\z_t$ denotes the update for this inner optimization) is sparse, we can efficiently compute the best-response for the $\alphatil$ player, since we only need to compute it for $h \in \z_t$. The FW-style update will then have the form
\begin{align*}
& \min_{z \in \simplex_\cH} \min_{\lamtil^{2h} \in \simplex_n, h \in \cH} \min_{\alpha^h \in \cA^k, h \in \cH}  \sum_{h \in \cH \backslash C_t} z_h \bigg ( \cR(h;\alpha^h,\alphatil_t^h, \lamtil^{2h}) + f(\alpha^h, \lamtil^{2h}) + g(\alphatil_t^h, \lamtil^{2h}) \bigg )  \\
& =  \min_{\lamtil^{2h} \in \simplex_n, h \in \cH} \min_{\alpha^h \in \cA^k, h \in \cH}  \min_{h \in \cH \backslash C_t}  \cR(h;\alpha^h,\alphatil_t^h, \lamtil^{2h}) + f(\alpha^h, \lamtil^{2h}) + g(\alphatil_t^h, \lamtil^{2h})  
\end{align*}
where the equality follows since we can always swap min, and since there will always be an optimal solution supported on a single $h$. We can solve the inner min using a CSC oracle that is able to optimize over a set $\cH \backslash C_t$, and by enumerating $\lamtil^{2}$ and $\alpha$ (since we can always find an optimal solution supported on a single $h$, we can set $\lamtil^{2h},\alpha^h$ identical for all $h$ and will arrive at the same minimum). 

This will converge in polynomially many steps, and will produce some $\z_{t'}$ which is $m$-sparse (for $m$ polynomial in parameters). It follows that $\z_{t'}$ is the near-optimal value for $\ztil_t$ supported on $\cH \backslash C_t$. To pick a final value for $\ztil_t$, we can simply enumerate over the (polynomially many) $h \in C_t$, compute their loss values, and then pick the minimum out of those and the value achieved by $\z_{t'}$. This procedure will always return some $\ztil_t$ supported on at most polynomially many $h$, so $m$ can be chosen suitably to make the best-response of the max player efficient. 

Putting all of this together, we can efficiently solve for $\yhat_\ell$.

\subsubsection{Solving for $\lambda_\ell$}
We turn now to solving the optimization \eqref{eq:ragest_opt}. Using arguments similar to what we have already shown, we have that
\begin{align*}
\eqref{eq:ragest_opt} & = \inf_{\lambda \in \simplex_\cX} \max_{z \in \cZ} \min_{\alpha \in \cA, \alpha_2,\ldots,\alpha_p \in \cA} \max_{\beta_1,\ldots,\beta_m \in \cB} \cR(z;\alpha,\alpha_2,\ldots,\alpha_p,\beta_1,\ldots,\beta_m) \\
& \qquad \qquad + 2\alpha \sum_{U \in \cX} \frac{\nu(U)^2}{9\lambda(U)/10 + 1/10d} \I \{ z(U) \neq \zhat_{\ell-1}(U) \} + f(\alpha,\alpha_2,\ldots,\alpha_p,\beta_1,\ldots,\beta_m).
\end{align*}
As before, we can simply enumerate over all possible choices of $\alpha$ and $\beta$. For a fixed setting of $\alpha$ and $\beta$, to solving the $\inf$ over $\lambda$, we can apply Mirror Descent. In this case we choose the mirror map to be the negative entropy, which is strongly convex with respect to the $\ell_1$ norm. 

Given this, to solve this in a computationally efficient manner, all we need is that the objective is convex (which it is) and Lipschitz with respect to the $\ell_1$ norm. Let $g(\lambda)$ denote the objective of the above optimization. By the Mean Value Theorem,
\begin{align*}
| g(\lambda) - g(\lamtil) | = \nabla g( (1-c) \lambda + c \lamtil)^\top (\lambda - \lamtil)
\end{align*}
for some $c \in [0,1]$. So, for any $\lambda,\lamtil \in \simplex_\cX$, we can bound
\begin{align*}
| g(\lambda) - g(\lamtil) | \le \left ( \sup_{\lambda' \in \simplex_\cX} \| \nabla g(\lambda') \|_\infty \right ) \cdot \| \lambda - \lamtil \|_1.
\end{align*}
We have,
\begin{align*}
& \frac{d}{dt} \sum_{U \in \cX} \frac{\nu(U)^2}{9\lambda(U)/10 + 1/10d + 9t \lambda_0(U)/10} \I \{ z(U) \neq \zhat_{\ell-1}(U) \} |_{t=0} \\
&\qquad  = \sum_{U \in \cX} \frac{-\lambda_0(U) \nu(U)^2}{(9\lambda(U)/10 + 1/10d)^2}  \I \{ z(U) \neq \zhat_{\ell-1}(U) \}.
\end{align*}
It follows that 
\begin{align*}
\sup_{\lambda' \in \simplex_\cX} \| \nabla g(\lambda') \|_\infty \le 100 d^2
\end{align*}
so we can apply Mirror Descent to optimize the above with computational complexity scaling only polynomially in problem parameters.

\subsection{Computational Efficiency of \safebai}
The primary computational cost of \safebai is incurred by calling $\ragest$, and solving the optimization on Line \ref{line:safety_gap_est} of Algorithm~\ref{alg:beside}. We have already shown that $\ragest$ can be run in a computationally efficient manner. The optimization on Line \ref{line:safety_gap_est} has a form very similar to the optimization we solve in $\ragest$, so the same argument and solution approach (applying Mirror Descent) allows us to compute the optimal distribution, $\lambda_\ell$, here as well.

\section{Experimental details and additional results}\label{sec:exp_details}
\subsection{Experimental details}
All code was written in Python and run on a Intel Xeon 6226R CPU with 64 cores.

Algorithm~\ref{alg:beside_elim} is the precise implementation of BESIDE using elimination. It largely resemble to Algorithm~\ref{alg:beside}, with the difference that it explicitly eliminates arms.

\begin{algorithm}[h]
\caption{Best Safe Arm Identification with Elimination}
\begin{algorithmic}[1]
\State \textbf{input:} tolerance $\epsilon$, confidence $\delta$
\State $\iotaeps \leftarrow \lceil \log(\frac{1}{\epsilon}) \rceil$, $\Z^0_\text{active} \leftarrow \Z$, $\Z^0_\text{safe} \leftarrow \emptyset$
\For{$\ell = 1,2,\ldots,\iotaeps$}
	\State $\epsilon_\ell \leftarrow  2^{-\ell}$
	\State Compute allocation \XYsafe on $\Z^{\ell-1}_\text{active}$ and sample from it $\tau_\ell = \mc{O}(\text{\XYsafe}(\Z^{\ell-1}_\text{active})/\epsilon_\ell^2)$ times
	\State $\muhat^{\ell} \leftarrow \rips(\{ (x_t, s_{t,i} ) \}_{t=1}^{\tau_\ell},\cZ,\frac{\delta}{2\ell^2})$
	\State Set $\Delhatsafe^{\ell}(z) \leftarrow  \gamma - z^\top \muhat^{\ell}$ for all $z\in \Z^{\ell-1}_\text{active}$ and
	$$
	\widetilde{Z}^{\ell}_\text{active} = \{z\in\widetilde{Z}^{\ell-1}_\text{active} \; : \; \Delhatsafe^{\ell}(z) \in [-\epsilon_\ell, 2\epsilon_\ell]\} \quad \widetilde{Z}^{\ell}_\text{safe} = \{z\in\widetilde{Z}^{\ell-1}_\text{active} \; : \; \Delhatsafe^{\ell}(z) \geq 2\epsilon_\ell]\}
	$$
	\State $\Z^{\ell}_\text{active}, \Z^{\ell}_\text{safe} \leftarrow \textsc{Rage-elim}^\epsilon\xspace \Big (\widetilde{Z}^{\ell}_\text{active} \cup \widetilde{Z}^{\ell}_\text{safe} \cup \Z^{\ell-1}_\text{safe},\widetilde{Z}^{\ell}_\text{safe} \cup \Z^{\ell-1}_\text{safe}, \epsilon_\ell \Big )$
\EndFor
\State $\Z_\text{final}, \emptyset \leftarrow \textsc{Rage-elim}^\epsilon\xspace \Big (\Z^{\ell}_\text{active} \cup \Z^{\ell}_\text{safe}, \Z^{\ell}_\text{active} \cup \Z^{\ell}_\text{safe}, \epsilon_\ell \Big )$ 
\State \textbf{return} Any arm in $\Z_\text{final}$.
\end{algorithmic}
\label{alg:beside_elim}
\end{algorithm}

\begin{algorithm}[h]
\caption{$\textsc{Rage-elim}^\epsilon\xspace$}
\begin{algorithmic}[1]
\State \textbf{input:} active set $\cZ$, optimal set $\cY$, tolerance $\epsilon$
\State $\iotaeps \leftarrow \lceil \log(\frac{1}{\epsilon}) \rceil$, $\Z^0 \leftarrow \Z$, $\Y^0 \leftarrow \Y$
\For{$\ell = 1,2,\ldots,\iotaeps$}
	\State $\epsilon_\ell \leftarrow 2^{-\ell}$
	\State Compute allocation \XYdiff on $(\Z^{\ell-1}\cup \Y^{\ell-1}, \Y^{\ell-1})$ and sample from it $\tau_\ell = \mc{O}(\Z^{\ell-1}\cup \Y^{\ell-1}, \Y^{\ell-1})/\epsilon_\ell^2)$ times
	\State $\thetahat^{\ell} \leftarrow \rips(\{ (x_t, s_{t,i} ) \}_{t=1}^{\tau_\ell},\cZ,\frac{\delta}{2\ell^2})$
	\State Set $\Delhat^{\ell}(z) \leftarrow \max_{y\in\Y^{\ell-1}} y^\top \thetahat^{\ell} - z^\top \thetahat^{\ell}$ for all $z\in \Z\cup\Y$ and
	$$
	\Z^{\ell} = \{z\in\Z^{\ell-1} \; : \; \Delhat^{\ell}(z) \leq \epsilon_\ell\} \quad \Y^{\ell} = \{y\in\Y^{\ell-1} \; : \; \Delhat^{\ell}(y) \leq \epsilon_\ell]\}
	$$
\EndFor
\State \textbf{return} $\Z^{\ell}, \Y^{\ell}$
\end{algorithmic}
\label{alg:ragest_elim}
\end{algorithm}

\subsection{Additional results}

We evaluate Algorithm~\ref{alg:active_cc} and the passive baseline on two other datasets. Recall that the passive baseline selects points uniformly at randoms from the pool of examples $\X$ and then retrains the model using the same Constrained Empirical Risk Minimization oracle (\texttt{CERM}).

\paragraph{Half circle dataset.}
We consider a two-dimensional half circle dataset, visualized on Figure~\ref{fig:half_cicle}. We report in Figures~\ref{fig:halfcircle_precision} and \ref{fig:halfcircle_recall} the precision and (respectively) the recall obtained when varying the number of labels given to each method. The confidence intervals are obtained over $25$ repetitions. We observe that Algorithm~\ref{alg:active_cc} allows us to provide a classifier satisfying a given recall or precision in far fewer queries. This is in line with the results of \cite{jain2020new} on One Dimensional Thresholds, where the sample complexity of the active strategy is $O(log(n))$ while the sample complexity of the passive strategy is at least of order $n$.

\begin{figure}[tbh]
\centering
\begin{minipage}{.3\textwidth}
  \centering
  \includegraphics[width=1\linewidth]{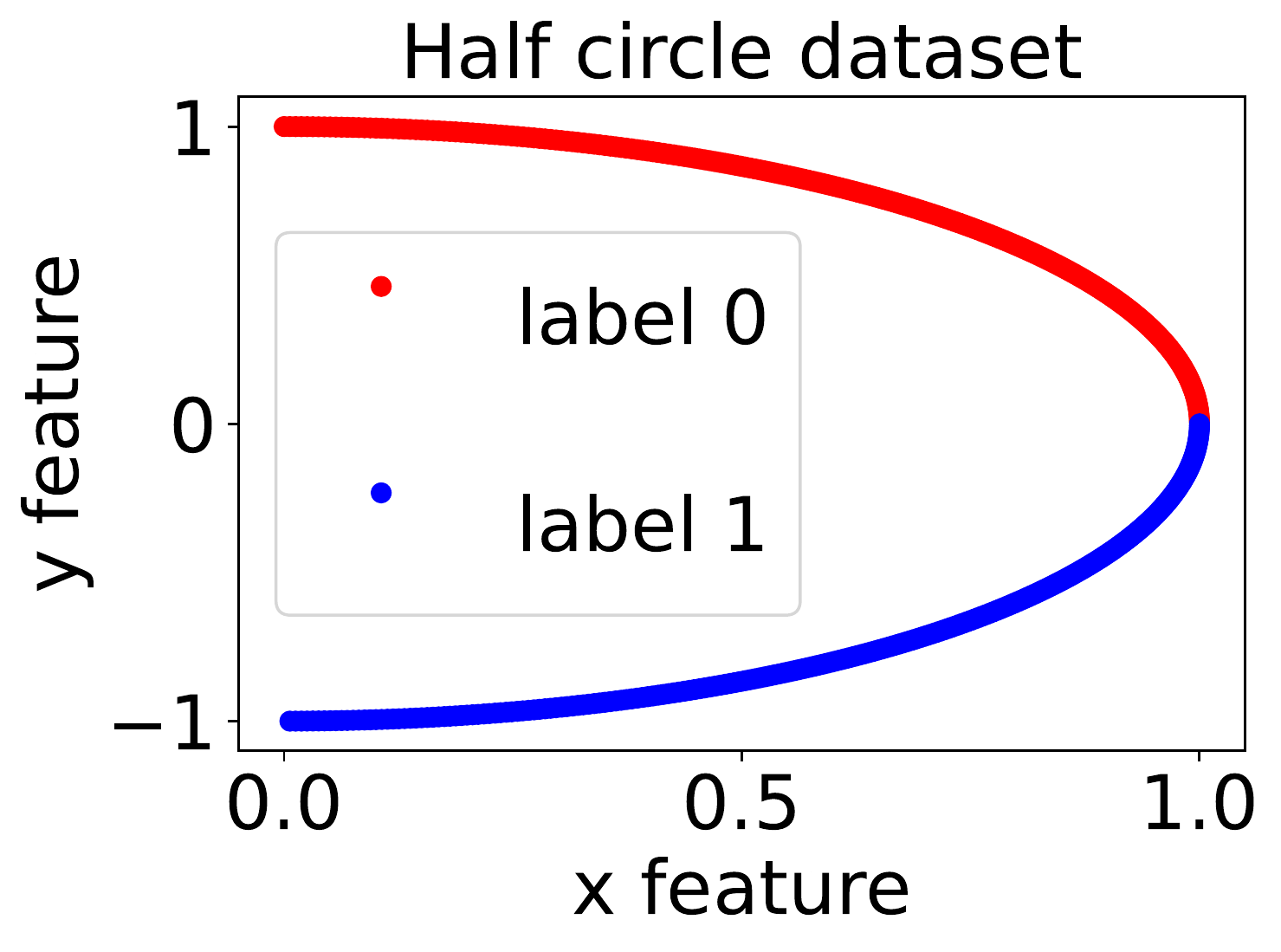}
  \vspace*{-4mm}
  \caption{Half circle dataset.}
  \label{fig:half_cicle}
\end{minipage}%
\hspace{5.00mm}
\begin{minipage}{.3\textwidth}
  \centering
  \includegraphics[width=1\linewidth]{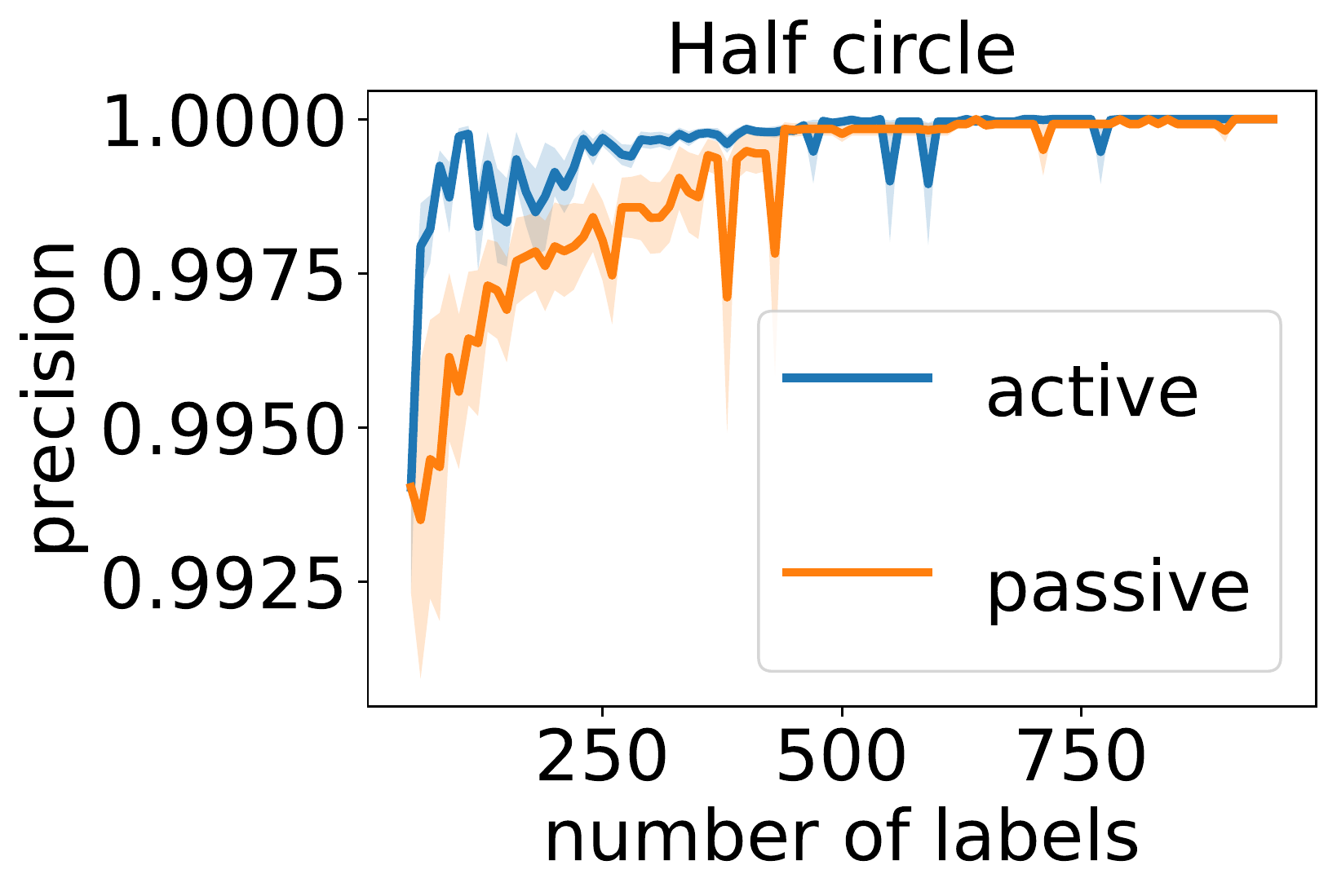}
  \vspace*{-4mm}
  \caption{Precision}
  \label{fig:halfcircle_precision}
\end{minipage}
\hspace{5.00mm}
\begin{minipage}{.3\textwidth}
  \centering
  \includegraphics[width=1\linewidth]{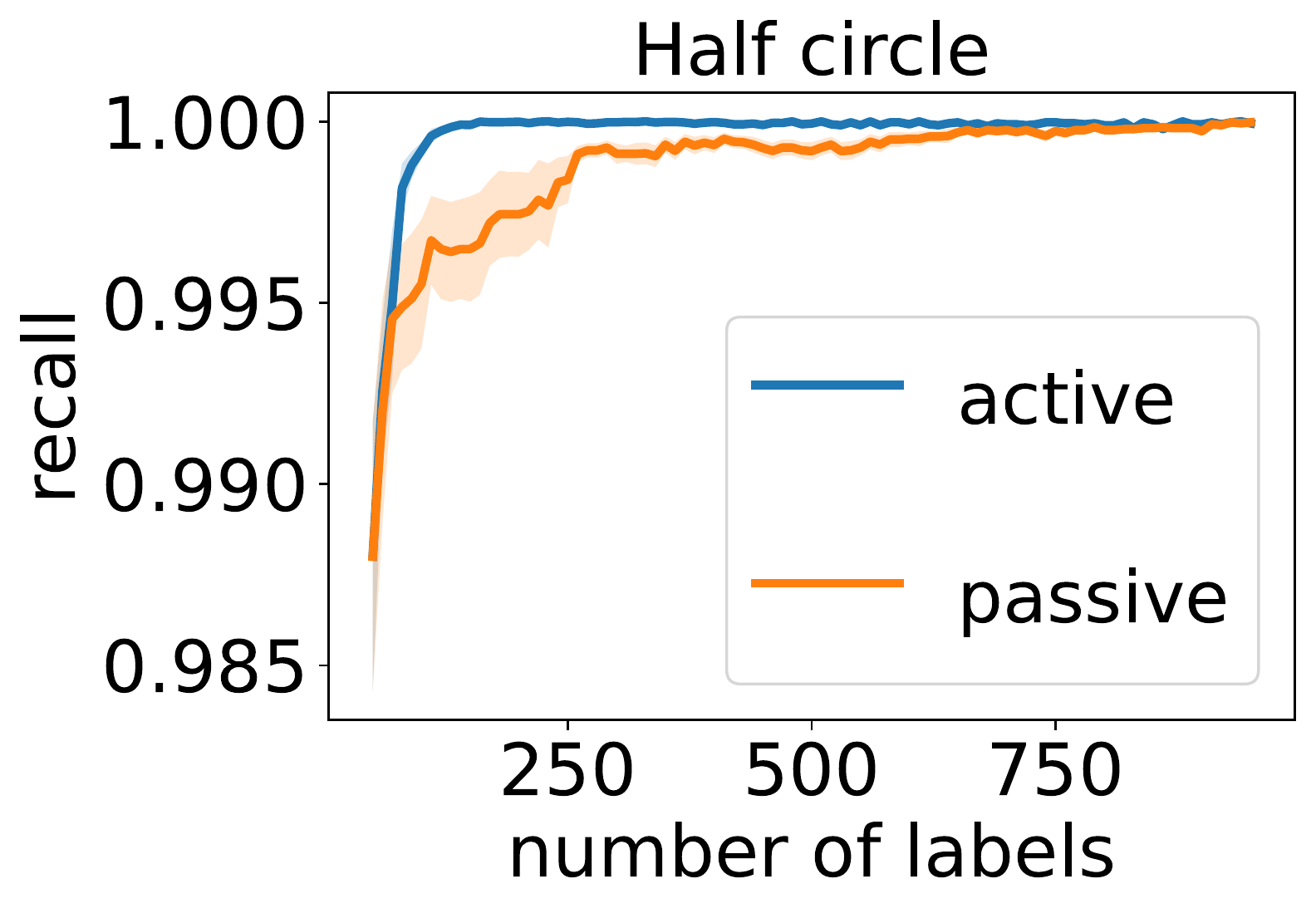}
  \vspace*{-4mm}
  \caption{Recall}
  \label{fig:halfcircle_recall}
\end{minipage}
\end{figure}

\end{document}